%% file: arxiv.tex
\documentclass{article} % For LaTeX2e
\usepackage{iclr2026_conference,times}

\input{math_commands.tex}

\usepackage{marvosym}
\usepackage{url}
\usepackage{multirow} 
\usepackage{pifont}
\usepackage{rotating}
\usepackage{amsmath}
\usepackage{amssymb}
\usepackage{amsthm}
\usepackage{booktabs}
\usepackage[table]{xcolor}
\usepackage{graphicx}
\usepackage{wrapfig}
\usepackage{nicefrac}
\usepackage{bbm}
\usepackage{subcaption}
\usepackage{algorithm}
\usepackage{algpseudocode}
\usepackage{pgf}
\usepackage{fp}
\usepackage{amssymb}
\usepackage[dvipsnames]{xcolor}
\usepackage{adjustbox}
\definecolor{cvprblue}{rgb}{0.21,0.49,0.74}
\usepackage[pagebackref,breaklinks,colorlinks,citecolor=cvprblue]{hyperref}
\usepackage{tcolorbox}
\usepackage{makecell}
\usepackage{enumitem}
\makeatletter
\renewcommand{\eqref}[1]{\textup{\tagform@{\ref{#1}}}}
\makeatother
\newtheorem{theorem}{Theorem}%[chapter]

\newtheorem{proposition}{Proposition}

\newcommand{\ours}{DyTrim}
\definecolor{mycolor}{RGB}{ .949,  .949,  .949}

\newcommand{\bigi}[1]{\mathcal{I}}
\newcommand{\ms}[2]{{#1}{\footnotesize$\,\pm${#2}}}
\newcommand{\msb}[2]{\textbf{{#1}}$\,\pm${\footnotesize{#2}}}

\newtcbox{\hlprimarytab}{on line, rounded corners, box align=base, colback=green!10,colframe=white,size=fbox,arc=3pt, before upper=\strut, top=-2pt, bottom=-4pt, left=-2pt, right=-2pt, boxrule=0pt}
\newtcbox{\hlsecondarytab}{on line, box align=base, colback=red!10,colframe=white,size=fbox,arc=3pt, before upper=\strut, top=-2pt, bottom=-4pt, left=-2pt, right=-2pt, boxrule=0pt}

\newcommand{\uashifted}{\raisebox{0.5\depth}{\tiny$\uparrow$}}

\newcommand{\ua}{\raisebox{0.2ex}{\hlprimarytab{$\uparrow$}}}
\newcommand{\da}{\raisebox{0.2ex}{\hlsecondarytab{$\downarrow$}}}

\newcommand{\uar}[1]{{\raisebox{0.2ex}{\hlprimarytab{\uashifted{#1}}}}}

\title{Learning Dynamics of Logits Debiasing for Long-Tailed Semi-Supervised Learning}

\author{Yue Cheng$^{1,2,}$\thanks{Equal contribution} \quad Jiajun Zhang$^{1,*}$\quad Xiaohui Gao$^{3}$ \quad Weiwei Xing$^{1,}$\textsuperscript{\Letter} \quad \textbf{Zhanxing Zhu$^{4,}$\textsuperscript{\Letter}}\\
$^1$Beijing Jiaotong University \quad $^2$AntGroup \\
$^3$Northwestern Polytechnical University \quad $^4$University of Southampton\\
\texttt{\{yuecheng,jiajunzhang\}@bjtu.edu.cn} \quad \texttt{gaitxh@foxmail.com} \\
\texttt{wwxing@bjtu.edu.cn} \quad \texttt{z.zhu@soton.ac.uk}
}

\iclrfinalcopy % Uncomment for camera-ready version, but NOT for submission.
\begin{document}

\maketitle

\begin{abstract}
Long-tailed distributions are prevalent in real-world semi-supervised learning (SSL), where pseudo-labels tend to favor majority classes, leading to degraded generalization. While many long-tailed semi-supervised learning (LTSSL) methods have been proposed, the mechanisms by which they implicitly debias logits remain poorly understood. In this work, we revisit LTSSL through the lens of learning dynamics and provide a theoretical characterization of logits debiasing. Specifically, we derive a step-wise decomposition of the logits updates, showing that predictions are dominated by class-imbalance bias that reliably reflects label priors. To expose this effect, we use the logits of a task-irrelevant baseline image as an indicator of accumulated bias and prove that they converge to the class prior. This provides a unified view where LTSSL remedies such as logit adjustment, reweighting, and resampling correspond to reshaping gradient dynamics. Based on this insight, we propose  \ours, a principle-based dynamic pruning framework that reallocates gradient budget through class-aware pruning on labeled data and confidence-based soft pruning on unlabeled data. We provide theoretical guarantees that \ours{} reduces class bias and improves generalization. Extensive experiments on standard LTSSL benchmarks show consistent gains across architectures and methods. Code available at: \url{https://jiajun0425.github.io/DyTrim}
\end{abstract}

\section{Introduction}
\label{sec:intro}
Semi-supervised learning (SSL), exemplified by FixMatch~\citep{sohn2020fixmatch} and ReMixMatch~\citep{berthelot2019remixmatch}, has been proven to demonstrate significant generalization advantages over supervised learning, particularly in deep neural networks~\citep{li2025safixmatch}. However, many existing SSL variants, \emph{e.g.} FlexMatch \citep{zhang2021flexmatch}, FreeMatch \citep{wang2023freematch} implicitly assume that both labeled and unlabeled data are drawn from a balanced class distribution, \emph{i.e.}, class imbalance. In practice, real-world datasets commonly exhibit a long-tailed label distribution, leading to \textit{biased pseudo-label} toward majority classes. This discrepancy poses significant challenges to the effectiveness of SSL algorithms on real-world datasets.

Recent studies on long-tailed semi-supervised learning (LTSSL) have emerged to mitigate the bias introduced by class imbalance in both labeled and unlabeled data. These methods range from distribution alignment~\citep{wei2021crest, kim2020distribution}, data rebalancing~\citep{Fan2022cossl, Lee2021abc}, logit adjustment variants~\citep{Wei2023ACR, zhou2024ccl}, to foundation model-based methods~\citep[\emph{e.g.}, LADaS;][]{Zheng2025ladas}. In particular, the approach employs a baseline image introduced by \citealp{Lee2024cdmad} as a simple yet effective tool for quantifying classifier bias, which has garnered significant attention in the community~\citep{xing2025lcgc, yi2025abm}. Despite these advancements, the underlying mechanisms of how class bias emerges and why existing approaches can mitigate it remain largely unexplored and poorly understood. That also prevents us from exploring a principle-based method to improve performance.

In this paper, we analyze the underlying mechanisms of class debiasing through the lens of learning dynamics in long-tailed semi-supervised learning (LTSSL), investigating how inputs, the classifier, and pseudo-labels interact and recursively shape one another during training. Specifically, we derive a stepwise decomposition of logit updates in SSL, showing that class imbalance dominates the predictions and prevents the model from leveraging inter-sample similarity, thereby impairing generalization. We further point out that in the learning dynamics of LTSSL, the logits of the baseline image serve as an indicator of the accumulated influence of the network’s bias. Building on this framework, we offer a unified view of existing debiasing methods, including logit adjustment (LA)~\citep{menon2021longtail}, reweighting~\citep{wang2017reweighting}, and resampling~\citep{Japkowicz2000resampling}, which can all be understood through the lens of learning dynamics.

As a side product of this analysis, we propose a pruning-based debiasing framework for long-tailed remedies, named \ours. For labeled data, we compute class-wise pruning ratios to rebalance samples. For unlabeled data, we apply a label-agnostic criterion that prunes low-confidence, inconsistent samples. Beyond empirical improvements, we provide theoretical guarantees demonstrating how our method alleviates class bias and improves generalization. Extensive experiments demonstrate that our method consistently improves LTSSL performance across standard benchmarks and various backbone architectures. 

\section{Preliminaries}
\label{sec:pre}
\textbf{Notions.} We consider a labeled dataset $\mathcal{X}=\{(x^n, y^n)\}_{n=1}^N$ with $N$ samples and an unlabeled dataset $\mathcal{U}=\{u^m\}_{m=1}^M$ with $M$ samples, where $x^n \in \mathbb{R}^d$ is the $n$-th labeled sample with label $y^n \in [C]=\{1,\ldots,C\}$, and $u^m \in \mathbb{R}^d$ is the $m$-th unlabeled sample. Let $N_c$ and $M_c$ denote the number of labeled and unlabeled samples in class $c$, such that $\sum_{c=1}^C N_c = N$ and $\sum_{c=1}^C M_c = M$. If classes are sorted by size, we have $N_1 \geq N_2 \geq \cdots \geq N_C$, and define the imbalance ratios as $\gamma_l = \nicefrac{N_1}{N_c} \geq 1 $ and $\gamma_u = \nicefrac{\max \{ M_i \}}{\min \{ M_i \}} \geq 1$, respectively. We denote the classifier by $f_\theta: \mathbb{R}^d \mapsto {1,\ldots,C}$ with parameters $\theta$, and its logits by $g_\theta(x) \in \mathbb{R}^C$, where $f_\theta(x) = \arg\max_c g_\theta(x)_c$ and $(\cdot)_c$ denotes the $c$-th component. For each iteration of training, we sample minibatches $\mathcal{MX}=\{(x^n_b,y^n_b):b\in(1, \ldots, B)\} \subset \mathcal{X}$ and $\mathcal{MU}=\{(u^m_b):b\in(1, \ldots, \mu B)\} \subset \mathcal{U}$ from the training set, where $B$ denotes the minibatch size and $\mu$ denotes the relative size of $\mathcal{MU}$ to $\mathcal{MX}$. For brevity, when clear from context we drop the superscript on $u^{m}_b$ ($x^m_b$) and simply write $u_b$ ($x_b$).

\textbf{Base SSL algorithms.} We use FixMatch~\citep{sohn2020fixmatch} as the base SSL algorithm, following other LTSSL studies. Specifically, FixMatch first predicts the class probability of a weakly augmented unlabeled data point $\alpha(u_b)$ as $q_b = \pi_\theta(y|\alpha(u_b))$ and then generates hard pseudo-label $\hat{q}_b = \argmax_c(q_{b,c})$, where $\pi_\theta(y|\cdot)=\texttt{Softmax}(g_{\theta}(\cdot))$. For consistency regularization, FixMatch uses a hard pseudo-label $\hat{q}_b$  only when $\max_c(q_{b,c}) \geq \tau$, where $\tau$ denotes a predefined confidence threshold, to improve the quality of the pseudo-labels used for training. We express the training losses of FixMatch $\mathcal{L}$ as:
\begin{equation}
\label{eq:fixmatch}
    \mathcal{L}(x_b, u_b, \hat{q}, \tau; \theta)=\mathcal{L}_{sup}(\alpha(x_b); \theta) + \mathcal{L}_{con}( \mathcal{A}(u_b), \hat{q}_b, \tau; \theta),
\end{equation}
where $x_b$ ($u_b$) denotes the $b$-th labeled (unlabeled) samples in a minibatch $\mathcal{MX}$ ($\mathcal{MU}$). $\mathcal{A}(u_b)$ denotes the strongly augmented of $u_b$. The losses and other SSL algorithms, \emph{i.e.} FlexMatch~\citep{zhang2021flexmatch} and FreeMatch~\citep{wang2023freematch}, are detailed in Appendix~\ref{app:fixmatch_loss} to \ref{app:freematch}.

\textbf{Learning dynamics and its per-step decomposition. } Inspired by~\citet{ren2024learning}, we study how a single gradient update changes the model’s confidence on an observation $x_o$. With $\pi_\theta(y \mid x)$ denoting the predicted class probability distribution, the learning dynamics become,
\begin{equation}
\Delta \theta \triangleq \theta ^{t+1} - \theta^t = -\eta \cdot \nabla \mathcal{L}(f_\theta(x_b), y_b);\quad \Delta \log \pi^{t} (y|x_o) \triangleq \log \pi_{\theta^{t+1}}(y|x_o) - \log \pi_{\theta^t}(y|x_o).
\end{equation}
where the update of $\theta$ during step $t \rightarrow t + 1$ is given by one gradient update on the sample pair $(x_b, y_b)$ with
learning rate $\eta$. $\mathcal{L}$ is the loss function, we use the cross-entropy loss $\mathbf{H}$ in our setting.

\begin{proposition}[Per-step decomposition of learning dynamics; ~\citealt{ren2024learning}]
\label{definition:per_step_learning_dynamics}
    Let $\pi = \texttt{Softmax}(\mathbf{z})$ with $\mathbf{z} = g_\theta(x)$. Then the one-step learning dynamics decompose as
\begin{equation}
\Delta \log \pi_{\theta}^t (y \mid x_o)
= - \eta \mathcal{T}^t(x_o) \mathcal{K}^t(x_o, x_b) \mathcal{G}^t(x_b, y_b) + \mathcal{O}\left(\eta^2 \|\nabla_\theta \mathbf{z}(x_b)\|_{\text{op}}^2\right),
\label{eq:influence_decomposition}
\end{equation}
where $\mathcal{T}^t(x_o)=\nabla_z \log \pi_{\theta^t}(x_o)=I-\mathbf{1}\pi^\top_{\theta^t}(x_o)$ only depends on the model’s current predicted probability, $\mathcal{K}^t(x_o,x_b)=(\nabla_\theta z(x_o)|_{\theta^t})( \nabla_\theta z(x_b)|_{\theta^t})^\top$ is the empirical neural tangent kernel (eNTK, \citealt{jacot2018neural}) of the model, the product of the model’s gradients with respect to $x_o$ and $x_b$. $\mathcal{G}^t(x_b,y_b)=\nabla_\mathbf{z} \mathcal{L}(x_b,y_b)|_{\mathbf{z}^t}$ is the loss gradient. $\|\cdot\|^2_{\text{op}}$ denotes the spectral norm, which bounds the second-order remainder term.
\end{proposition}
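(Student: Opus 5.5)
The plan is to compose two first-order Taylor expansions, one in parameter space and one in logit space, and to collect the second-order pieces into the stated remainder. Write $\mathbf{z}(x)=g_\theta(x)\in\mathbb{R}^C$ and treat $\log\pi_\theta(\cdot\mid x_o)$ as the vector $\log\texttt{Softmax}(\mathbf{z}(x_o))$.

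\textbf{Step 1: the parameter update.} By the chain rule, $\nabla_\theta \mathcal{L}(x_b,y_b) = (\nabla_\theta \mathbf{z}(x_b)|_{\theta^t})^\top \nabla_{\mathbf{z}}\mathcal{L}|_{\mathbf{z}^t}$. Hence
\begin{equation*}
\Delta\theta = -\eta\,(\nabla_\theta \mathbf{z}(x_b))^\top \mathcal{G}^t(x_b,y_b).
\end{equation*}

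\textbf{Step 2: the change in logits at $x_o$.} Expand $\mathbf{z}(x_o)$ to first order in $\theta$:
\begin{equation*}
\mathbf{z}^{t+1}(x_o)-\mathbf{z}^t(x_o) = \nabla_\theta \mathbf{z}(x_o)\,\Delta\theta + \mathcal{O}(\|\Delta\theta\|^2) = -\eta\,\mathcal{K}^t(x_o,x_b)\,\mathcal{G}^t(x_b,y_b) + \mathcal{O}(\|\Delta\theta\|^2).
\end{equation*}

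\textbf{Step 3: the change in log-probabilities.} Expand $\log\texttt{Softmax}$ to first order in $\mathbf{z}$. Its Jacobian is $\nabla_{\mathbf{z}}\log\pi = I-\mathbf{1}\pi^\top = \mathcal{T}^t(x_o)$. The derivation is direct: $\log\pi_i = z_i - \log\sum_j e^{z_j}$, so $\partial \log\pi_i/\partial z_k = \delta_{ik}-\pi_k$. Applying this Jacobian to the logit change from Step 2 gives the leading term $-\eta\,\mathcal{T}^t\mathcal{K}^t\mathcal{G}^t$. The remainder consists of the Hessian of log-softmax applied to $\|\Delta\mathbf{z}\|^2$, plus $\mathcal{T}^t$ applied to the Step 2 remainder.

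\textbf{Step 4: bounding the remainder.} I expect this to be the main obstacle. First, $\|\Delta\theta\|\le \eta\,\|\nabla_\theta\mathbf{z}(x_b)\|_{\text{op}}\,\|\mathcal{G}^t\|$. For cross-entropy, $\mathcal{G}^t=\pi(x_b)-e_{y_b}$, so $\|\mathcal{G}^t\|\le\sqrt2$ and this factor is a constant. Second, the Hessian of log-softmax has entries bounded by a constant, and $\|\mathcal{T}^t\|$ is bounded as well. So both second-order pieces are $\mathcal{O}(\eta^2\|\nabla_\theta\mathbf{z}(x_b)\|^2_{\text{op}})$, provided two quantities are treated as $\mathcal{O}(1)$ constants: the curvature of $\mathbf{z}$ in $\theta$ (as in~\citet{ren2024learning}), and $\|\nabla_\theta\mathbf{z}(x_o)\|_{\text{op}}$. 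This requires a local smoothness assumption, namely a bounded Hessian of $g_\theta$ near $\theta^t$. I would state that assumption explicitly and then absorb these factors into the $\mathcal{O}$.
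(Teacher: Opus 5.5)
Your proposal is correct and takes essentially the same route as the paper. The paper cites this result from \citet{ren2024learning} rather than proving it, but its proofs of the SSL and baseline-image versions do a single first-order Taylor expansion of $\theta\mapsto\log\pi_\theta(\cdot\mid x_o)$, apply the chain rule to obtain $-\eta\,\mathcal{T}^t\mathcal{K}^t\mathcal{G}^t$, and absorb everything else into $\mathcal{O}(\|\theta^{t+1}-\theta^t\|^2)$. Your two-stage expansion (parameters to logits, then logits to log-probabilities) gives the same leading term and makes explicit the boundedness assumptions the paper leaves implicit: the curvature of $g_\theta$, $\|\nabla_\theta\mathbf{z}(x_o)\|_{\text{op}}$, and $\|\mathcal{G}^t\|\le\sqrt{2}$.
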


\begin{figure}[htbp!]
    \centering
    \vspace{-10 pt}
    \includegraphics[width=0.99\linewidth]{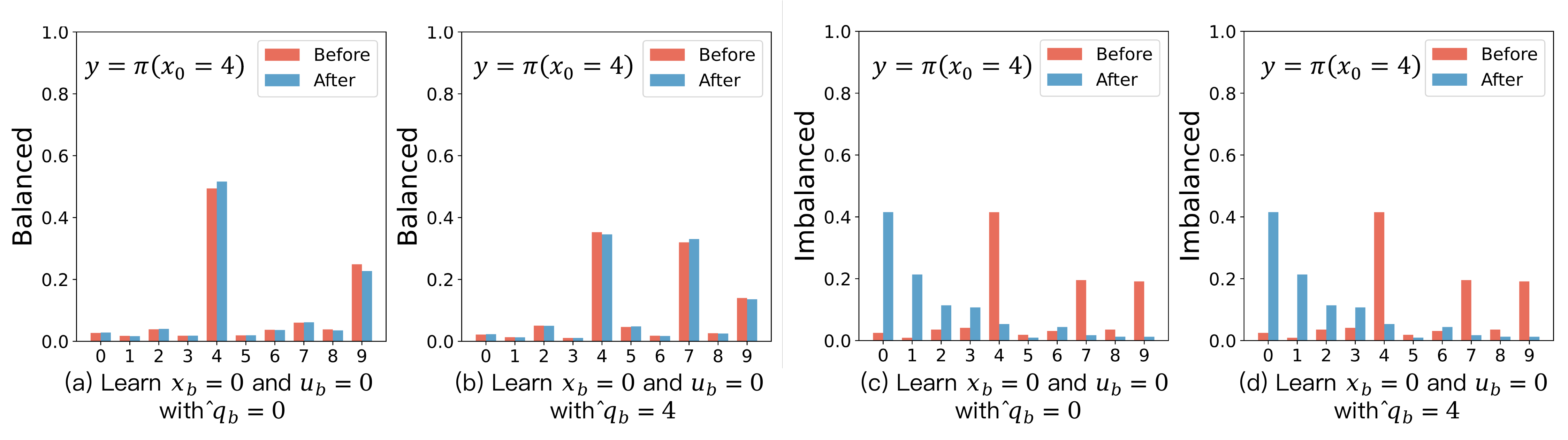}
    \caption{Accumulated influence in the MNIST experiment using a labeled sample $x_b=0$ and an unlabeled sample $u_b=\mathtt{0}$ for training, with $x_o=\mathtt{4}$ for testing. (a) and (b) shows results from the Balanced experiment (MNIST), (c) and (d) from the Imbalanced experiment (MNIST-LT). (a) and (c) show the influence with accurate pseudo-labels, (b) and (d) with inaccurate pseudo-labels. In (a) and (b), the cumulative influence of pseudo-label authenticity is evident, with the false pseudo-label affecting predictions for similar samples (\emph{e.g.}, probability of $\mathtt{9}$, $\mathtt{7}$ and $\mathtt{4}$). In (c) and (d), the class imbalance masks the influence of false pseudo-label authenticity due to class bias.}
    \label{fig:accumulated_influence}
    \vspace{-10 pt}
\end{figure}

This decomposition characterizes how each update at $(x_b,y_b)$ influences predictions at $x_o$, forming the basis for our SSL analysis under class imbalance.

\section{Learning dynamics of Long-Tailed Semi-Supervised Debiasing}
\label{sec:theory}

\subsection{Learning dynamics of semi-supervised learning}
In this section, we characterize the learning dynamics of the semi-supervised version of gradient descent (GD) for the FixMatch algorithm Eq.~\eqref{eq:fixmatch},
\begin{equation}
    \label{eq:ssl_dynamic}
    \begin{gathered}
        \Delta \theta \triangleq \theta ^{t+1} - \theta^t = -\eta \cdot \left(\nabla \mathcal{L}_{sup}(f_\theta(\alpha(x_b)), y_b)+\nabla \mathcal{L}_{con}(f_\theta(\alpha(u_b)), f_\theta(\mathcal{A}(u_b))\right); \\
        \Delta f(x_o) \triangleq f_{\theta^{t+1}}(x_o)-f_{\theta^{t}}(x_o).
    \end{gathered}
\end{equation}
where $x_o$ denotes the observation data point, the update of $\theta$ during step $t \rightarrow t+1$ is given by one gradient update on the labeled sample pair ($x_b, y_b$) and unlabeled sample $(u_b)$ with learning rate $\eta$. 
Previous work~\citep{ren2024learning} showed how a single gradient update influences model predictions in supervised learning. 
We now examine whether such characterization extends to the semi-supervised setting.  Since FixMatch~\citep{sohn2020fixmatch} update naturally consists of a supervised part $\mathcal{L}_{sup}$ and a consistency part $\mathcal{L}_{con}$, the gradient update can be decomposed accordingly. For an unlabeled sample $u_b$ with target $\hat q_b^t=\arg\max_c q_{b,c}^t$, where $q_b^t=\pi_{\theta^t}(\cdot\mid \alpha(u_b))$. The per-step learning dynamics of semi-supervised learning become
\begin{equation}
    \label{eq:ssl_ledy_deco}
    \Delta \log \pi^{t}(y|x_o) \triangleq  \Delta \log \pi_\theta^{t,\mathrm{sup}}(y\mid x_o;x_b) + \Delta \log \pi_\theta^{t,\mathrm{con}}(y\mid x_o;u_b)
\end{equation}
where $\Delta\pi_\theta^{t,\mathrm{sup}}$ denotes the influence caused by $x_b$ and $\Delta\pi_\theta^{t,\mathrm{con}}$ denotes the influence caused by $u_b$, respectively. Inspired by Definition~\ref{definition:per_step_learning_dynamics}, we now state the decomposition of the per-step influence in semi-supervised learning below:
\begin{proposition} 
\label{proposition:ssl}
For an labeled (unlabeled) sample $x_b$ ($u_b$) with target $y_b$ ($\hat q_b^t$). The one-step learning dynamics of SSL decompose as
\begin{equation}
\label{eq:per_deco_ssl}
\small{
\begin{gathered}
    \Delta \log \pi_\theta^{t,\mathrm{sup}}(y\mid x_o;x_b) = -\eta \mathcal{T}^t(x_o) \mathcal{K}^t(x_o, \alpha(x_b)) \mathcal{G}_{\mathrm{sup}}^t(\alpha(x_b), y_b) + \mathcal{O}\left(\eta^2 \|\nabla_\theta \mathbf{z}(\alpha(x_b))\|_{\text{op}}^2\right)\\
    \Delta \log \pi_\theta^{t,\mathrm{con}}(y\mid x_o;u_b) = -\eta \mathcal{T}^t(x_o) \mathcal{K}^t(x_o, \mathcal{A}(u_b)) \mathcal{G}_{\mathrm{con}}^t(\mathcal{A}(u_b),\hat q_b^t) + \mathcal{O}\left(\eta^2 \|\nabla_\theta \mathbf{z}(\mathcal{A}(u_b))\|_{\text{op}}^2\right)
\end{gathered}}
\end{equation}
where $\mathcal{K}^t(x_o, \alpha(x_b))$ and $\mathcal{K}^t(x_o, \mathcal{A}(u_b))$ are eNTK evaluations of the logit network $\mathbf{z}(\cdot)=g_\theta(\cdot)$, with different inputs. $\mathcal{G}_{\mathrm{sup}}^t(\alpha(x_b), y_b)=\nabla_{\mathbf{z}} \mathcal{L}_{\mathrm{sup}}(\alpha(x_b), y_b)|_{\mathbf{z}^t}$ and $\mathcal{G}_{\mathrm{con}}^t(\hat q_b, \mathcal{A}(u_b))=\nabla_{\mathbf{z}} \mathcal{L}_{\mathrm{con}}(\hat q_b, \mathcal{A}(u_b))|_{\mathbf{z}^t}$, respectively.
\end{proposition}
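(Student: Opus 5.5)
The plan is to reduce the claim to Proposition~\ref{definition:per_step_learning_dynamics}, applied separately to each summand of the FixMatch update in Eq.~\eqref{eq:ssl_dynamic}, and then control the cross terms.

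\textbf{Step 1: expand the log-probability.} Write $\Delta\theta = \Delta\theta_{\mathrm{sup}} + \Delta\theta_{\mathrm{con}}$, where
\[
\Delta\theta_{\mathrm{sup}} = -\eta\,(\nabla_\theta \mathbf{z}(\alpha(x_b)))^\top \mathcal{G}^t_{\mathrm{sup}}, \qquad
\Delta\theta_{\mathrm{con}} = -\eta\,(\nabla_\theta \mathbf{z}(\mathcal{A}(u_b)))^\top \mathcal{G}^t_{\mathrm{con}}.
\]
The second formula holds because, at step $t$, the pseudo-label $\hat q_b^t$ and the mask $\mathbbm{1}[\max_c q^t_{b,c}\ge\tau]$ are treated as constants (stop-gradient). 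So $\mathcal{L}_{con}$ is an ordinary cross-entropy evaluated at the single input $\mathcal{A}(u_b)$, with target $\hat q_b^t$ and a scalar weight that can be absorbed into $\mathcal{G}^t_{\mathrm{con}}$. The chain rule then gives these expressions.

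Next, Taylor-expand $\log\pi_\theta(y\mid x_o)$ around $\theta^t$ in two stages. First expand in the logits: $\Delta\log\pi = \mathcal{T}^t(x_o)\,\Delta\mathbf{z}(x_o) + \mathcal{O}(\|\Delta\mathbf{z}(x_o)\|^2)$. Then expand the logits in the parameters: $\Delta\mathbf{z}(x_o) = \nabla_\theta\mathbf{z}(x_o)|_{\theta^t}\,\Delta\theta + \mathcal{O}(\|\Delta\theta\|^2)$.

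\textbf{Step 2: split the first-order term.} The first-order term is linear in $\Delta\theta$, so it splits exactly into
\[
-\eta\,\mathcal{T}^t\,\mathcal{K}^t(x_o,\alpha(x_b))\,\mathcal{G}^t_{\mathrm{sup}} \;-\; \eta\,\mathcal{T}^t\,\mathcal{K}^t(x_o,\mathcal{A}(u_b))\,\mathcal{G}^t_{\mathrm{con}},
\]
using the eNTK definition from Proposition~\ref{definition:per_step_learning_dynamics}. Defining $\Delta\log\pi^{t,\mathrm{sup}}$ and $\Delta\log\pi^{t,\mathrm{con}}$ as the changes produced by the respective parameter increments recovers exactly the two displayed formulas. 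In fact each one is literally an instance of Proposition~\ref{definition:per_step_learning_dynamics}, with $(x_b,y_b)$ replaced by $(\alpha(x_b),y_b)$ or by $(\mathcal{A}(u_b),\hat q_b^t)$.

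\textbf{Step 3: bound the remainders (main obstacle).} Since $\mathcal{G}$ is the softmax-cross-entropy gradient, $\|\mathcal{G}\|\le\sqrt2$. Hence
\[
\|\Delta\theta_{\mathrm{sup}}\| \lesssim \eta\,\|\nabla_\theta\mathbf{z}(\alpha(x_b))\|_{\mathrm{op}},
\]
and the analogous bound holds for the consistency part. The quadratic remainders of each separate expansion are then of the stated orders, assuming bounded curvature of $\mathbf{z}$ and $\log\pi$ near $\theta^t$, as in Proposition~\ref{definition:per_step_learning_dynamics}.

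The difficulty is the cross term between $\Delta\theta_{\mathrm{sup}}$ and $\Delta\theta_{\mathrm{con}}$ that appears in the joint quadratic remainder. I would bound it by $\eta^2\|\nabla_\theta\mathbf{z}(\alpha(x_b))\|_{\mathrm{op}}\,\|\nabla_\theta\mathbf{z}(\mathcal{A}(u_b))\|_{\mathrm{op}}$. AM--GM ($ab\le\tfrac12(a^2+b^2)$) then splits this into halves that are absorbed into the two stated $\mathcal{O}$ terms. This justifies the additive decomposition of Eq.~\eqref{eq:ssl_ledy_deco} up to second order.

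A subtle point is the threshold indicator, which is discontinuous in $\theta$. Because it is evaluated at $\theta^t$ and frozen during the step, it never gets differentiated and creates no extra remainder. For an unlabeled batch of size $\mu B$, the same argument applied summand-wise just sums the per-sample terms.
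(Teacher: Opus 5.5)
Your proposal is correct and follows the same route as the paper: a first-order Taylor expansion of $\log\pi$ around $\theta^t$, the chain rule through the logits to obtain $-\eta\,\mathcal{T}^t\mathcal{K}^t\mathcal{G}^t$, and treating $\hat q_b^t$ as a stop-gradient constant so the consistency term is handled exactly like the supervised one. Your explicit control of the supervised--consistency cross term (via $\|\mathcal{G}\|\le\sqrt{2}$ and AM--GM) is a useful refinement. The paper instead assumes each step is driven by one loss at a time and never justifies the additivity in Eq.~\eqref{eq:ssl_ledy_deco}.
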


As shown in Proposition~\ref{proposition:ssl}, each update of $\theta$ in FixMatch decomposes into a supervised part driven by $(x_b,y_b)$ and a consistency part driven by $(u_b, \hat q_b^t)$. While this decomposition captures the per-step influence on $\pi_\theta(y\mid x_o)$, in practice training consists of many such steps, and the accumulated effect is governed by the iterative interaction between labeled and unlabeled updates. The detailed technical proofs are deferred to Appendix ~\ref{proof:proposition_ssl}.

\textbf{Accumulated influence and a demonstration on MNIST. } To demonstrate this, we train a WRN-28-2 on MNIST and visualize the accumulated influence in Figure~\ref{fig:accumulated_influence}. In Figure~\ref{fig:accumulated_influence}(a), when $\hat q_b$ is correct, the consistency term reinforces the supervised signal, gradually pulling the prediction of $x_o$ toward the correct class, \emph{i.e.}, ${q_{b,\mathtt{4}}}_\uparrow \;\text{and}\; {q_{b,\mathtt{9}}}_\downarrow$, consistent with the constructive dynamics implied by Eq.~\eqref{eq:per_deco_ssl}. In contrast, when $\hat q_b$ is incorrect (Figure~\ref{fig:accumulated_influence}(b)), the consistency update exerts the opposite effect, \emph{i.e.},  ${q_{b,\mathtt{4}}}_\downarrow, {q_{b,\mathtt{7}}}_\uparrow \; \text{and}\; {q_{b,\mathtt{9}}}_\downarrow$, systematically reducing the correct probability of $x_o$. This illustrates how pseudo-label errors, even if small at each step, can accumulate across iterations into a negative loop. The Figure~\ref{fig:accumulated_influence}(c) and (d) show that under class imbalance, such accumulated influence can drive the classifier to consistently predict the majority class (here $q_{b,\mathtt{0}} > q_{b,\mathtt{4}}$), regardless of the true label. This confirms the implication of our dynamics analysis: in SSL, the imbalance influence of labeled data is passed to the pseudo-labels through the classifier, so imbalance bias can be amplified rather than averaged out, leading to catastrophic bias.

\subsection{Learning dynamics analysis of accumulated bias under class imbalance}
The aforementioned phenomenon, together with the learning dynamics of the semi-supervised framework, illustrates how class imbalance accumulates into systematic bias. While per-update dynamics capture the influence of individual samples on predictions, they fall short of reflecting the global effect of imbalance. This motivates the search for an indicator that bridges class-imbalance bias with the underlying learning dynamics. Replacing the inputs $x_o$ with a task irrelevant baseline image $\mathcal{I}$, we can regard the Eq.~\eqref{eq:per_deco_ssl} as such an attributing indicator~\citep{Mukund2017intergate}. To justify this choice, we analyze its theoretical properties in both linear and deep settings, and then incorporate it into the per-step influence decomposition.

\textbf{Baseline image and its invariance property.} For simplicity, we first consider a two-layer MLP with no bias in the first layer and a bias vector $\boldsymbol{b} \in \mathbb{R}^C$ in the output layer $h(x) = h^{(2)} \circ h^{(1)}(x)$, where $h^{(1)}(x) = \sigma(\boldsymbol{W}_1 x)$ and $h^{(2)} = \boldsymbol{W}_2 x + \boldsymbol{b}$. This setting allows us to isolate and examine the predicted class probability $\pi_\theta(\mathcal{I})$ of a baseline image. For a baseline image $\mathcal{I}\in \mathbb{R}^d$, we have
\begin{equation}
    h(\mathcal{I}) = \boldsymbol{W}_2 h^{(1)}(\mathcal{I})  + \boldsymbol{b}.
    \label{eq:h_i}
\end{equation}
In modern neural networks, the explicit bias term $\boldsymbol{b}$ is often absorbed into the normalization layer, \emph{e.g.}, BatchNorm, LayerNorm, with other layers typically set without bias. Without loss of generality, we take BatchNorm as an example for analysis. Since the BatchNorm transformation can be equivalently viewed as an affine linear layer with learnable parameters, we may replace $h^{(2)}$ with a \texttt{BatchNorm}$(\cdot)$ layer, \emph{i.e.},
\begin{equation}
    h(\mathcal{I}) = \texttt{BatchNorm}\big( h^{(1)}(\mathcal{I}) \big) 
    = \frac{ h^{(1)}(\mathcal{I}) - \mathbb{E}[ h^{(1)}(\mathcal{I})]}{\sqrt{\mathrm{Var}[ h^{(1)}(\mathcal{I})] + \epsilon}} \cdot \boldsymbol{W}_2 + \boldsymbol{b}.
    \label{eq:bn_replace}
\end{equation}
where $\epsilon$ is a small positive constant that ensures numerical stability. The baseline image is typically a solid color image, which inherently lacks task-related patterns, see Appendix~\ref{app:baseline_image_section} for more discussions. This representation shows that, for baseline images, the dependence of $h(\mathcal{I})$ on the input is effectively controlled only through the affine parameters $(\boldsymbol{W}_2, \boldsymbol{b})$ of the normalization layer. We now state the main results regarding the prediction $\pi_{\theta}(\mathcal{I})$ for such baseline images:
\begin{proposition}[Invariance of baseline image under affine normalization]
\label{proposition:color_general}
Let $\mathcal{I} = k \cdot \mathbf{1}_d$ be a solid color image, where $k \in \{0,1,\dots,255\}$ and $\mathbf{1}_d \in \mathbb{R}^d$ is an all-one vector. 
Suppose the output of the first hidden transformation is normalized by a normalization layer (\emph{e.g.}, BatchNorm, InstanceNorm, or GroupNorm) with affine parameters $(\boldsymbol{W}_2, \boldsymbol{b})$. 
Then the logits $h(\mathcal{I})$ are independent of $k$ and reduce to
\begin{equation}
    h(\mathcal{I}) = \boldsymbol{b}, 
    \quad \pi_\theta(\mathcal{I}) = \texttt{Softmax}(\boldsymbol{b}).
    \label{eq:proposition2}
\end{equation}
\end{proposition}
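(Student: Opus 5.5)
The plan is to push the solid-color input $\mathcal{I}=k\mathbf{1}_d$ through the two stages of the model, the bias-free first layer and then the affine normalization, and to show that the normalized pre-affine activation is identically zero. Then only $\boldsymbol{b}$ survives.

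\textbf{Step 1: the first layer scales with $k$.} Since the first layer has no bias, $\boldsymbol{W}_1\mathcal{I}=k\,\boldsymbol{W}_1\mathbf{1}_d$. For a positively homogeneous activation such as ReLU, $h^{(1)}(\mathcal{I})=k\,\sigma(\boldsymbol{W}_1\mathbf{1}_d)$ for $k\ge 0$. The structural point is stronger than this: every spatial location and every sample built from a constant input gives the same activation vector, so the statistics the normalization layer computes over its reduction axes all see a constant signal.

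\textbf{Step 2: the normalization axes contain only constant entries.} I would treat the three normalizers separately.
\begin{itemize}
\item For InstanceNorm and GroupNorm on a convolutional feature map, a constant image gives spatially constant features, since each filter response equals the same sum of weights times $k$ away from padding effects. Within each normalization group every entry is therefore equal to its mean $\mathbb{E}[h^{(1)}(\mathcal{I})]$.
\item For BatchNorm, the relevant statistics are those of the batch containing $\mathcal{I}$. If that batch consists of baseline images, as in the CDMAD-style evaluation where $\mathcal{I}$ is fed on its own, then again $h^{(1)}(\mathcal{I})=\mathbb{E}[h^{(1)}(\mathcal{I})]$ componentwise.
\end{itemize}
In every case the numerator in Eq.~\eqref{eq:bn_replace} vanishes. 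The variance is $0$, and $\epsilon>0$ keeps the denominator equal to $\sqrt{\epsilon}\neq 0$. The normalized term is therefore $0$, so $h(\mathcal{I})=0\cdot\boldsymbol{W}_2+\boldsymbol{b}=\boldsymbol{b}$. This holds for every $k$, including $k=0$, where the numerator is trivially zero. Applying Softmax gives $\pi_\theta(\mathcal{I})=\texttt{Softmax}(\boldsymbol{b})$.

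\textbf{Main obstacle: which statistics the normalization uses.} The hardest step is justifying that "input equals its normalization mean." For the instance and group norms this rests on spatial constancy, and boundary padding breaks it. I would handle that either by assuming circular padding or a fully connected first layer, or by noting that in the two-layer MLP abstraction the reduction axis is the feature or group axis over identical entries. For BatchNorm with running statistics at inference, the argument fails in general. I would make explicit the convention implied by Eq.~\eqref{eq:bn_replace}: the expectation and variance are taken over the baseline inputs themselves, all of which are identical. I would state that assumption clearly rather than claim the result for arbitrary running statistics.
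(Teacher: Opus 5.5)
Your core mechanism matches the paper's. A bias-free first layer maps $k\mathbf{1}_d$ to a constant activation, the centred term of the normalization vanishes, and only $\boldsymbol{b}$ survives. The paper's proof treats only BatchNorm. It breaks off at ``the mean and variance are constants \ldots\ we can set $k=0$'' and never says over what the statistics are taken. You make two things explicit: the zero-variance step with $\epsilon>0$, and the assumption that the statistics come from copies of $\mathcal{I}$ rather than running estimates. The paper's own appendix on running statistics concedes that running estimates break the identity at test time. So your BatchNorm case is more complete than the paper's.

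\textbf{The gap is the GroupNorm step, and with it your MLP fallback.} You claim every entry in a normalization group equals the group mean. But a GroupNorm group pools several channels. On a constant image, channel $c$ is (away from the border) the constant $\sigma(k\,s_c)$, where $s_c$ is the sum of that channel's filter weights, and these constants differ across channels.
\begin{itemize}
\item Concretely, take ReLU, $k=1$, and a group of two channels with $s_1=1$, $s_2=2$. The normalized entries are $\pm 0.5/\sqrt{0.25+\epsilon}\neq 0$.
\item So $h(\mathcal{I})$ acquires a $\boldsymbol{W}_2$-dependent term. It also differs between $k=0$, where everything vanishes, and $k=1$.
\item The same objection defeats your line ``in the two-layer MLP abstraction the reduction axis is the feature or group axis over identical entries.'' The features are $\sigma(k w_j)$, with $w_j$ the $j$-th row sum of $\boldsymbol{W}_1$, and these are not identical. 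LayerNorm, which the appendix restatement lists, fails for the same reason.
\end{itemize}

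Because the statement itself is false in these cases, no argument closes the gap; you need to restrict the claim. One option is normalizers whose reduction set never mixes distinct channels or features: BatchNorm over a batch of baseline images, or InstanceNorm under your padding caveat. The other is $k=0$. There the bias-free first layer outputs $\sigma(0)$ in every coordinate, even at zero-padded borders. Every normalizer using the baseline input's own statistics then sees a constant signal. That case is presumably what the paper's ``set $k=0$'' and its black baseline image rely on.
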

% This replacement highlights that the baseline image prediction $\pi_\theta(\mathcal{I})$ is directly governed by the BN bias $\boldsymbol{b}$, thus allowing us to focus on its role in encoding and accumulating class-imbalance bias. 
One can immediately notice that $\pi_{\theta}(\mathcal{I})$ in Eq.~\eqref{eq:proposition2} does not contain any term related to the pixel value $k$ of $\mathcal{I}$. This observation implies that the representation $\pi_{\theta}(\mathcal{I})$ of a baseline image is entirely determined by the BatchNorm bias term $\boldsymbol{b}$, and is invariant to the actual pixel value $k$. The detailed technical proofs are deferred to Appendix ~\ref{proof:proposition:color_general}.

Building upon this invariance, we now establish a direct connection between the baseline image and the underlying class distribution. Specifically, for the classifier formulation in Eq.~\eqref{eq:bn_replace} and Eq.~\eqref{eq:proposition2}, we show that the logits of the baseline image encode the class-imbalance ratio present in the training data, thus providing a direct bridge between $\pi_\theta(\mathcal{I})$ and the class prior induced by the long-tailed distribution in training. We empirically validate this connection on CIFAR10-LT by analyzing the distribution of baseline logits: as shown in Figure~\ref{fig:bias_distribution_100_100}, the baseline logits closely align with the empirical class prior. When we remove the bias term in our ablation model, this alignment vanishes, indicating that the baseline logits lose their responsiveness to the class prior.

\begin{figure}[ht!]
    \centering
    \includegraphics[width=1.0\linewidth]{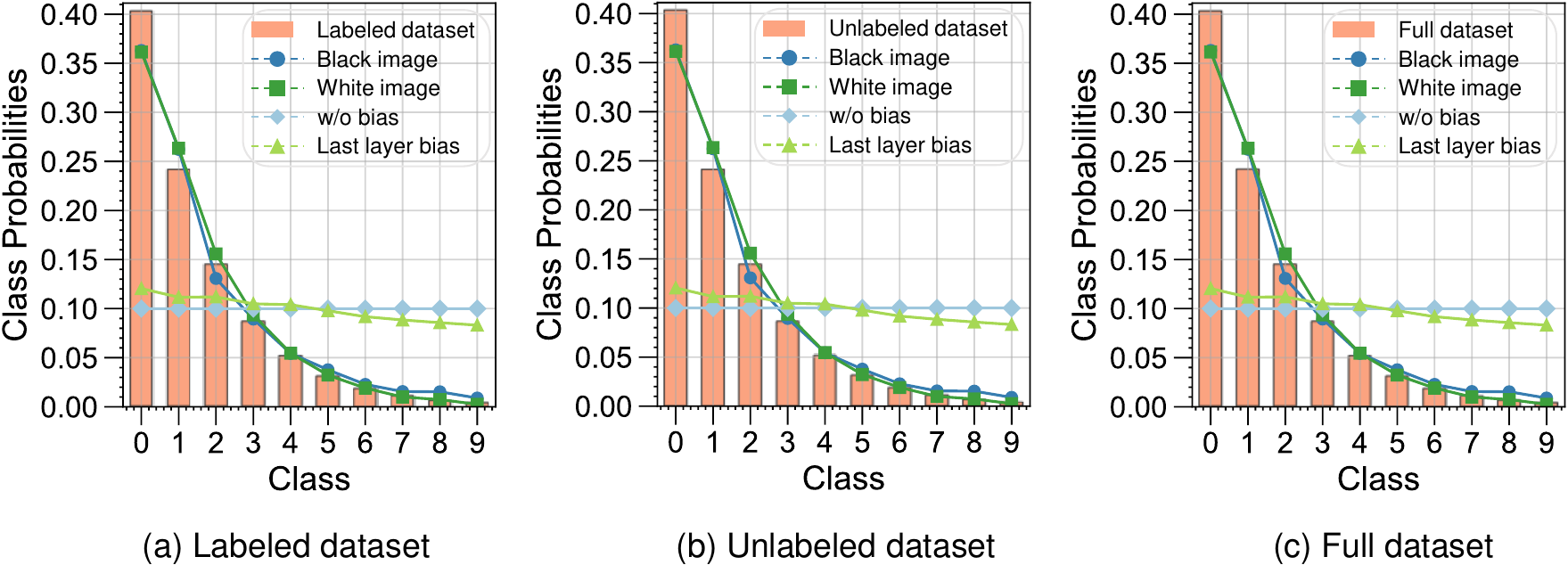}
     \vspace{-0.4cm}
    \caption{Class distributions and measured biaseddegree under $\gamma_l = 100$ and $\gamma_u = 100$. The bar plots show the class distributions for (a) labeled, (b) unlabeled, and (c) full datasets.}
    \label{fig:bias_distribution_100_100}
\end{figure}

\begin{theorem}[Bias as the conditional distribution prior] Assume the model $h(x)$ as characterized in Eq.~\eqref{eq:bn_replace} is trained using cross-entropy loss:
\begin{equation}
    \mathcal{L}=\mathbb{E}_{(x,y)}\big[-y^\top \log \texttt{Softmax}(h(x))\big].
\end{equation}
At a population risk minimizer $(\boldsymbol{W}_2^\star,\boldsymbol{b}^\star)$ we have
\begin{equation}
    \hat p^\star(x)=P(y\mid x), 
\qquad 
\hat p^\star(\mathcal{I})=\texttt{Softmax}\big(\boldsymbol{b}^\star\big) 
= P\big(y \,\big|\, \tfrac{ h^{(1)}(\mathcal{I}) - \mathbb{E}[ h^{(1)}(\mathcal{I})]}{\sqrt{\mathrm{Var}[ h^{(1)}(\mathcal{I})]+\epsilon}}=\mathbf{0}\big).
\end{equation}
For the baseline image $\mathcal{I}$ in Proposition~\ref{proposition:color_general}, the baseline prediction thus coincides with the conditional class distribution at the normalized-zero feature state, capturing the class prior induced by the long-tailed training distribution. See the detailed to Appendix ~\ref{proof:theorem}.
\label{theorem}
\end{theorem}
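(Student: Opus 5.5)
The plan is to treat this as a statement about population cross-entropy minimizers in a log-linear model on top of the fixed normalized feature $\phi(x) := \big(h^{(1)}(x)-\mathbb{E}[h^{(1)}(x)]\big)/\sqrt{\mathrm{Var}[h^{(1)}(x)]+\epsilon}$. With $h^{(1)}$ held fixed, the logits are affine in the trainable parameters, $h(x)=\boldsymbol{W}_2\phi(x)+\boldsymbol{b}$ (reading the product in Eq.~\eqref{eq:bn_replace} as a linear map). The loss is therefore convex in $(\boldsymbol{W}_2,\boldsymbol{b})$, and its population version decomposes as
\begin{equation*}
\mathcal{L}=\mathbb{E}_x\big[\mathbf{H}\big(P(\cdot\mid x),\,\hat p(x)\big)\big]
=\mathbb{E}_x\big[H(P(\cdot\mid x))\big]+\mathbb{E}_x\big[\mathrm{KL}\big(P(\cdot\mid x)\,\|\,\hat p(x)\big)\big].
\end{equation*}
The first step applies Gibbs' inequality pointwise: the risk is bounded below by the conditional entropy, and it attains this bound exactly when $\hat p(x)=P(y\mid x)$ almost everywhere. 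Under a realizability (well-specification) assumption, namely that $\log P(y\mid x)$ equals $\boldsymbol{W}_2\phi(x)+\boldsymbol{b}$ up to a per-$x$ constant for some parameters, any population minimizer must achieve this bound. This gives the first claim, $\hat p^\star(x)=P(y\mid x)$. I would state the assumption explicitly. Without it, I would fall back on the first-order conditions, $\mathbb{E}[\hat p^\star(x)-P(\cdot\mid x)]=0$ for $\boldsymbol{b}$ and $\mathbb{E}[(\hat p^\star(x)-P(\cdot\mid x))\phi(x)^\top]=0$ for $\boldsymbol{W}_2$. The $\boldsymbol{b}$-condition alone already says the average prediction matches the training label prior, which is the "class prior" content of the theorem.

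The second step evaluates the minimizer at the baseline image. By Proposition~\ref{proposition:color_general}, the solid-color input $\mathcal{I}=k\mathbf{1}_d$ is mapped by the normalization to $\phi(\mathcal{I})=\mathbf{0}$ regardless of $k$. Hence $h^\star(\mathcal{I})=\boldsymbol{W}_2^\star\mathbf{0}+\boldsymbol{b}^\star=\boldsymbol{b}^\star$ and $\hat p^\star(\mathcal{I})=\texttt{Softmax}(\boldsymbol{b}^\star)$. Because the model depends on $x$ only through $\phi(x)$, the conditional $P(y\mid x)$ it reproduces is really $P(y\mid\phi(x))$. Combining this with the first step at any point whose feature equals $\mathbf{0}$ gives $\texttt{Softmax}(\boldsymbol{b}^\star)=P(y\mid \phi=\mathbf{0})$, which is the displayed identity.

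The last step interprets this as the class prior. Since $\phi$ is centered, $\mathbf{0}$ is the mean feature. If $\phi$ carries little class information near the origin, then $P(y\mid\phi=\mathbf{0})\approx P(y)$. I would make this precise in one of two ways. The first uses the $\boldsymbol{b}$ stationarity condition: $\mathbb{E}_x[\texttt{Softmax}(\boldsymbol{W}_2^\star\phi(x)+\boldsymbol{b}^\star)]=P(y)$, and a first-order expansion around $\phi=\mathbf{0}$ (where $\mathbb{E}\phi=\mathbf{0}$ kills the linear term) gives $\texttt{Softmax}(\boldsymbol{b}^\star)=P(y)+O(\mathbb{E}\|\boldsymbol{W}_2^\star\phi\|^2)$. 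The second is the exact equality under a Gaussian class-conditional model with shared covariance and symmetric means.

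The main obstacle is rigor in this final step. The conditioning on a measure-zero event $\{\phi=\mathbf{0}\}$ and the passage from "conditional at the zero feature" to "prior" both depend on modeling assumptions. The conditioning issue I would handle by defining the conditional through the continuous version given by the log-linear form. The prior claim I would state only approximately, or under the stated distributional assumption, rather than claim it in full generality.
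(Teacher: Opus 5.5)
Your proposal is correct and follows essentially the same route as the paper's proof in Appendix~\ref{proof:theorem}. Population cross-entropy minimization gives $\hat p^\star(x)=P(y\mid x)$; Proposition~\ref{proposition:color_general} gives a zero normalized feature at $\mathcal{I}$, so $\hat p^\star(\mathcal{I})=\texttt{Softmax}(\boldsymbol{b}^\star)$; and the two facts are combined at $\phi=\mathbf{0}$. Where you differ is only in rigor, and in your favour. The paper asserts the first step without the realizability assumption it silently needs. It does not address that $\{\phi=\mathbf{0}\}$ is typically a null event; your log-linear continuous version handles this, and the centering $\mathbb{E}[\phi(x)]=\mathbf{0}$ you already invoke makes that version independent of the chosen minimizer, since two minimizers' biases can then differ only by a multiple of $\mathbf{1}$. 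It also gives no argument for the closing ``class prior'' reading, which your $\boldsymbol{b}$-stationarity identity $\mathbb{E}_x[\hat p^\star(x)]=P(y)$, together with the Taylor expansion about $\phi=\mathbf{0}$, supplies.
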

Thus, $\pi_\theta(\mathcal{I})$ serves as a natural proxy for the \emph{accumulated bias} of the model, bridging the class imbalance in the training set to the learning dynamics of the classifier.

\textbf{Per-step influence decomposition of the baseline image. } Let $\pi_{\theta}(y|\cdot)$ denote the estimate of the underlying class prior. Then we can track the change in the model's confidence by observing $\log \pi_{\theta}(y|\bigi{})$. Then the learning dynamics on the baseline image become,
\begin{equation}
    \Delta \log \pi^{t}(y|\mathcal{I}) \triangleq \log \pi_{\theta^{t+1}}(y|\mathcal{I}) - \log \pi_{\theta^{t}}(y|\mathcal{I}).
    \label{eq:decompostion}
\end{equation}
\begin{proposition}
\label{proposition:step_dynamic}
Let $\pi=\mathtt{Softmax}(\mathbf{z})$ and $\mathbf{z} = g_{\theta}(x)$. The one-step dynamics on the baseline image decompose as
\begin{equation}
\label{eq:per_indicator_ssl}
\small{
\begin{gathered}
    \Delta \log \pi_\theta^{t}(y\mid \mathcal{I};x) = -\eta \mathcal{T}^t(\mathcal{I}) \mathcal{K}^t(\mathcal{I}, x) \mathcal{G}^t(x, y) + \mathcal{O}\left(\eta^2 \|\nabla_\theta \mathbf{z}(x)\|_{\text{op}}^2\right)
\end{gathered}}
\end{equation}
where $\mathcal{T}^t(\mathcal{I}) = \nabla_\mathbf{z} \log \pi^t(\mathcal{I}) = I - \mathbf{1} \pi_{\theta^t}^T(\mathcal{I})$, $\mathcal{K}^t(\mathcal{I}, x) = \left( \nabla_{\theta} \mathbf{z}(\mathcal{I}) \big|_{\theta^t} \right) \left( \nabla_{\theta} \mathbf{z}(x) \big|_{\theta^t} \right)^T$ is the eNTK of the logit network $\mathbf{z}$, $x$ can be $\alpha(x_b)$ or $\mathcal{A}(u_b)$, $y$ can be $y_b$ and $\alpha(u_b)$. See Appendix~\ref{proof:proposition_baseline} for more details.
\end{proposition}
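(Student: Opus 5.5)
The plan is to specialize the argument behind Proposition~\ref{definition:per_step_learning_dynamics}, and its semi-supervised form Proposition~\ref{proposition:ssl}, by taking the observation point to be the baseline image, $x_o=\mathcal{I}$. The argument is a first-order Taylor expansion carried out in two layers: parameter space $\theta^t\mapsto\theta^{t+1}$, followed by logit space $\mathbf{z}\mapsto\log\texttt{Softmax}(\mathbf{z})$. I will treat one update on a generic training point $x$ with target $y$. Here $(x,y)$ is either $(\alpha(x_b),y_b)$ or $(\mathcal{A}(u_b),\hat q_b^t)$. For the FixMatch loss these two updates simply add, as in Eq.~\eqref{eq:ssl_ledy_deco}, so handling one generic pair suffices.

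\textbf{Step 1 (parameter update).} Write $\Delta\theta=-\eta\nabla_\theta\mathcal{L}(x,y)$. By the chain rule,
\[
\nabla_\theta\mathcal{L}=(\nabla_\theta\mathbf{z}(x)|_{\theta^t})^\top\,\mathcal{G}^t(x,y),
\qquad
\mathcal{G}^t=\nabla_{\mathbf{z}}\mathcal{L}|_{\mathbf{z}^t}.
\]
For cross-entropy this is $\mathcal{G}^t=\pi_{\theta^t}(x)-e_y$. For the pseudo-labeled branch the target $\hat q_b^t$ is frozen (stop-gradient, with the confidence mask treated as a constant). Consequently $\mathcal{G}^t$ has the same form with $e_{\hat q_b^t}$, and no gradient flows through $\alpha(u_b)$.

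\textbf{Step 2 (logits of $\mathcal{I}$).} Expand
\[
\mathbf{z}^{t+1}(\mathcal{I})=\mathbf{z}^t(\mathcal{I})+\nabla_\theta\mathbf{z}(\mathcal{I})|_{\theta^t}\,\Delta\theta+\mathcal{O}(\|\Delta\theta\|^2).
\]
Substituting Step 1 gives
\[
\Delta\mathbf{z}(\mathcal{I})=-\eta\,\mathcal{K}^t(\mathcal{I},x)\,\mathcal{G}^t(x,y)+\mathcal{O}(\eta^2\|\nabla_\theta\mathbf{z}(x)\|_{\text{op}}^2),
\]
where the remainder uses $\|\Delta\theta\|\le\eta\|\nabla_\theta\mathbf{z}(x)\|_{\text{op}}\|\mathcal{G}^t\|$ and the boundedness $\|\mathcal{G}^t\|\le\sqrt2$.

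\textbf{Step 3 (log-softmax).} Expand $\log\pi$ around $\mathbf{z}^t(\mathcal{I})$. Its Jacobian is $\mathcal{T}^t(\mathcal{I})=I-\mathbf{1}\pi_{\theta^t}^\top(\mathcal{I})$, and its Hessian is bounded uniformly, because the softmax Hessian entries are bounded by constants. This yields
\[
\Delta\log\pi^t(y\mid\mathcal{I})=\mathcal{T}^t(\mathcal{I})\Delta\mathbf{z}(\mathcal{I})+\mathcal{O}(\|\Delta\mathbf{z}\|^2).
\]
The quadratic term is again $\mathcal{O}(\eta^2)$, and collecting terms gives Eq.~\eqref{eq:per_indicator_ssl}.

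\textbf{Main obstacle: the remainder.} The bound in Steps 2 and 3 needs smoothness of $\mathbf{z}(\mathcal{I})$ in $\theta$. Specifically, the Hessian $\nabla^2_\theta\mathbf{z}(\mathcal{I})$ must be locally bounded, so that the second-order term is controlled by $\eta^2\|\nabla_\theta\mathbf{z}(x)\|_{\text{op}}^2$ times a constant. I would state this as a local smoothness assumption, exactly as in \citet{ren2024learning}, so the $\mathcal{O}(\cdot)$ absorbs it.

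\textbf{Baseline-image caveat.} With normalization layers, $\mathbf{z}(\mathcal{I})$ depends on batch statistics. I will adopt the eval-mode or affine view underlying Proposition~\ref{proposition:color_general}, in which the normalization is a fixed-statistics affine map at step $t$. This keeps $\mathbf{z}(\mathcal{I})$ a differentiable function of $\theta$ alone, so the eNTK $\mathcal{K}^t(\mathcal{I},x)$ is well defined.

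Under that view, the same proposition gives $\nabla_\theta\mathbf{z}(\mathcal{I})$ supported essentially on $\boldsymbol{b}$. This is a remark on the structure of the kernel rather than a step needed for the proof.
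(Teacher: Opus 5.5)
Your proposal is correct and follows essentially the paper's route: a first-order Taylor expansion of $\log\pi_\theta(\cdot\mid\mathcal{I})$ about $\theta^t$, substitution of the SGD step $\Delta\theta=-\eta(\nabla_\theta\mathbf{z}(x))^\top\mathcal{G}^t(x,y)$, and the chain rule to factor the leading term as $-\eta\,\mathcal{T}^t(\mathcal{I})\mathcal{K}^t(\mathcal{I},x)\mathcal{G}^t(x,y)$. Splitting the expansion into a parameter-to-logit stage and a logit-to-log-softmax stage is only an organizational difference, and it makes your remainder bookkeeping (bounded $\mathcal{G}^t$, local smoothness) more explicit than the paper's bare $\mathcal{O}(\|\theta^{t+1}-\theta^t\|^2)$; your closing aside that $\nabla_\theta\mathbf{z}(\mathcal{I})$ is supported on $\boldsymbol{b}$ would not generally hold under the fixed running statistics you adopt, because Proposition~\ref{proposition:color_general} needs the baseline's normalized feature to vanish, but nothing in your argument depends on that aside.
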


\begin{figure}[htbp!]
    \centering
    \includegraphics[width=0.9\linewidth]{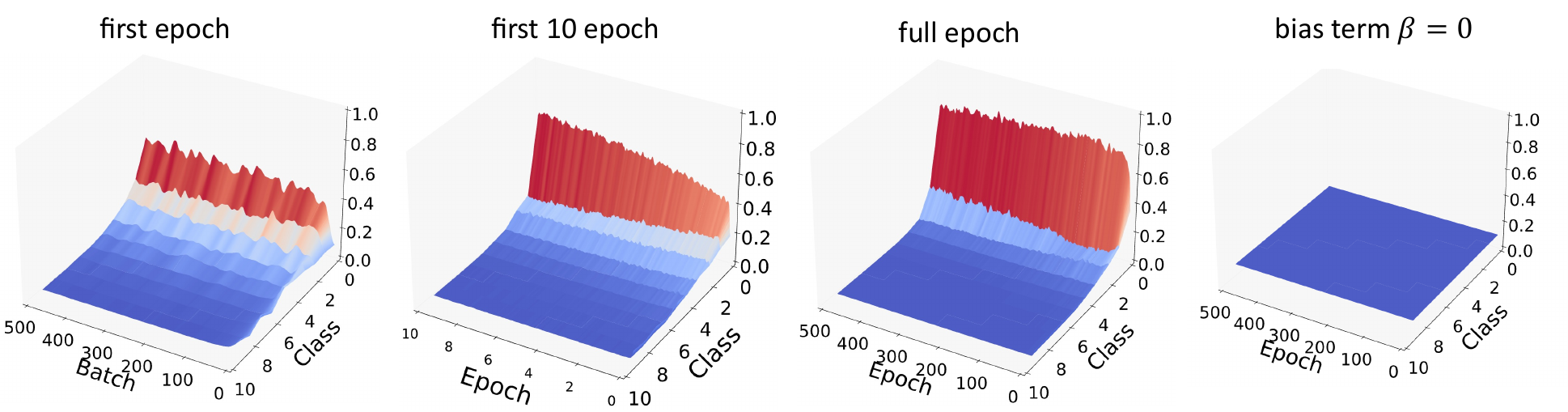}
     \vspace{-0.2cm}
    \caption{The change of logits's probability distribution $\pi_{\theta} (\mathcal{I})$ for the baseline image on CIFAR-10-LT. The left three panels depict the dynamics of reference logits under FixMatch: at epoch 1, epochs 1-10, and full epochs. The rightmost panel illustrates the dynamics after removing all bias terms.}
    \label{fig:example}
    \vspace{-0.3cm}
\end{figure}

Compared with Proposition \ref{proposition:ssl}, the main difference is that the $\mathcal{T}^t(\mathcal{I})$ and $\mathcal{K}^t(\mathcal{I}, x)$ term. Since the baseline image $\mathcal{I}$ lies far from the data manifold, the coupling kernel $\mathcal{K}^t(\mathcal{I},x)$ is typically small. Thus, the learning dynamics in Eq.~\eqref{eq:per_indicator_ssl} are mainly governed by the output-sensitivity term $\mathcal{T}^t(\mathcal{I})$ and the gradient signal $\mathcal{G}^t$, with the latter providing both the \textit{energy} and \textit{direction} for the model's adaptation. Under this formulation, the baseline image $\mathcal{I}$ serves as an indicator that isolates the model’s global bias state. Tracking $\pi^t_{\theta}(\mathcal{I})$ over training therefore provides a direct and interpretable measurement of how class-level bias accumulates during semi-supervised learning. Therefore, as the number of labeled and unlabeled samples from the majority class increases, the output of $\pi^t_{\theta}(\mathcal{I})$ will be progressively squeezed into a biased long-tailed distribution. Even with $\mathcal{G}^t$ guiding the adaptation direction, this process can still be steered by the biased state encoded in $\pi^t_{\theta}(\mathcal{I})$, further amplifying the long-tailed shift, as illustrated in Figure ~\ref{fig:example}.

\section{Dynamics analysis of logits debiasing in semi-supervised learning}
\label{sec:alalysis_method}
Analyzing the dynamics of logits debiasing methods in long-tailed semi-supervised learning is challenging because different algorithms such as Logits Adjustment, Reweighting, and Resampling employ distinct formulations. In this section, we propose a unified framework based on the per-step influence decomposition (Proposition~\ref{proposition:step_dynamic}). This framework enables us to analyze how these methods modify the update gradient flow, thereby influencing the model's bias evolution during training. We also introduce a pruning-based method, \ours, as a byproduct of our analysis. It can be integrated in a plug-and-play manner with other logits debiasing methods.

\subsection{Per-step decomposition of logits adjustment}

The typical logits debias method used during long-tail semi-supervised learning is logits adjustment (LA)~\citep{menon2021longtail}, which introduces a class-dependent shift in the logits, expressed as:
\begin{equation}
    \label{eq:la}
    \tilde{\pi}_{\theta}(y|x)=\texttt{Softmax}(\tilde{\mathbf z}(x)),\qquad
\tilde{\mathbf z}(x)=g_{\theta}(x)-\lambda\boldsymbol{\phi},
\end{equation}
where $\lambda\ge 0$ controls the adjustment strength and $\boldsymbol{\phi}\in\mathbb{R}^{C}$ is estimates of the class priors. Thanks to the $\tilde{z}$ implemented in CDMAD~\citep{Lee2024cdmad}, the resulting logits adjustment is almost identical to such simple subtraction, \emph{i.e.}, $\tilde{z}(x)= g_{\theta}(x) - \log \pi$, where $\pi = \pi_{\theta}(\mathcal{I})$. Thus, the change of the model’s prediction on the baseline image $\mathcal I$ can be represented as,
\begin{equation}
\Delta \log \tilde{\pi}_\theta^{\,t}(y\mid \mathcal I;\,x_b)
= -\eta\,\mathcal T^t(\mathcal I)\,
\mathcal K^t(\mathcal I,x_b)\,
\tilde{\mathcal G}^t_{LA}(x, y)
\;+\;\mathcal O\!\left(\eta^2\|\nabla_\theta \tilde{\mathbf z}(x_b)\|_{\text{op}}^2\right).
\label{eq:la}
\end{equation}
where $\tilde{\mathcal G}_{LA}(x, y) = \pi_{\theta}^t(\alpha(u_b)|\mathcal{A}(u_b)) - \pi$ represents the influence of the adjusted logits. Compared with Proposition~\ref{proposition:step_dynamic}, the main difference is that the gradient term has been modified by class prior $\pi$, which allows us to answer \textit{how does learning with debiasing affect the gradients for unlabeled samples?} When adjusting the model's logits by class prior, the gradient flow will ensure that the model compensates for the class imbalance during training. See more discussions in Appendix~\ref{app:more_algorithm}. We also conducted experiments on CIFAR10-LT to demonstrate the effectiveness of this debiasing, as illustrated in Figure \ref{fig:example}, which illustrates that the bias measured in the baseline image after applying LA to the CDMAD method is alleviated.

\subsection{Per-step decomposition of reweighting}
Reweighting is another prevalent debiasing technique in long-tail semi-supervised learning~\citep{lai2022Smoothed}, which introduces class-dependent weights in the loss function, expressed as:
\begin{equation}
    \label{eq:reweighting}
    \mathcal{L}^{rw}_{sup} = \sum^C_{k=1}w^{l}_k\mathcal{L}_{sup}(\alpha(x^k_b);\theta); \quad \mathcal{L}^{rw}_{con} = \sum^C_{k=1}w^{u}_k \mathcal{L}_{con}(\mathcal{A}(u^k_b), \hat{q}_b, \tau;\theta);
\end{equation}
where $w^l_k$ ($w^u_k$) is the weight of the $k$-th class in labeled (unlabeled) samples. For simplicity, we assume the class weight distributions are consistent between labeled and unlabeled data, \emph{i.e.}, $w^l_k$ and $w^u_k$ follow the same proportional relationship and remain fixed during training. Under this reweighting scheme, the gradient signals for both supervised and consistency terms are scaled by their respective class weights. Hence, we can decompose the learning dynamics for reweighting similarly to Eq.~\eqref{eq:la},
\begin{equation}
\label{eq:per_reweighting}
\small{
\begin{gathered}
\Delta \log \pi_\theta^{t,rw}(y\mid \mathcal{I};x) = -\eta \mathcal{T}^t(\mathcal{I})  \tilde{\mathcal{K}}^t_{rw}(\mathcal{I}, x; w^c) \tilde{\mathcal{G}}_{rw}^t(x, y; w^c) + \mathcal{O}\left(\eta^2 |\nabla\theta \mathbf{z}(x)|{\text{op}}^2\right)
\end{gathered}}
\end{equation}
where $\tilde{\mathcal{K}}^t_{rw}(\mathcal{I}, x; w^c) =  w^c \mathcal{K}^t_{rw}(\mathcal{I}, x) $ and $\tilde{\mathcal{G}}_{rw}^t(x, y; w^c) = w^c \mathcal{G}^t(x, y)$. Thus, reweighting acts by scaling both the similarity kernel and the gradient term with the class weight $w^c$. Intuitively, this modulates the strength of interaction between samples and the magnitude of their gradients in a class-dependent manner: samples from classes with larger $w^c$ exert a stronger influence on the update of $\theta$, while those from classes with smaller $w^c$ contribute less. When $w^c$ is designed as a function of class frequency (\emph{e.g.}, inverse frequency), this mechanism increases the effective contribution of under-represented classes and attenuates that of head classes. See more discussions in Appendix~\ref{app:more_algorithm}.

\subsection{\ours{}: A baseline image guided data pruning framework for LTSSL}
\label{sec:method}
Under the per-step influence framework of Proposition~\ref{proposition:step_dynamic}, logits adjustment and reweighting reshape the gradient flow by modifying the update direction or magnitude, while resampling acts directly on the data distribution by changing the frequency with which different classes enter training. Yet all these methods leave the sample set itself intact at each step and ignore the heterogeneous per-step utility of individual samples, allowing redundant head-class examples to continue dominating the learning dynamics. This motivates debiasing at the data-selection level, where dynamically controlling which samples participate in each update provides a more direct mechanism for mitigating accumulated bias in LTSSL, as illustrated in Figure ~\ref{fig:pipeline}.

\textbf{Per-step decomposition of dynamic pruning.} Differs from logits adjustment, reweighting, or resampling, dynamic pruning directly alters the set of samples that participate in each gradient update, instead of modifying the loss or sampling distribution. We define step-dependent scoring functions $\mathcal{H}_t^{l}(\cdot)$ for labeled samples $\mathcal{X}$ and $\mathcal{H}_t^{u}(\cdot)$ for unlabeled samples $\mathcal{U}$, which dynamically quantify sample utility at training step $t$. For the dynamic pruning process, samples are discarded by the step-dependent pruning probabilities $\mathcal{P}^{l}_t$ and $\mathcal{P}^{u}_t$:
\begin{equation}
    \mathcal{P}^{l}_t(x;\mathcal{H}_t^{l})=\mathbbm{1}(\mathcal{H}_t^{l}(x), \bm{\bar{H}}_t^{l}); \quad \text{and} \quad \mathcal{P}^{u}_t(u;\mathcal{H}_t^{u})=\mathbbm{1}(\mathcal{H}_t^{u}(u), \bm{\bar{H}}_t^{u}),
\end{equation}
% {}_{\prec r_c}  \mathcal{H}^l_{c,t}
where $\bm{\bar{H}}_t^{l}$ and $\bm{\bar{H}}_t^{u}$ are adaptive thresholds,  $\mathbbm{1}(\cdot, \cdot)$ is the indicator function. Under this dynamic pruning mechanism, the one-step decomposition of dynamic pruning decomposes as
\begin{equation}
\label{eq:per_pruning}
\small{
\begin{gathered}
\Delta \log \pi_\theta^{t,\mathrm{prune}}(y\mid \mathcal{I};x) = -\eta \mathcal{T}^t(\mathcal{I}) \mathcal{K}^t(\mathcal{I}, x) \tilde{\mathcal{G}}_{dytr}^t(x, y) + \mathcal{O}\left(\eta^2 |\nabla\theta \mathbf{z}(x)|{\text{op}}^2\right)\\
\tilde{\mathcal{G}}_{dytr}^t(x, y) = \mathcal{P}_t(x) \mathcal{G}^t(x, y)
\end{gathered}}
\end{equation}
where
\begin{equation}
\mathcal{P}_t(x) =\left\{
                            \begin{array}{rcl}
                            \mathcal{P}^{l}_t(x;\mathcal{H}_t^{l})       &      & x \in \mathcal{X},\\
                            \mathcal{P}^{u}_t(u;\mathcal{H}_t^{u})       &      & x \in \mathcal{U},\\
                            \end{array} \right.
\end{equation}
This decomposition shows that dynamic pruning reshapes the update dynamics by gating sample participation through $\mathcal{P}^{l}_t$ and $\mathcal{P}^{u}_t$, effectively zeroing out the kernel–gradient interactions $\mathcal{K}^t(\mathcal{I}, x) \mathcal{G}^t(x, y)$ of low-utility samples. In contrast to logits adjustment and reweighting, which only alter gradient signals, or resampling, which changes the sampling measure, pruning directly removes redundant head-class examples and underlearned unlabeled ones from the optimization path, thereby reallocating the model’s effective update budget toward samples that meaningfully influence bias correction. Although the kernel $\mathcal{K}^t(\mathcal{I}, x)$ itself remains unchanged, its operational contribution becomes $\mathbb{E}_{x\sim p}[\mathcal{P}_t(x)\mathcal{K}^t(\mathcal{I}, x)]$, selectively amplifying informative interactions while suppressing those that drive long-tailed drift. This sample-level intervention yields a more direct and fine-grained control of the learning dynamics than existing debiasing strategies.

Building on this perspective, we now instantiate how dynamic pruning is implemented in practice. We introduce \ours{}, a baseline-guided dynamic pruning framework designed to accommodate the distributional mismatch that real-world LTSSL typically exhibits between labeled and unlabeled data. Since such mismatch renders a single participation rule inadequate, \ours{} employs two complementary pruning mechanisms, one tailored to the long-tailed labeled set and the other to the noisy and imbalance-unknown unlabeled set. See more details about Appendix~\ref{proof:pruning}.

\textbf{Dynamic pruning for labeled data.} Since the labeled data follow a long-tailed class distribution,  we design a class-aware pruning policy $\mathcal{P}^{l}_t$ guided by $\pi_{\theta} (\mathcal{I})$. Critically, the classifier's pseudo-labels are primarily influenced by the labeled samples, which introduce bias toward majority classes. Since Proposition~\ref{proposition:color_general} shows that the baseline image has invariance to solid-color intensity, from first principles, we leverage the logits from a \textbf{black image} $\mathcal{I}$ to calibrate pruning probabilities. Given the labeled dataset $\mathcal{X}$ in the $t$-th epoch, a class-aware pruning probability is assigned to each sample based on its score, which is formulated as:
\begin{equation}
    \mathcal{P}^{l}_t(x^n_b) =\left\{
                            \begin{array}{rcl}
                            1       &      & \mathcal{H}^l_{t}(x^n_b) \in   \bm{H}^l_{{\prec r_c},t},\\
                            0       &      & \mathcal{H}^l_{t}(x^n_b)  \notin  \bm{H}^l_{{\prec r_c},t},\\
                            \end{array} \right.
    \label{eq:labeled_pruning}
\end{equation}
where $\bm{H}^l_{{\prec r_c},t}$ denotes the $r_c \times N_c$ smallest scoring values of the class $c$ and $r_c = \pi_{\theta} (\mathcal{I})_c$ is the class-aware pruning probability. The labeled scoring function $\mathcal{H}_{t}^{l}(x^n_b)$ is defined using the supervised loss $\mathcal{L}_{sup}(x^n_b, y^n_b)$ to quantify sample utility. See more details about Appendix~\ref{app:labeled_pruning}.

\textbf{Dynamic pruning for unlabeled data.} While the distribution of the label of the unlabeled data and its imbalance ratio $\gamma_u$ are unknown. To address the uncertainty and bias of pseudo-labels, we design a label-insensitive soft pruning policy $\mathcal{P}^{u}_t$ inspired by~\citep{qin2024infobatch}, which introduces randomness and gradient scaling into the pruning process. Specifically, for an unlabeled dataset $\mathcal{U}$ at the $t$-th epoch, a pruning probability is assigned to each sample based on its score, which is formulated as:
\begin{equation}
    \mathcal{P}^{u}_t(u^m_b) =\left\{
                            \begin{array}{rcl}
                            r       &      & {\mathcal{H}_{t}^{u}(u^m_b) < \bar{\mathcal{H}}_{t}^{m}} \text{ and } p^*(u^m_b) \geq \tau,\\
                            0       &      & {\mathcal{H}_{t}^{u}(u^m_b) \geq \bar{\mathcal{H}}_{t}^{u}} \text{ or } p^*(u^m_b) < \tau,\\
                            \end{array} \right.
\end{equation}
where $\bar{\mathcal{H}}_{t}^{u}$ is the adaptive threshold and $r$ is a randomized pruning rate, $\tau$ is the confidence threshold $\tau$ and $p^*(u^m_b)=\max(\texttt{softmax}(g^*_{\theta}(\alpha(u^m_b))))$ denote the debiased pseudo-label confidence. See more details about Appendix~\ref{app:unlabeled_pruning}.

\begin{table*}[htbp!]
  \centering
  \caption{Comparison of bACC/GM on CIFAR-10-LT under different imbalance ratio $\gamma = \gamma_l = \gamma_u$, where $\gamma_u$ is assumed to be known. ``*" indicates our own implementation.}
  \setlength{\tabcolsep}{5pt}
   \resizebox{0.95\textwidth}{!}{
    \begin{tabular}{c|l|cc|cc|cc}
    \toprule
    \multicolumn{1}{c|}{ Base SSL} & \multicolumn{1}{c|}{Debiasing}  & \multicolumn{2}{c|}{$\boldsymbol{\gamma}=\mathbf{50}$} & \multicolumn{2}{c|}{$\boldsymbol{\gamma}=\mathbf{100}$} & \multicolumn{2}{c}{$\boldsymbol{\gamma}=\mathbf{150}$}\\
    \multicolumn{1}{c|}{ Algorithm } & \multicolumn{1}{c|}{Strategy} & bACC & GM & bACC & GM & bACC & GM \\
    \midrule
    \multicolumn{2}{c|}{ Vanilla} & \ms{65.2}{0.05} & \ms{61.1}{0.09} & \ms{58.8}{0.13} & \ms{58.2}{0.11} & \ms{55.6}{0.43} & \ms{44.0}{0.98} \\
    \midrule
    \multicolumn{2}{c|}{Re-sampling} & \ms{64.3}{0.48} & \ms{60.6}{0.67} & \ms{55.8}{0.47} & \ms{45.1}{0.30} & \ms{52.2}{0.05} & \ms{38.2}{1.49}\\
    \multicolumn{2}{c|}{LDAM-DRW} & \ms{68.9}{0.07} & \ms{67.0}{0.08} & \ms{62.8}{0.17} & \ms{58.9}{0.60} & \ms{57.9}{0.20} & \ms{50.4}{0.30}\\
    \multicolumn{2}{c|}{cRT}   & \ms{67.8}{0.13} & \ms{66.3}{0.15} & \ms{63.2}{0.45} & \ms{59.9}{0.40} & \ms{59.3}{0.10} & \ms{54.6}{0.72}\\
    \midrule
    \multirow{13}{*}{FixMatch} & FixMatch & \ms{79.2}{0.33} & \ms{77.8}{0.36} &  \ms{71.5}{0.72} & \ms{66.8}{1.51} &  \ms{68.4}{0.15} & \ms{59.9}{0.43} \\
    &DARP+cRT & \ms{85.8}{0.43} & \ms{85.6}{0.56} & \ms{82.4}{0.26} & \ms{81.8}{0.17} & \ms{79.6}{0.42} & \ms{78.9}{0.35} \\
    &CReST+LA &  \ms{85.6}{0.36} & \ms{81.9}{0.45} &  \ms{81.2}{0.70} & \ms{74.5}{0.99} &  \ms{71.9}{2.24} & \ms{64.4}{1.75} \\
    &ABC &  \ms{85.6}{0.26} & \ms{85.2}{0.29} & \ms{81.1}{1.14} & \ms{80.3}{1.29} &  \ms{77.3}{1.25} & \ms{75.6}{1.65} \\
    &CoSSL &  \ms{86.8}{0.30} & \ms{86.6}{0.25} &  \ms{83.2}{0.49} & \ms{82.7}{0.60} &  \ms{80.3}{0.55} & \ms{79.6}{0.57} \\
    &SAW+LA &  \ms{86.2}{0.15} & \ms{83.9}{0.35} &  \ms{80.7}{0.15} & \ms{77.5}{0.21} &  \ms{73.7}{0.06} & \ms{71.2}{0.17} \\
    &Adsh &  \ms{83.4}{0.06} & \ms{82.9}{0.13} &  \ms{76.5}{0.35}& \ms{74.8}{0.34}  &  \ms{71.5}{0.30} & \ms{68.8}{0.35} \\
    &DebiasPL &  \ms{85.6}{0.20} & \ms{85.2}{0.23} &  \ms{80.6}{0.50} & \ms{79.9}{0.57} &  \ms{76.6}{0.12} & \ms{75.8}{0.71} \\
    &UDAL &  \ms{86.5}{0.29} & \ms{86.2}{0.26} &  \ms{81.4}{0.39} & \ms{80.6}{0.38} &  \ms{77.9}{0.33} & \ms{75.8}{0.71}\\
    &L2AC &  \ms{86.6}{0.31} & \ms{86.7}{0.30} &  \ms{82.1}{0.57} & \ms{81.5}{0.64} &  \ms{77.6}{0.53} & \ms{75.8}{0.71} \\
    &CDMAD &  \ms{87.3}{0.12} & \ms{87.0}{0.15} &  \ms{83.6}{0.46} & \ms{83.1}{0.57} &  \ms{80.8}{0.86} & \ms{79.9}{1.07}\\
    \rowcolor{gray!8} & \textbf{\ours{}} & \msb{88.0}{0.31} & \msb{87.8}{0.32} & \msb{84.8}{0.48} & \msb{84.4}{0.51} & \msb{82.0}{0.09} & \msb{81.3}{0.03}\\
    \midrule
    \multirow{3}{*}{FlexMatch} & FlexMatch* & \ms{72.6}{0.72} & \ms{70.2}{0.88} &  \ms{67.7}{0.73} & \ms{63.6}{1.27} &  \ms{62.6}{0.63} & \ms{56.1}{1.13} \\
    & CDMAD* &  \ms{74.4}{0.82} & \ms{73.0}{1.12} &  \ms{68.4}{0.46} & \ms{66.8}{0.53} &  \ms{67.0}{0.52} & \ms{63.2}{0.44}\\
    \rowcolor{gray!8} & \textbf{\ours{}} & \msb{77.2}{0.42} & \msb{76.2}{0.44} & \msb{70.7}{0.49} & \msb{67.8}{0.70} & \msb{68.6}{0.22} & \msb{66.3}{0.07}\\
    \midrule
    \multirow{3}{*}{FreeMatch} & FreeMatch* & \ms{71.9}{0.24} & \ms{69.4}{0.61} &  \ms{65.7}{0.18} & \ms{60.9}{0.69} &  \ms{62.5}{0.12} & \ms{57.3}{0.53} \\
    & CDMAD* &  \ms{74.7}{0.64} & \ms{73.6}{1.23} &  \ms{69.9}{0.65} & \ms{68.2}{0.74} &  \ms{66.2}{0.27} & \ms{63.2}{0.44}\\
    \rowcolor{gray!8} & \textbf{\ours{}} & \msb{76.9}{0.45} & \msb{75.9}{0.52} & \msb{72.3}{0.12} & \msb{71.4}{0.57} & \msb{69.4}{0.35} & \msb{67.5}{0.63}\\
    \bottomrule
    \end{tabular}
    }
    \vspace{-12pt}
\label{tab:cifar10_known}
\end{table*}

\section{Experiment}
\label{sec:experiment}
In this section, we conducted comprehensive experiments to verify the effectiveness of the proposed \ours{} on CIFAR10-LT, CIFAR100-LT~\citep{cui2019class}, STL10-LT~\citep{kim2020distribution}, and ImageNet-127~\citep{deng2009imagenet, huh2016makes} datasets. Due to limited space, we defer the detailed experimental settings and additional experiments to the Appendix~\ref{appendix:setup}.

\subsection{Results on CIFAR10/100-LT, STL10-LT and ImageNet-LT} 
\begin{wraptable}{R}{8cm}
    \centering
    \caption{Comparison of bACC on CIFAR-100-LT under different imbalance ratio, where $\gamma_u$ is assumed to be known. ``*" indicates our own implementation.}
    \vspace{-0.3cm}
    \resizebox{0.95\linewidth}{!}{
    \begin{tabular}{c|l|ccc}
    \toprule
    \multicolumn{1}{c|}{ \makecell{Base SSL \\ Alogrithm}} & \multicolumn{1}{c|}{\makecell{Debiasing\\Strategy}}  & \multicolumn{1}{c}{$\boldsymbol{\gamma}=\mathbf{20}$} & \multicolumn{1}{c}{$\boldsymbol{\gamma}=\mathbf{50}$} & \multicolumn{1}{c}{$\boldsymbol{\gamma}=\mathbf{100}$}\\
    \midrule
    \multirow{11}{*}{FixMatch} & FixMatch & \ms{49.6}{0.78} & \ms{42.1}{0.33} & \ms{37.6}{0.48}  \\
    & DARP & \ms{50.8}{0.77} & \ms{43.1}{0.54} & \ms{38.3}{0.47} \\
    & DARP+cRT & \ms{51.4}{0.68} & \ms{44.9}{0.54}  & \ms{40.4}{0.78} \\
    & CReST &  \ms{51.8}{0.12} & \ms{44.9}{0.50}  & \ms{40.1}{0.65} \\
    & CReST+LA & \ms{52.9}{0.07}  &  \ms{47.3}{0.17} &  \ms{42.7}{0.70}\\
    & ABC & \ms{53.3}{0.79}  &  \ms{46.7}{0.26} &  \ms{41.2}{0.06}\\
    & CoSSL & \ms{53.9}{0.78}  &  \ms{47.6}{0.26} &  \ms{43.0}{0.61}\\
    & UDAL & \ms{54.1}{0.23}  & \ms{48.0}{0.56}  & \ms{43.7}{0.41} \\
    & CPE & \ms{52.4}{0.17}  & \ms{45.6}{0.68}  & \ms{39.9}{0.40} \\
    & CDMAD & \ms{54.3}{0.44}  & \ms{48.8}{0.75}  & \ms{44.1}{0.29} \\
    \rowcolor{gray!8} & \textbf{\ours} & \msb{55.5}{0.53} & \msb{50.8}{0.80}  & \msb{44.8}{0.27} \\ 
    \midrule
    \multirow{3}{*}{FlexMatch} &  FlexMatch* & \ms{36.5}{0.51} & \ms{29.6}{0.35} & \ms{25.8}{0.79}  \\
    &CDMAD* & \ms{39.2}{0.47}  & \ms{31.9}{0.46}  & \ms{27.0}{0.66} \\
    \rowcolor{gray!8} & \textbf{\ours} & \msb{40.9}{0.09} & \msb{33.5}{0.21}  & \msb{29.8}{0.67} \\
    \midrule
    \multirow{3}{*}{FreeMatch} & FreeMatch* & \ms{35.9}{0.69} & \ms{31.3}{0.65} & \ms{24.5}{0.66}  \\
    & CDMAD* & \ms{36.9}{0.96}  & \ms{32.8}{0.93}  & \ms{28.0}{0.68} \\
    \rowcolor{gray!8} & \textbf{\ours} & \msb{39.0}{0.61} & \msb{33.4}{0.70}  & \msb{29.8}{0.09} \\\bottomrule
    \end{tabular}}
    \label{tab:cifar100}
\end{wraptable}

Under the consistent condition where $\gamma_u$ is known and matched to $\gamma_l$, the results in Table~\ref{tab:cifar10_known} show that CISSL algorithms consistently outperform their vanilla SSL counterparts by mitigating class imbalance while effectively exploiting unlabeled data. Among them, the proposed \ours{} achieves the best performance across all imbalance ratios. Compared with the state-of-the-art CDMAD, \ours{} improves bACC by 1.2\% and GM by 1.4\% on average, without incurring additional computational overhead. Furthermore, when integrated into FlexMatch and FreeMatch, \ours{} yields substantial improvements, boosting bACC/GM by 2--3\% on average. Table~\ref{tab:cifar100} evaluates the methods on CIFAR-100-LT, which involves more classes and a stronger imbalance. The results demonstrate that \ours{} consistently outperforms all competing approaches under this more challenging setting.

As shown in Table \ref{tab:imagenet}, \ours{} consistently outperforms prior techniques such as CDMAD on the large-scale ImageNet-LT benchmark \citep{liu2019large}, demonstrating its complementary benefits rather than merely overlapping with existing rebalancing approaches. See more details about large-resolution (224 $\times$ 224) in Appendix~\ref{app:imagenet224}. Under the inconsistent condition where $\gamma_u$ was unknown and mismatched to $\gamma_l$, the results in Table~\ref{tab:stl} show that \ours{} remains the most effective method overall. When the labeled and unlabeled data distributions deviate, \ours{} consistently outperforms CDMAD on both CIFAR-10-LT and STL-10-LT. See more details in Appendix~\ref{app:cifar-10lt} for the dynamics of baseline image logits during training.

\begin{wraptable}{R}{5cm}
    \centering
    \vspace{-0.4cm}
    \caption{Comparison of bACC on ImageNet-LT.}
    \vspace{-0.3cm}
    \resizebox{1\linewidth}{!}{
    \begin{tabular}{lc}\toprule
Algorithm & ImageNet-LT \\ 
\hline
FixMatch* & $20.0$ \\
~w/+CDMAD* & $35.4$ \\
\rowcolor[rgb]{ .949,  .949,  .949}~w/+\ours{} & \bf{37.2} \\
\bottomrule
\end{tabular}}
    \label{tab:imagenet}
\vspace{-0.3cm}
\end{wraptable}

\begin{table}[htbp]
\centering
\vspace{-0.5cm}
\caption{Comparison of bACC/GM on CIFAR-10-LT and STL-10-LT under different imbalance ratio $\gamma_l \neq \gamma_u$, where $\gamma_u$ is assumed to be unknown. ``*" indicates our own implementation.}
\vspace{-0.3cm}
\resizebox{0.99\linewidth}{!}{%
\begin{tabular}{c|l|cc|cc|cc|cc}
    \toprule
    \multirow{3}{*}{\makecell{Base SSL \\ Algorithm}} & \multirow{3}{*}{\makecell{Debiasing \\ Strategy}} & \multicolumn{4}{c|}{CIFAR-10-LT ($\gamma_l=100$, $\gamma_u=\text{Unknown}$)} & \multicolumn{4}{c}{STL-10-LT ($\gamma_u=\text{Unknown}$)}\\
    \cline{3-10}
       &    & \multicolumn{2}{c|}{$\boldsymbol{\gamma_u}=\mathbf{50}$} & \multicolumn{2}{c|}{$\boldsymbol{\gamma_u}=\mathbf{150}$} & \multicolumn{2}{c}{$\boldsymbol{\gamma_l}=\mathbf{10}$} & \multicolumn{2}{c}{$\boldsymbol{\gamma_l}=\mathbf{20}$}\\
      & & bACC & GM & bACC & GM & bACC & GM & bACC & GM\\
    \midrule
\multirow{11}{*}{FixMatch} & FixMatch & \ms{73.9}{0.25} & \ms{70.5}{0.52} &  \ms{69.6}{0.60} & \ms{62.6}{1.11} & \ms{72.9}{0.09} & \ms{69.6}{0.01} & \ms{63.4}{0.21} & \ms{52.6}{0.09}  \\ 
& DARP & \ms{77.3}{0.17} & \ms{75.5}{0.21} & \ms{72.9}{0.24} & \ms{69.5}{0.18} &\ms{77.8}{0.33} & \ms{76.5}{0.40} & \ms{69.9}{1.77} & \ms{65.4}{3.07}  \\
& DARP+LA &\ms{82.3}{0.32} & \ms{81.5}{0.29} & \ms{78.9}{0.23} & \ms{77.7}{0.06} &\ms{78.6}{0.30} & \ms{77.4}{0.40} &\ms{71.9}{0.49} & \ms{68.7}{0.51}  \\
& DARP+cRT & \ms{82.7}{0.21} & \ms{82.3}{0.25} &\ms{80.7}{0.44} & \ms{80.2}{0.61} &\ms{79.3}{0.23} & \ms{78.7}{0.21} &\ms{74.1}{0.61} & \ms{73.1}{1.21}  \\ 
& ABC & \ms{82.7}{0.64} & \ms{82.0}{0.76} & \ms{78.4}{0.87} & \ms{77.2}{1.07} &\ms{79.1}{0.46} & \ms{78.1}{0.57} &\ms{73.8}{0.15} & \ms{72.1}{0.15}  \\
& SAW & \ms{79.8}{0.25} & \ms{79.1}{0.32} & \ms{74.5}{0.97} & \ms{72.5}{1.37} &\ms{78.3}{0.25} & \ms{77.0}{0.19} &\ms{71.9}{0.81} & \ms{69.0}{0.81}  \\
& SAW+LA & \ms{82.9}{0.38}  & \ms{82.6}{0.38}& \ms{79.1}{0.81}  & \ms{78.6}{0.91} &\ms{79.4}{0.26} & \ms{78.4}{0.17}&\ms{73.9}{0.91} & \ms{71.8}{0.99}  \\
& SAW+cRT & \ms{81.6}{0.38} & \ms{81.3}{0.32} & \ms{77.6}{0.40} & \ms{77.1}{0.41} & \ms{78.9}{0.22} & \ms{77.8}{0.14} & \ms{72.3}{0.86} & \ms{69.5}{0.83} \\
& CPE   & \ms{86.2}{0.26} & \ms{85.9}{0.33}& \ms{82.4}{0.49} & \ms{82.1}{0.53}& \ms{79.0}{0.05} & \ms{78.7}{0.54} & \ms{77.0}{0.73} & \ms{76.1}{0.68}\\ 
& CDMAD & \ms{85.7}{0.36} & \ms{85.3}{0.38} & \ms{82.3}{0.23} & \ms{81.8}{0.29} & \ms{79.9}{0.23} & \ms{78.9}{0.38} & \ms{75.2}{0.40} & \ms{73.5}{0.31}  \\
\rowcolor{gray!8} & \textbf{\ours} & \msb{86.4}{0.43} & \msb{86.0}{0.43} & \msb{83.8}{0.34} & \msb{83.4}{0.33}  & \msb{80.7}{0.64} & \msb{79.8}{0.70} & \msb{77.9}{1.04} & \msb{76.7}{1.26}  \\
\midrule
\multirow{3}{*}{FlexMatch} & FlexMatch* & \ms{67.7}{0.67} & \ms{62.8}{0.65} &  \ms{63.0}{0.77} & \ms{56.3}{1.70} & \ms{62.1}{0.29} & \ms{60.8}{0.43} & \ms{56.9}{0.90} & \ms{51.4}{0.81}  \\ 
& CDMAD* & \ms{69.2}{0.22} & \ms{67.0}{0.11} & \ms{67.0}{1.69} & \ms{63.4}{0.91} & \ms{65.5}{1.05} & \ms{63.7}{1.02} & \ms{62.4}{1.05} & \ms{60.5}{0.99}  \\
\rowcolor{gray!8} & \textbf{\ours} & \msb{72.5}{0.39} & \msb{70.7}{0.45} & \msb{70.3}{1.01} & \msb{67.4}{0.21}  & \msb{68.0}{0.94} & \msb{66.4}{0.85} & \msb{63.9}{0.16} & \msb{61.7}{0.28}  \\
\midrule
\multirow{3}{*}{FreeMatch} & FreeMatch* & \ms{69.3}{0.99} & \ms{65.4}{1.45} &  \ms{63.5}{0.76} & \ms{55.7}{0.77} & \ms{63.9}{0.77} & \ms{62.0}{0.90} & \ms{59.0}{1.43} & \ms{57.6}{0.67}  \\ 
& CDMAD* & \ms{71.0}{0.98} & \ms{69.0}{1.05} & \ms{67.1}{0.96} & \ms{64.3}{0.99} & \ms{66.1}{0.32} & \ms{63.8}{0.97} & \ms{61.5}{0.47} & \ms{59.5}{0.63}  \\
\rowcolor{gray!8} & \textbf{\ours} & \msb{72.3}{0.69} & \msb{71.1}{1.23} & \msb{69.9}{0.15} & \msb{67.4}{0.37}  & \msb{68.0}{0.64} & \msb{66.5}{1.20} & \msb{64.6}{0.77} & \msb{62.7}{1.16}  \\
\bottomrule
\end{tabular}}
\label{tab:stl}
% \vspace{-0.4cm}
\vspace{-10pt}
\end{table}

\subsection{Results on ViT backbones}
In addition, Table~\ref{tab:vit} highlights the performance of various algorithms under both consistent and inconsistent imbalance settings with ViT backbones. On CIFAR-10-LT, \ours{} yields the best results, improving bACC 0.6\% over CDMAD and nearly 4\% over FixMatch when $\gamma_l = \gamma_u = 100$. Under the inconsistent condition, \ours{} maintains a clear margin, surpassing CDMAD almost 2\%. On CIFAR-100-LT, although the absolute accuracies are lower due to the increased difficulty, \ours{} still matches or slightly improves upon CDMAD, while consistently outperforming FixMatch. Additional experimental results are provided in Appendix~\ref{appendix:exp}.

\begin{table}[htbp!]
\centering
\caption{Comparison of bACC/GM on CIFAR-10-LT and CIFAR-100-LT with TinyViT under different imbalance ratio, where $\gamma_u$ is assumed to be known. ``*" indicates our own implementation.}
\vspace{-0.3cm}
\resizebox{1\linewidth}{!}{%
\begin{tabular}{c|l|cc|cc|cc}
    \toprule
    \multirow{3}{*}{\makecell{Base SSL \\ Algorithm}} & \multirow{3}{*}{\makecell{Debiasing \\ Strategy}}  & \multicolumn{4}{c|}{CIFAR-10-LT ($\gamma_l=100$)}  & \multicolumn{2}{c}{CIFAR-100-LT ($\gamma_l = 100$)}\\
    \cline{3-8}
    &  & \multicolumn{2}{c|}{$\gamma_u = 100$} & \multicolumn{2}{c|}{$\gamma_u = 150$} & \multicolumn{2}{c}{$\gamma_u = 100$} \\
    &  & bACC & GM & bACC & GM & bACC & GM \\
    \midrule
    \multirow{3}{*}{FixMatch} & FixMatch* & \ms{45.5}{0.14} & \ms{30.0}{0.41} &  \ms{45.3}{0.12} & \ms{28.9}{0.96} & \ms{23.2}{0.13} & \ms{5.7}{0.33} \\ 
    & CDMAD* & \ms{48.7}{0.49} & \msb{40.5}{0.26} & \ms{45.4}{0.13} & \msb{39.9}{0.10} & \ms{24.0}{0.15} & \msb{9.0}{0.77} \\
    \rowcolor{gray!8} & \textbf{\ours} & \msb{49.3}{0.47} & \ms{40.3}{0.36} & \msb{47.3}{0.12} & \ms{39.7}{0.57}  & \msb{24.1}{0.22} & \ms{8.9}{0.15} \\
\bottomrule
\end{tabular}}
\label{tab:vit}
\vspace{-10pt}
\end{table}

\subsection{Scalability Evaluation of \ours{}}
\ours{} exhibited robust extensibility as a universal plug-in component, consistently boosting performance across diverse SSL frameworks (CDMAD/CCL), datasets (CIFAR/STL10-LT), and imbalance ratios ($\gamma=1\sim150$), as shown in Table~\ref{tab:enhance}. Notably, it achieved up to +1.4\% (CDMAD on CIFAR10-LT) and +2.7\% (STL10-LT, $\gamma_l$=20) gains without architecture-specific tuning, validating its versatility in semi-supervised long-tailed scenarios. 

\begin{table}[htbp!]
\vspace{-0.2cm}
\centering
\label{tab:scalability}
\caption{Comparison of bACC with two SOTA algorithms with and without \ours{} on CIFAR-10, CIFAR-100, and STL-10. \da{} and \ua{} indicate improvements or degradations over the baseline.}
\vspace{-0.2cm}
\resizebox{0.99\linewidth}{!}{
\begin{tabular}{ll ccc ccc}
\toprule
\multicolumn{1}{l}{\multirow{2}{*}{\textbf{Dataset}}} & \multicolumn{1}{l}{\multirow{2}{*}{\textbf{Imbalance ratio}}} 
& \multicolumn{3}{c}{FixMatch+} 
& \multicolumn{3}{c}{FixMatch+} \\
\cmidrule(lr){3-5} \cmidrule(lr){6-8}
& & CDMAD & CDMAD+\textbf{\ours} & Gain & CCL & CCL+\textbf{\ours} & Gain \\
\midrule
\multirow{4}{*}{CIFAR10-LT} 
  & \( \gamma_l = \gamma_u = 100 \)      & \ms{83.6}{0.46} & \cellcolor{gray!8}\msb{84.8}{0.48} & \uar{1.2} & \ms{86.2}{0.35} & \cellcolor{gray!8}\msb{86.7}{0.39} & \uar{0.5} \\
  & \(\gamma_l = \gamma_u = 150 \)      & \ms{80.8}{0.86} & \cellcolor{gray!8}\msb{82.0}{0.09} & \uar{1.2} & \ms{84.0}{0.21} & \cellcolor{gray!8}\msb{84.0}{0.26} & \uar{0.0} \\
  & \( \gamma_l = 100, \gamma_u = 1 \)            & \ms{87.5}{0.46} & \cellcolor{gray!8}\msb{88.9}{0.88} & \uar{1.4} & \ms{93.9}{0.12} & \cellcolor{gray!8}\msb{94.1}{0.17} & \uar{0.2} \\
  % & \( \gamma_l = 100, \gamma_u = 1/100 \)        & 77.1 & 78.2 & {\textcolor{PineGreen}{$\uparrow$}1.1} & 89.8 &  &  \\
\midrule
\multirow{1}{*}{CIFAR100-LT} 
  % & \( \gamma = \gamma_l = \gamma_u = 10 \)      &  &  &  &  &  & \\
  & \( \gamma_l = \gamma_u = 20 \)      & \ms{54.3}{0.44} & \cellcolor{gray!8}\msb{55.5}{0.53} & \uar{1.2} & \ms{57.5}{0.16} & \cellcolor{gray!8} \msb{58.1}{0.49} & \uar{0.6} \\
\midrule
\multirow{2}{*}{STL10-LT} 
  & \( \gamma_l = 10 \)      & \ms{79.9}{0.23} & \cellcolor{gray!8}\msb{80.7}{0.64} & \uar{1.2} & \ms{84.8}{0.15} & \cellcolor{gray!8}\msb{85.1}{0.33} & \uar{0.3} \\
  & \( \gamma_l = 20 \)      & \ms{75.2}{0.40} & \cellcolor{gray!8}\msb{77.9}{1.04} & \uar{2.7} & \ms{83.1}{0.18} & \cellcolor{gray!8}\msb{83.3}{0.40} & \uar{0.2} \\
\bottomrule
\end{tabular}
}
% \vspace{-0.4cm}
\vspace{-12pt}
\label{tab:enhance}
\end{table}

\section{Conclusion}
\label{sec:conclusion}
In this work, we provide a theoretical characterization of class bias in LTSSL through an in-depth analysis of the learning dynamics. We derive a step-wise decomposition of logit updates, demonstrating how class imbalance dominates predictions and how debiasing methods, such as logit adjustment, reweighting, and resampling. Our theoretical insights bridge the gap between existing methods and their effect on gradient dynamics, highlighting the critical role of sample-level interventions. Based on this foundation, we introduce \ours{}, a dynamic pruning framework that mitigates class imbalance by reallocating gradient budgets. Empirical results across multiple benchmarks and SSL methods demonstrate that \ours{} consistently improves performance.

\section*{Acknowledgement}
This work was supported by the Beijing Natural Science Foundation under Grant (No.L231005), and by the National Key Research and Development Program of China under Grant (No.2024YFB3312200). Yue Cheng would like to thank Bochen Lyu, Qianying Tang, and Xiang Wei for the useful discussion. 

\bibliography{iclr2026_conference}
\bibliographystyle{iclr2026_conference}

\clearpage

\appendix

\section*{Appendix}

\section{Related Work}

% \subsection*{Related Work}
\subsection{More about mechanisms of long-tailed debiasing}
This paper considers learning dynamics to study the debiasing mechanisms of SSL algorithms. We briefly introduce differences between the settings considered here and those in previous works. For debiasing on long-tailed learning, \citet{menon2021longtail} considered a unified framework for debiasing from the perspective of logits adjustment, which requires statistical label frequency. CCL~\citep{zhou2024ccl} considered debiasing from an information-theoretical lens. LCGC~\citep{xing2025lcgc} used gradient flow to analyze the debiasing process. However, these methods only elucidate the model's behavior from an ad hoc perspective. We aim to develop a more comprehensive framework that enables a principle-based lens of the bias generation mechanisms inherent in long-tailed semi-supervised learning.

\subsection{More about semi-supervised learning}
Modern SSL methods typically integrate diverse strategies for exploiting unlabeled data, such as entropy minimization~\citep{zhou2024ccl}, consistency regularization~\citep{sohn2020fixmatch}, and contrastive learning~\citep{zhou2024ccl, Lee2022cr++}. Among them, most SSL approaches rely on selecting reliable pseudo-labels during training. FixMatch~\citep{sohn2020fixmatch} adopts a fixed confidence threshold of 0.95, whereas FlexMatch~\citep{zhang2021flexmatch} adapts thresholds per class based on learning difficulty and training progress. FreeMatch~\citep{wang2023freematch} integrates global and local adjustments with a class-fairness regularizer to promote prediction diversity, while SoftMatch~\citep{chen2023softmatch} employs a soft thresholding scheme that reweights samples to balance quantity and quality. In contrast, our method bypasses threshold tuning altogether and directly enforces class-balanced pseudo-labeling through dynamic pruning.

\subsection{More about long-tailed semi-supervised debiasing}
Existing debiasing methods for LTSSL dominantly rely on consistent distribution assumptions \citep{guo2022class, Lee2021abc} and logit adjustment strategies \citep{Wei2023ACR}. Notable approaches include CReST \citep{wei2021crest}, which focuses on minority classes through selective self-training, and CoSSL \citep{cai2021ace}, which balances representations using tail-class feature augmentation. Recent advances, like BaCon \citep{feng2024bacon}, utilize contrastive learning for balanced features, while SMCLP \citep{du2024semi} exploits collaborative label-instance correlations, and CPE \citep{ma2024three} employs multiple expert classifiers. Innovative methods such as InPL \citep{yu2023inpl} and DebiasMatch \citep{wang2022debiased} move beyond traditional pseudo-labeling; InPL uses energy scores to detect reliable inliers, whereas DebiasMatch applies adaptive debiasing with a marginal loss to reduce long-tailed pseudo-label bias. Despite these advances, LTSSL techniques often demand intricate mechanisms or additional modules \citep{Lee2021abc}, posing challenges in minimizing bias while maintaining simplicity.

\subsection{More about dynamic dataset pruning}
To reduce training cost on datasets, dynamic dataset pruning methods~\citep{chen2024beyond, killamsetty2021grad, sagawa2019distributionally, schaul2015prioritized, zhang2024gder} aim to reduce the number of training iterations while maintaining performance. Existing methods employ a variety of criteria to guide pruning, among which loss-based~\citep{attendu2023nlu, kawaguchi2020ordered, thao2023kakurenbo} method is the most popular. UCB~\citep{raju2021ddp} applies the cross-entropy loss with exponential moving average (EMA) smoothing to mitigate noise. Infobatch~\citep{qin2024infobatch} randomly prunes low-loss samples and amplifies the gradients of retained ones to preserve the expected gradient. SCAN~\citep{guo2024scan} categorizes samples as redundant or ill-matched based on their loss and gradually increases the pruning ratio using cosine annealing. While thsese methods effectively accelerate training and can yield nearly unbiased results, none have explored their potential to mitigate class imbalance in SSL by pruning.

% \paragraph{More about learning dynamics.} 

\section{More Base SSL Algorithms}
\label{app:base_ssl}

\subsection{More about training losses of FixMatch}
\label{app:fixmatch_loss}
Training losses of FixMatch on a minibatch for the labeled set $\mathcal{MX}$ and a minibatch for the unlabeled set $\mathcal{MU}$ can be expressed as follows:
\begin{equation}
    \mathcal{L}_{sup}(x_b; \theta)  = \frac{1}{B}\sum_{x_b \in \mathcal{MX}} \mathbf{H}\left(\pi_{\theta}(y|\alpha(x_b)\right), p_b)
\end{equation}
with
\begin{equation}
    \mathcal{L}_{con}(u_b, \hat{q}, \tau; \theta)  = \frac{1}{\mu B}\sum^{B}_{b=1} \mathbbm{1}(\max(\hat{q}_b) \geq \tau)\text{\textbf{H}}(P_{\theta}(y|\mathcal{A}(u_b), \hat{q}_b),
\end{equation}
where $\hat{q}$ denote the concatenations of $\hat{q}_b$. $\mathcal{L}_{sup}$ denotes the supervised loss for weakly augmented labeled data points $u_b$. $\mathcal{L}_{con}$ denotes the consistency regularization loss with the confidence threshold $\tau$.

\subsection{More about FlexMatch}
\label{app:flexmatch}
To overcome the limitation of FixMatch using a fixed threshold $\tau$ across all classes, FlexMatch~\citep{zhang2021flexmatch} introduces the \emph{Curriculum Pseudo Labeling (CPL)} strategy. The key idea is to dynamically adjust the confidence threshold according to the learning status of each class. Specifically, FlexMatch first predicts the class probability for a weakly augmented unlabeled sample $u_b$ as $q_b = \pi_\theta(y|\alpha(u_b))$, and then estimates the learning effect of each class $c$ by $\sigma_t(c)$, \emph{i.e.}, the number of samples predicted as class $c$ that exceed the fixed threshold $\tau$. After normalization, a ratio coefficient $\beta_t(c)$ is obtained, which defines the class-adaptive threshold:
\begin{equation}
    T_t(c) = \beta_t(c) \cdot \tau.
\end{equation}
In this way, hard-to-learn classes receive a lower threshold to include more samples in training, while easy-to-learn classes gradually increase their thresholds to ensure pseudo-label quality. The unsupervised loss is defined as:
\begin{equation}
    \mathcal{L}_{con}(u_b, \hat{q}, T_t; \theta) = \frac{1}{\mu B} \sum_{b=1}^{\mu B} 
    \mathbbm{1}\!\left( \max(q_b) > T_t(\arg\max(q_b)) \right) \,
    \mathbf{H}\!\left(\hat{q}_b, \pi_\theta(y|\mathcal{A}(u_b)) \right),
\end{equation}
where $\hat{q}_b = \arg\max_c q_{b,c}$ denotes the hard pseudo-label, and $\mathcal{A}(\cdot)$ is the strong augmentation function. The overall training objective is
\begin{equation}
    \mathcal{L}_t = \mathcal{L}_{sup} + \lambda \mathcal{L}_{con}.
\end{equation}
where $\lambda$ is weighting hyperparameter.

\subsection{More about FreeMatch}
\label{app:freematch}
Unlike FixMatch and FlexMatch, which rely on fixed or indirectly adjusted thresholds, FreeMatch~\citep{wang2023freematch} proposes \emph{Self-Adaptive Thresholding (SAT)} that dynamically determines thresholds based on the model’s prediction confidence. Specifically, FreeMatch first estimates a global threshold $\tau_t$ using an exponential moving average (EMA) of model confidence:
\begin{equation}
    \tau_t = \rho \tau_{t-1} + (1-\rho)\frac{1}{\mu B}\sum_{b=1}^{\mu B} \max(q_b),
\end{equation}
and further refines it with class-specific local statistics $\tilde{p}_t(c)$:
\begin{equation}
    \tau_t(c) = \frac{\tilde{p}_t(c)}{\max_{c'} \tilde{p}_t(c')} \cdot \tau_t.
\end{equation}
At the early stage of training, thresholds are low to encourage more unlabeled data utilization and faster convergence. As the model becomes more confident, thresholds increase to suppress incorrect pseudo-labels and reduce confirmation bias. The unsupervised loss at iteration $t$ is thus:
\begin{equation}
    \mathcal{L}_{con}(u_b, \hat{q}, \tau_t; \theta) = \frac{1}{\mu B} \sum_{b=1}^{\mu B} 
    \mathbbm{1}\!\left( \max(q_b) > \tau_t(\arg\max(q_b)) \right) \,
    \mathbf{H}\!\left(\hat{q}_b, \pi_\theta(y|\mathcal{A}(u_b)) \right).
\end{equation}
In addition, FreeMatch introduces \emph{Self-Adaptive Fairness (SAF)} regularization $\mathcal{L}_f$, which dynamically calibrates the prediction distribution to encourage diverse predictions and prevent class collapse during early training. Concretely, let $h_t \in \mathbb{R}^C$ denotes the normalized class histogram of model predictions at iteration $t$, and let $h^\ast \in \mathbb{R}^C$ denotes the target distribution (\emph{e.g.}, a uniform distribution). The SAF regularization is defined as
\begin{equation}
    \mathcal{L}_f = D_{\text{KL}}\!\left(h_t \,\|\, h^\ast\right),
\end{equation}
where $D_{\text{KL}}(\cdot \| \cdot)$ is the Kullback--Leibler divergence. The final training objective is:
\begin{equation}
    \mathcal{L} = \mathcal{L}_{sup} + w_u \mathcal{L}_{con} + w_f \mathcal{L}_f,
\end{equation}
where $w_u$ and $w_f$ are weighting hyperparameters.

\section{Proof for Section \ref{sec:theory} and Section~\ref{sec:alalysis_method}.}
\label{appendix_theory}
% In particular,  Section \ref{proof:proposition1} first discuss the relationship between classifier $f_\theta$, dataset $\{\mathcal{X}; \mathcal{U}\}$ and the baseline image $\mathcal{I}$. Then, Section~\ref{proof:theorem_single} proves Theorem~\ref{theorem:g_operation} for revealing the baseline image's intrinsic debiasing effect, and Section~\ref{proof:bias_trem} presents the details of the bias term and running statistics.

\subsection{Proof of Proposition \ref{proposition:ssl}}
\label{proof:proposition_ssl}
\textbf{Proposition 1. } \textit{For an labeled (unlabeled) sample $x_b$ ($u_b$) with target $y_b$ ($\hat q_b^t=\arg\max_c q_{b,c}^t$), where $q_b^t=\pi_{\theta^t}(y| \alpha(u_b))$. The one-step learning dynamics of SSL decompose as
\begin{equation}
\small{
\begin{gathered}
    \Delta \log \pi_\theta^{t,\mathrm{sup}}(y\mid x_o;x_b) = -\eta \mathcal{T}^t(x_o) \mathcal{K}^t(x_o, \alpha(x_b)) \mathcal{G}_{\mathrm{sup}}^t(\alpha(x_b), y_b) + \mathcal{O}\left(\eta^2 \|\nabla_\theta \mathbf{z}(\alpha(x_b))\|_{\text{op}}^2\right)\\
    \Delta \log \pi_\theta^{t,\mathrm{con}}(y\mid x_o;u_b) = -\eta \mathcal{T}^t(x_o) \mathcal{K}^t(x_o, \mathcal{A}(u_b)) \mathcal{G}_{\mathrm{con}}^t(\mathcal{A}(u_b),\hat q_b^t) + \mathcal{O}\left(\eta^2 \|\nabla_\theta \mathbf{z}(\mathcal{A}(u_b))\|_{\text{op}}^2\right)
\end{gathered}}
\tag{6}
\end{equation}
\textit{where $\mathcal{K}^t(x_o, \alpha(x_b))$ and $\mathcal{K}^t(x_o, \mathcal{A}(u_b))$ are eNTK evaluations of the logit network $\mathbf{z}(\cdot)=g_\theta(\cdot)$, with different inputs. $\mathcal{G}_{\mathrm{sup}}^t(\alpha(x_b), y_b)=\nabla_{\mathbf{z}} \mathcal{L}_{\mathrm{sup}}(\alpha(x_b), y_b)|_{\mathbf{z}^t}$ and $\mathcal{G}_{\mathrm{con}}^t(\hat q_b, \mathcal{A}(u_b))=\nabla_{\mathbf{z}} \mathcal{L}_{\mathrm{con}}(\hat q_b, \mathcal{A}(u_b))|_{\mathbf{z}^t}$, respectively.}}

\begin{proof}
We aim to derive the one-step learning dynamics of SSL for both supervised and contrastive terms. Suppose that we want to observe the model's prediction on an ``observing example" $x_o$. Starting from Eq.~\eqref{eq:ssl_ledy_deco}, we first approximate $\log \pi^{t+1}(y|x_o)$ using first Taylor expansion (with a slight abuse of notation, we write $\pi^t$ for $\pi^{t}_{\theta}$):
    \[\log \pi^{t+1}(y|x_o) = \log \pi^{t}(y|x_o) + <\nabla \log \pi^{t}(y|x_o), \theta^{t+1}-\theta^t>+\mathcal{O}(\|\theta^{t+1}-\theta^t\|^2).\]
Then, assuming the model updates its parameters using SGD calculated by a ``labeled updating example" $(x_b, y_b)$ and an ``unlabeled updating example" $(\mathcal{A}(u_b), \hat{q}^t_b)$.

Thus, for for \textbf{supervised learning dynamics}, we have, we have
\begin{align*}
    \Delta \log \pi^{t+1, sup}(y \mid x_o; x_b) &= \log \pi^{t+1, sup}(y \mid x_o; x_b) - \log \pi^{t, sup}(y \mid x_o; x_b) \\
    &= \nabla_{\theta} \log \pi^{t}(y \mid x_o) \big|_{\theta^t} (\theta^{t+1} - \theta^t) + \mathcal{O}(\|\theta^{t+1} - \theta^t\|^2)
\end{align*}
Assuming this step is driven solely by supervised loss, we plug in the definition of SGD and repeatedly use the chain rule:
\begin{align*}
\nabla_\theta \log \pi_\theta^{t}(y\mid x_o)\big|_{\theta^t}
(\theta^{t+1}-\theta^t)= \nabla_\theta \log \pi_\theta^{t}(x_o)\big|_{\theta^t}
\left(-\eta\nabla_\theta \mathcal L_{\mathrm{sup}}(\alpha(x_b))\big|_{\theta^t}\right)^{\top}\\
= \big(\nabla_{z}\log \pi_\theta^{t}(x_o)\big|_{z^t}
\nabla_\theta z^t(x_o)\big|_{\theta^t}\big) \big(-\eta\nabla_\theta \mathcal L_{sup}(\alpha(x_b))\big|_{\theta^t}\big)\\
= \nabla_{z}\log \pi_\theta^{t}(x_o)\big|_{z^t}\nabla\theta z^t(x_o)\big|_{\theta^t}
\Big(-\eta
\big(\nabla_{z}\mathcal L_{\mathrm{sup}}(\alpha(x_b))\big|_{z^t}
\nabla_\theta z^t(\alpha(x_b))\big|_{\theta^t}\big)\Big)^{\top}\\
= -\eta \nabla_{\mathbf z}\log \pi_\theta^{t}(x_o)\big|_{\mathbf z^t}
\Big[ \nabla_\theta \mathbf z^t(x_o)\big|_{\theta^t}
\big(\nabla_\theta \mathbf z^t(\alpha(x_b))\big|_{\theta^t}\big)^{\top}\Big](
\nabla_{\mathbf z} \mathcal L_{\mathrm{sup}}(\alpha(x_b))\big|_{\mathbf z^t})^{\top}\\
=-\eta \mathcal{T}^t(x_o)\mathcal{K}^t(x_o, \alpha(x_b)) \mathcal{G}^t(\alpha(x_b), y_b).
\end{align*}
Similarly, for \textbf{consistency learning dynamics}, the only difference is that the update sample is changed from $\alpha(x_b)$ to $\mathcal A(u_b)$, and the loss is changed from $\mathcal L_{sup}$ to $\mathcal L_{con}(\mathcal A(u_b), \hat q_b^t)$. Note that $\hat q_b^t=\arg\max_c q_{b,c}^t$ is treated as a constant in this small step (stop-grad), so the gradient can still be directly calculated w.r.t. $z$. Thus,
\[
\theta^{t+1}
= \theta^t - \eta\nabla_\theta
\mathcal L_{con}(\mathcal A(u_b), \hat q_b^t)\big|_{\theta^t}.
\]
Parallel to the above derivation, we obtain
\begin{align*}
\Delta \log \pi_\theta^{t,con}(y \mid x_o;u_b)&=-\eta \mathcal{T}^{t}(x_o)
\underbrace{\nabla_\theta z^t(x_o)\big|_{\theta^t}
\big(\nabla_\theta z^t(\mathcal A(u_b))\big|_{\theta^t}\big)^{\top}}_{
\mathcal K^t(x_o,\mathcal A(u_b))}
\underbrace{\nabla_{z}
\mathcal L_{\mathrm{con}}(\hat q_b^t,\mathcal A(u_b))\big|_{\mathbf z^t}}_{
\mathcal G_\mathrm{con}^t(\mathcal A(u_b),\hat q_b^t)}\\
&+\mathcal O\left(\eta^2
|\nabla_\theta \mathbf z(\mathcal A(u_b))|_{\text{op}}^2\right).
\end{align*}
\end{proof}

\subsection{Proof of Proposition \ref{proposition:color_general}}
\label{proof:proposition:color_general}
\textbf{Proposition 2. } (Invariance of baseline image under affine normalization)
\textit{Let $\mathcal{I} = k \cdot \mathbf{1}_d$ be a baseline image, where $k \in \{0,1,\dots,255\}$ and $\mathbf{1}_d \in \mathbb{R}^d$ is an all-one vector. Suppose the output of the first hidden transformation is normalized by a normalization layer (\emph{e.g.}, BatchNorm, LayerNorm, InstanceNorm, or GroupNorm) with affine parameters $(\boldsymbol{W}_2, \boldsymbol{b})$. 
Then the logits $h(\mathcal{I})$ are independent of $k$ and reduce to}
\begin{equation}
    h(\mathcal{I}) = \boldsymbol{b}, 
    \quad \pi_\theta(\mathcal{I}) = \texttt{Softmax}(\boldsymbol{b}).
    \tag{9}
\end{equation}
\begin{proof}
Consider a neural network with two layers: the first layer is a linear transformation, and the second layer is a normalization layer followed by an affine transformation. For an input $\mathcal{I} \in \mathbb{R}^d$, assume the model has the following structure:
    \[h^{(1)}(\mathcal{I}) = \sigma(\boldsymbol{W}_1 \mathcal{I}); \quad 
  h(\mathcal{I}) = \texttt{BatchNorm}(h^{(1)}(\mathcal{I})) = \frac{ h^{(1)}(\mathcal{I}) - \mathbb{E}[ h^{(1)}(\mathcal{I})]}{\sqrt{\mathrm{Var}[ h^{(1)}(\mathcal{I})] + \epsilon}} \cdot \boldsymbol{W}_2 + \boldsymbol{b},
  \]
  Let the baseline image $\mathcal{I} = k \cdot \mathbf{1}_d$, where $\mathbf{1}_d$ is a vector of ones, and $k$ is a scalar. Our goal is to show that the output $h(\mathcal{I})$ for the baseline image is independent of $k$ and depends only on the bias term $\boldsymbol{b}$.
For the baseline image $\mathcal{I} = k \cdot \mathbf{1}_d$, the output of this neural network is:
    \[
    h^{(1)}(\mathcal{I}) = \sigma(\boldsymbol{W}_1 \cdot (k \cdot \mathbf{1}_d))= \sigma(k \cdot \boldsymbol{W}_1 \mathbf{1}_d) = \sigma(k \cdot \boldsymbol{w}) .
    \]
    where $\boldsymbol{w} = \boldsymbol{W}_1 \mathbf{1}_d \in \mathbb{R}^m$, which is a constant vector. 
    We see that the output of the first layer depends on $k$ and the constant vector $\boldsymbol{w}$, and it is passed through the activation function $\sigma$.
    Now, consider the effect of the BatchNorm layer. For the baseline image $\mathcal{I} = k \cdot \mathbf{1}_d$, since $h^{(1)}(\mathcal{I}) = \sigma(k \cdot \boldsymbol{w})$ is a constant vector, the mean $\mathbb{E}[ h^{(1)}(\mathcal{I})]$ and variance $\mathrm{Var}[ h^{(1)}(\mathcal{I})]$ are constants that depend only on $\boldsymbol{w}$.From first principles, we can set $k=0$
\end{proof}

Note that if the input $\mathcal{I}$ is random Gaussian noise or a batch mean, the situation would be different.
\begin{itemize}[left=0em]
    \item \textbf{Gaussian Noise}. Let $\mathcal{I}_{n}\sim \mathcal{N}(0, \sigma^2) \in \mathbb{R}^d$ be a random Gaussian noise vector. After normalization:
    \[h(\mathcal{I}_{n}) = \frac{h^{(1)}(\mathcal{I}_{n}) - \mathbb{E}(h^{(1)}(\mathcal{I}_{n}))}{\sqrt{Var[h^{(1)}(\mathcal{I}_{n})]+\epsilon}}\cdot \boldsymbol{W}_2 + \boldsymbol{b}
  \]
  Since the input pixel values are random, the mean and variance of the first-layer output depend on the noise distribution characteristics. These statistics fluctuate with the randomness of the input, in contrast to the baseline image, where the normalized output is solely determined by the bias term $\boldsymbol{b}$.
  \item \textbf{Batch Mean.}  Let $\mathcal{I}_{\mu} = \frac{1}{B}\sum_{i=1}^B x_i \in \mathbb{R}^d$ be the batch mean vector. After normalization, the affine transformation:
  \[
  h(\mathcal{I}_{\mu}) = \frac{h^{(1)}(\mathcal{I}_{\mu}) - \mathbb{E}(h^{(1)}(\mathcal{I}_{\mu}))}{\sqrt{Var[h^{(1)}(\mathcal{I}_{\mu})]+\epsilon}}\cdot \boldsymbol{W}_2 + \boldsymbol{b}
  \]
Unlike Gaussian noise images, the mean input of data within a batch does not contain complete randomness; the mean and variance are relatively stable but still do not solely depend on the $\boldsymbol{b}$.
\end{itemize}

\subsection{Proof of Theorem~\ref{theorem}}
\label{proof:theorem}
\textbf{Theorem 1.} (Bias as the conditional distribution prior) \textit{Assume the model $h(x)$ as characterized in Eq.~\eqref{eq:bn_replace} is trained using cross-entropy loss:
\begin{equation}
    \mathcal{L}=\mathbb{E}_{(x,y)}\big[-y^\top \log \texttt{Softmax}(h(x))\big].
    \tag{10}
\end{equation}
At a population risk minimizer $(\boldsymbol{W}_2^\star,\boldsymbol{b}^\star)$ we have
\begin{equation}
    \hat p^\star(x)=P(y\mid x), 
\qquad 
\hat p^\star(\mathcal{I})=\texttt{Softmax}\big(\boldsymbol{b}^\star\big) 
= P\big(y \,\big|\, \tfrac{ h^{(1)}(\mathcal{I}) - \mathbb{E}[ h^{(1)}(\mathcal{I})]}{\sqrt{\mathrm{Var}[ h^{(1)}(\mathcal{I})]+\epsilon}}=\mathbf{0}\big).
\tag{11}
\end{equation}
For the baseline image $\mathcal{I}$ in Proposition~\ref{proposition:color_general}, the baseline prediction thus coincides with the conditional class distribution at the normalized-zero feature state, capturing the class prior induced by the long-tailed training distribution.}
\begin{proof}
   Consider the two-layer network \( f_\theta(x) = \frac{h^{(1)}(x) - \mathbb{E}[h^{(1)}(x)]}{\sqrt{\mathrm{Var}[h^{(1)}(x)] + \epsilon}} \cdot \gamma + \beta \), where \( h^{(1)}(x) = \boldsymbol{W}_1 x \). The cross-entropy loss is given by:
\[
\mathcal{L} = \mathbb{E}_{(x,y)}\left[- y^\top \log \texttt{Softmax}(h(x))\right].
\]
Minimizing the population risk results in \( \hat{p}^\star(x) = \texttt{Softmax}(h(x)) = P(y \mid x) \).

For the baseline image \( \mathcal{I} \), we analyze the model's output:
\[
\hat{p}^\star(\mathcal{I}) = \texttt{Softmax}(\boldsymbol{b}^\star).
\]
Since \( \frac{h^{(1)}(\mathcal{I}) - \mathbb{E}[h^{(1)}(\mathcal{I})]}{\sqrt{\mathrm{Var}[h^{(1)}(\mathcal{I})]}} \to 0 \) for a baseline image with no input signal, the model's output is determined solely by \( \boldsymbol{b}^\star \).

Thus, we have:
\[
P\left( y \mid \frac{h^{(1)}(\mathcal{I}) - \mathbb{E}[h^{(1)}(\mathcal{I})]}{\sqrt{\mathrm{Var}[h^{(1)}(\mathcal{I})] + \epsilon}} = 0 \right) = \texttt{Softmax}(\boldsymbol{b}^\star).
\]
Finally, we conclude that the baseline prediction corresponds to the conditional class distribution at the normalized-zero feature state, capturing the class prior induced by the long-tailed distribution.
\end{proof}

\subsection{Proof of Proposition \ref{proposition:step_dynamic}}
\label{proof:proposition_baseline}
\textbf{Proposition 3. } \textit{Let $\pi=\mathtt{Softmax}(z)$ and $z = g_{\theta}(x)$. The one-step dynamics decompose as}
    \begin{equation}
            \Delta \log \pi^t(y \mid \mathcal{I}) = -\eta \mathcal{T}^t(\mathcal{I}) \mathcal{K}^t(\mathcal{I},x) \mathcal{G}^t(x,y) + \mathcal{O}(\eta^2\|\nabla_\theta z(x)\|^2_\mathrm{op}),
    \tag{13}
    \end{equation} 
\textit{where $\mathcal{T}^t(\mathcal{I}) = \nabla_z \log_{\pi^t} (\mathcal{I}) = I- \mathbf{1}\pi_{\theta^t}^T(\mathcal{I})$, $\mathcal{K}^t(\mathcal{I},x) = (\nabla_{\theta}z(\mathcal{I})|_{\theta^t})(\nabla_{\theta}z(x)|_{\theta^t})^T$ is the empirical neural tangent kernel of the logit network z, and $\mathcal{G}^t(x, y)=\nabla_z\mathcal{L}(x,y)\mid_{z^t}$.}

\begin{proof}
    Inspired by the analysis of the learning dynamic of ~\citep{ren2022better, ren2024learning}. In this work, we want to observe the classifier's prediction on the baseline image $\mathcal{I}$. Starting from Eq~\eqref{eq:decompostion}, we first approximate $\log \pi^{t+1}(y \mid \mathcal{I})$ using first-order Talyor expansion, with slightly abused symbols, we use $\pi^{t}$ to represent $\pi_{\theta^{t+1}}$, $x$ to represent labeled sample $x^n_b$ and $u$ to represent unlabeled sample $u^m_b$:

    \begin{equation*}
        \log\pi^{{t+1}}(y|\mathcal{I}) = \log\pi^{{t}}(y|\mathcal{I}) + <\nabla \log\pi^{{t}}(y|\mathcal{I}), \theta^{t+1}-\theta^{t}> + \mathcal{O}(\|\theta^{t+1}-\theta^t\|^2)
    \end{equation*}

    Then, assuming the model updates its parameters using SGD calculated by an ``updating labeled example" $(x, y)$ or an ``updating unlabeled example" $u$, we can rearrange the terms in the above equation to get the following expression:

    \begin{equation*}
        \Delta \log\pi^{{t}}(y| \bigi{}) = \log\pi^{{t+1}}(y| \bigi{}) - \log\pi^{{t+1}}(y| \bigi{}) = \nabla_{\theta}\log\pi^{{t}}(y| \bigi{})|_{\theta^t} (\theta^{t+1}-\theta^{t}) + \mathcal{O}(\|\theta^{t+1}-\theta^t\|^2),
    \end{equation*}

    To  evaluate the leading term, we first take a labeled sample as an example plug in the definition of SGD, and repeatedly use the chain rule:
    \begin{equation}
        \begin{aligned}
        \nabla_{\theta}\log\pi^{{t}}(y| \bigi{})|_{\theta^t} (\theta^{t+1}-\theta^{t}) &= (\nabla_{z}\log\pi^{{t}}(y| \bigi{})|_{z^t})(-\eta \nabla_{\theta}\mathcal{L}(x)|_{\theta^t})^T\\
        &=(\nabla_{z}\log\pi^{{t}}(y| \bigi{})|_{z^t})(-\eta \nabla_{\theta}\mathcal{L}(x)|_{z^t} -\nabla_{\theta}z^t(x)|_{\theta^t})^T\\
        &=-\eta \nabla_{z}\log\pi^{{t}}(\bigi{})|_{z_t} [\nabla_{\theta}z(\mathcal{I})|_{\theta^t}(\nabla_{\theta}z(x)|_{\theta^t})^T](\nabla_z\mathcal{L}(x)|_{z^t})^T\\
        &=-\eta \mathcal{T}^t(\mathcal{I}) \mathcal{K}^t(\mathcal{I},x) \mathcal{G}^t(x,y)
        \end{aligned}
    \end{equation}
\end{proof}
    % For the higher-order term, using as above that 
    % \begin{equation*}
    %     \theta^{t+1} - \theta^{t} = -\eta \nabla_{\theta}z^t(x)|^T_{\theta_t}\mathcal{G}^t(x,\hat{y})
    % \end{equation*}
    % and noting that, since the residual term $\mathcal{G}^t$ is usually bouned, we have that
    % \begin{equation*}
    %     \mathcal{O}(\|\theta^{t+1}-\theta^t\|^2) = \mathcal{O}(\eta^2\|(\nabla_{\theta}z^t(x)|\theta^t)^T\|^2_{op}\|\mathcal{G}^t(x, \hat{y})\|^2_{op}) = \mathcal{O}(\eta^2\|\nabla_{\theta}z(x)\|^2_{op})
    % \end{equation*}
\subsection{More about analyzing the dynamics of the logits debiasing algorithm}
\label{app:more_algorithm}
\subsubsection{Per-step decomposition of resampling}
Resampling is another widely used strategy for mitigating class imbalance in long-tail semi-supervised learning. Instead of modifying the loss, resampling adjusts the data distribution by altering the frequency with which each class is drawn. Let $\mathbb{P}_{\mathrm{rs}}(x \in c) = r^c$ denote the (possibly normalized) sampling ratio for class $c$, which determines the probability of selecting samples from that class during training. Then the per-step update of the log-posterior under resampling becomes
\begin{equation}
\label{eq:per_resampling}
\small{
\begin{gathered}
\Delta \log \pi_\theta^{t,\mathrm{rs}}(y\mid \mathcal{I};x) 
= -\eta\, \mathcal{T}^t(\mathcal{I}) \,\tilde{\mathcal{K}}^t_{rs}(\mathcal{I}, x; r^c)\, \tilde{\mathcal{G}}_{rs}^t(x, y; r^c)
+ \mathcal{O}\!\left(\eta^2 \lVert\nabla_\theta \mathbf{z}(x)\rVert_{\mathrm{op}}^2\right),
\end{gathered}}
\end{equation}
where$\tilde{\mathcal{K}}^t_{rs}(\mathcal{I}, x; r^c) = \mathbb{E}_{x\sim r^c}\!\left[\mathcal{K}^t(\mathcal{I}, x)\right]$, $\tilde{\mathcal{G}}^t_{rs}(x, y; r^c) = \mathbb{E}_{x\sim r^c}\!\left[\mathcal{G}^t(x,y)\right].$ This decomposition highlights that resampling influences learning solely through changing the expectation measure. The modified kernel $\tilde{\mathcal K}^{t}_{rs}$ reshapes how training samples transfer influence to the test input, while the modified residual term $\tilde{\mathcal G}^{t}_{rs}$ reweights the magnitude of each update. Increasing the sampling ratio of tail classes therefore amplifies their effective contribution at every step, accelerating their representation and decision boundary updates to match those of head classes, \emph{i.e.} offering a direct dynamical explanation for the effectiveness of resampling in long-tail regimes.

\subsubsection{Per-step decomposition of CDMAD}
In this section, we use the loss function of a specific method in logits adjustment, CDMAD~\citep{Lee2024cdmad}, as a case study and integrate it into the learning dynamics framework we propose. The consistency loss of CDMAD as:
\begin{equation}
   \mathcal{L}_{con}(u_b, \hat{q}, \tau; \theta)  = \frac{1}{\mu B}\sum^{B}_{b=1} \mathbbm{1}(\max(\hat{q}_b) \geq \tau)\text{\textbf{H}}(P_{\theta}(y|\mathcal{A}(u_b), q^*_b),
\end{equation}
where $\mathbf{H}$ is cross-entropy loss, $q^*_b =\argmax (\pi_\theta(y|\alpha(u_b)) - \pi_\theta(y|\mathcal{I}))$.
Our framework reveals that CDMAD operates through two complementary dynamical mechanisms:
\begin{equation}
\begin{gathered}
    \Delta \log \pi_\theta^{t}(y\mid \mathcal{I}) = -\eta \mathcal{T}^t(\mathcal{I}) (\mathcal{K}^t(\mathcal{I}, \alpha(x_b)) \mathcal{G}_{\mathrm{sup}}^t(\alpha(x_b), y_b) +\\
    \mathcal{K}^t(\mathcal{I}, \mathcal{A}(u_b))\mathcal{G}_{\mathrm{con}}^t(\mathcal{A}(u_b), \alpha(x_b))) 
    + \mathcal{O}^2
\end{gathered}
\end{equation}
According to the analysis of~\cite{xing2025lcgc}, $\mathcal{G}^t$ using the baseline image enhances the balance of the base SSL model implicitly utilizing the integrated gradient flow $\nabla_{\theta} \mathcal{L}_{\text{Con}} = \sum_b \left( \sum_{i=1}^d \text{IntegratedGrads}_i(u_b) \right) \nabla g_b + \sum_b q_{\mathcal{A}, b} \frac{\partial q_{\mathcal{A}, b}}{\partial \theta}$. We now place \( \nabla_{\theta} \mathcal{L}_{\text{Con}} \) directly into \( \mathcal{G}_{\mathrm{con}}^t \) to capture the influence of the consistency loss on the model’s update dynamics. The updated \( \mathcal{G}_{\mathrm{con}}^t \) is:
\begin{equation}
\mathcal{G}_{\mathrm{con}}^t(\mathcal{A}(u_b), \alpha(x_b)) = \sum_b \left( \sum_{i=1}^d \text{IntegratedGrads}_i(u_b) \right) \nabla g_b + \sum_b q_{\mathcal{A}, b} \frac{\partial q_{\mathcal{A}, b}}{\partial \theta}.
\end{equation}
The term $\mathcal{G}_{\mathrm{con}}^t(\mathcal{A}(u_b), \alpha(u_b))$ now explicitly includes the consistency loss gradient $\nabla_{\theta} \mathcal{L}_{\text{Con}}$, which involves the Integrated Gradients over the perturbations $u_b$ as well as the change in model output probabilities.

\subsection{Effect of the baseline image for guiding data pruning} 
\label{proof:pruning}
The training objective can be interpreted as the minimization of the empirical risk $\mathcal{L}$. Assuming that all labeled samples $x^n_b$ from $\mathcal{X}$ and unlabeled samples $u^m_b$ from $\mathcal{U}$ are drawn from continuous distributions $\rho^l(x^n_b)$ and $\rho^u(u^m_b)$, respectively, the training objective can be formulated as:
\begin{equation}
    \argmin_{\theta\in \Theta} \underset{x^n_b \in \mathcal{X}, u^m_b \in \mathcal{U}}{\mathbb{E}} [\mathcal{L}(x^n_b, u^m_b; \theta)] = \int_{x^n_b}\mathcal{L}_{sup}(x^n_b, \theta) \rho^l(x^n_b) dx^n_b + \int_{u^m_b}\mathcal{L}_{con}(u^m_b, \theta) \rho^l(u^m_b)du^m_b.
\end{equation}
After applying a data pruning policy, we sample $x^n_b$ and $u^m_b$ to obtain the labeled pruned subset $\mathcal{S}^l_t$ and the unlabeled pruned subset $\mathcal{S}^u_t$, according to the labeled pruning probabilities $\mathcal{P}^l_t(x^n_b)$ and unlabeled pruning probabilities $\mathcal{P}^u_t(u^m_b)$, respectively. For the labeled samples, we directly optimize over the pruned subset $\mathcal{S}^l_t$ without reweighting the loss terms. Notably, the class-aware pruning probability $r_c = \pi_{\theta}(\mathcal{I})_c$ inherently adjusts $\mathcal{S}^l_t$ toward an asymptotically balanced class distribution. By retaining more samples from minority classes (lower $r_c$) and pruning more samples from majority classes (higher $r_c$), the pruned subset $\mathcal{S}^l_t$ naturally mitigates class imbalance. As a result, even without explicit rescaling, the empirical risk over $\mathcal{S}^l_t$ approximates:

\begin{equation}
    \argmin_{\theta\in \Theta} \underset{x^n_b \in \mathcal{S}^l_t}{\mathbb{E}} \left[ \mathcal{L}_{sup}(x^n_b, \theta) \right] \propto \frac{1 - \mathcal{P}^l_t(x^n_b)}{c^l_t} \int_z \mathcal{L}_{sup}(x^n_b, \theta) \rho_l(x^n_b) dx^n_b,
\end{equation}
where $c^l_t = \mathbb{E}_{x^n_b \sim \rho_l}[1 - \mathcal{P}^l_t(x^n_b)]$. The term $\frac{1 - \mathcal{P}^l_t(z)}{c^l_t}$ acts as an \textit{implicit reweighting} due to the class-aware pruning policy. For unlabeled samples, pruning with uniform probability $r$ and rescaling losses by $\gamma_t(u) = \frac{1}{1 - \mathcal{P}^u_t(u)}$ yields
\begin{equation}
    \argmin_{\theta\in \Theta} \underset{u^m_b \in \mathcal{S}^u_t}{\mathbb{E}} \left[ \gamma_t(u^m_b)\mathcal{L}_{con}(u^m_b, \theta) \right] \propto \frac{1}{c^u_t} \int_z \mathcal{L}_{con}(u^m_b, \theta) \rho^l(u^m_b) du^m_b,
\end{equation}
where $c^u_t = \mathbb{E}_{u^m_b \sim \rho_u}[1 - \mathcal{P}^u_t(u^m_b)]$. Crucially, even with uniform pruning rates, the interplay of consistency regularization and confidence thresholding ensures $\mathcal{S}^u_t$ to be implicitly balanced, thus training on $\mathcal{S}^u_t$ with rescaled factor $\gamma_t(u^m_b)$ could achieve a better result as training on the $\mathcal{U}$.

\section{More about the baseline image}
\label{app:baseline_image}
\subsection{More detail about the selection of baseline image}
\label{app:baseline_image_section}
\paragraph{Sensitivity of different baseline images $\mathcal{I}$.} We further examined the sensitivity of \ours{} to the choice of baseline image by conducting ablation studies on CIFAR-10-LT with different types of inputs, including noise, dataset means, and solid colors. Table~\ref{tab:diff_baseline_image} shows that solid-color images consistently outperform noise or mean-based baselines. Among them, white and black images deliver the strongest results.
\begin{table}[tb!]
  \centering
  \caption{Comparison of bACC/GM on CIFAR-10-LT under different baseline images.}
    \resizebox{0.7\linewidth}{!}{
    \begin{tabular}{lccc}
    \toprule
    FixMatch+\ours{} & \multicolumn{3}{c}{CIFAR-10-LT} \\
    \midrule
    Type of baseline & $ \gamma_l = \gamma_u = 100$ & & $ \gamma_l = 100, \gamma_u = 150$ \\
    \midrule
    Noise & 77.7 / 76.8 & & 76.7 / 75.8 \\
    Batch Means & 78.0 / 76.1 & & 76.7 / 74.2 \\
    Red & 83.5 / 83.2 & & 82.2 / 81.7 \\
    Green & 83.7 / 83.3 & & 81.5 / 81.0 \\
    Blue & 84.5 / 84.2 & & 83.1 / 82.6 \\
    Gray & 84.1 / 83.7 & & 82.3 / 81.9 \\
    White & 84.2 / 83.8 & & 82.4 / 82.0 \\
    Black & \textbf{84.8} / \textbf{84.4} & & \textbf{83.8} / \textbf{83.4} \\
    \bottomrule
    \end{tabular}}%
  \label{tab:diff_baseline_image}%
\end{table}%
\subsection{Detail of the bias term and running statistics}
\label{proof:bias_trem}
\paragraph{Effects of bias term.} When the bias term $\beta$ of the BN layer is frozen and equal to 0, $h(\mathcal{I})$ becomes $\gamma *(\langle \mathbf{w}, k \rangle-\mathbb{E}[\langle \mathbf{w}, k \rangle])/\sqrt{\text{Var}[\langle \mathbf{w}, k \rangle]}$ which is the same as the Eq.\eqref{eq:h_i} except for a bias term. Ignoring the running statistics strategy, the form of $h(\mathcal{I})$ only depends on the $\beta$. As a result, $h(\mathcal{I})$ becomes $h(\mathcal{I}) \to 0$ during training and $h(\mathcal{I}) \to -\gamma*\mathbb{E}_{mom}[\langle \mathbf{w}, x_b \rangle]/{\sqrt{\text{Var}_{mom}[\langle \mathbf{w}, x_b \rangle]}}$ during testing. This shows that the $g^*_{\theta}$ operation has no effect in the training phase and only eliminates the impact of the unbalanced running means in the testing phase. This will affect the ability to benefit $h$ from $g^*_{\theta}$, as shown in Table.~\ref{tab:example}.

\paragraph{Effects of running statistics.} When we do not keep running estimates, batch statistics are instead used during evaluation time as well. The form of $h(\mathcal{I})$ becomes $h(\mathcal{I}) \to \beta$ both training and testing. We can rewrite $g^*_{\theta}(x_t) =  \gamma * ({\langle \mathbf{w}, x_t \rangle-\mathbb{E}[\langle \mathbf{w}, x_t \rangle]})/{\sqrt{\text{Var}[\langle \mathbf{w}, x_t \rangle]}}$. On the other hand, as $h(\mathcal{I}) \to 0$, the benefit of $g^*_{\theta}$ is also vanishes, also shown in Table.~\ref{tab:example}.

We then extend our results to a non-linear neural network, thus we have the following corollary:

\begin{table}[htbp!]
    \centering
    \caption{Comparison of bACC/GM on CIFAR-10-LT.}
    \begin{tabular}{ccccc}
    \toprule
    \toprule
        Metric & With original $g^*_{\theta}$ & $g^*_{\theta}$ without $\beta$ & $g^*_{\theta}$ without $\mathbf{x}_{mom}$ &  $g^*_{\theta}$ without $\beta$  \& $\mathbf{x}_{mom}$ \\
        \midrule
        bACC &83.6 ± 0.46 & 80.92 ± 0.02{\textcolor{OrangeRed}{\footnotesize $\downarrow$2.68}}  & 71.63 ± 0.35{\textcolor{OrangeRed}{\footnotesize $\downarrow$11.97}} & 64.01 ± 0.14{\textcolor{OrangeRed}{\footnotesize $\downarrow$19.59}} \\
        GM &83.1 ± 0.57 & 80.37 ± 0.23{\textcolor{OrangeRed}{\footnotesize $\downarrow$2.73}} & 67.85 ± 0.51{\textcolor{OrangeRed}{\footnotesize $\downarrow$15.25}} & 54.48 ± 0.36{\textcolor{OrangeRed}{\footnotesize $\downarrow$28.62}} \\
    \bottomrule
    \bottomrule
    \end{tabular}
    \label{tab:example}
\end{table}

\section{More details about \ours}

\begin{figure}[tb!]
    \centering
    \includegraphics[width=1\linewidth]{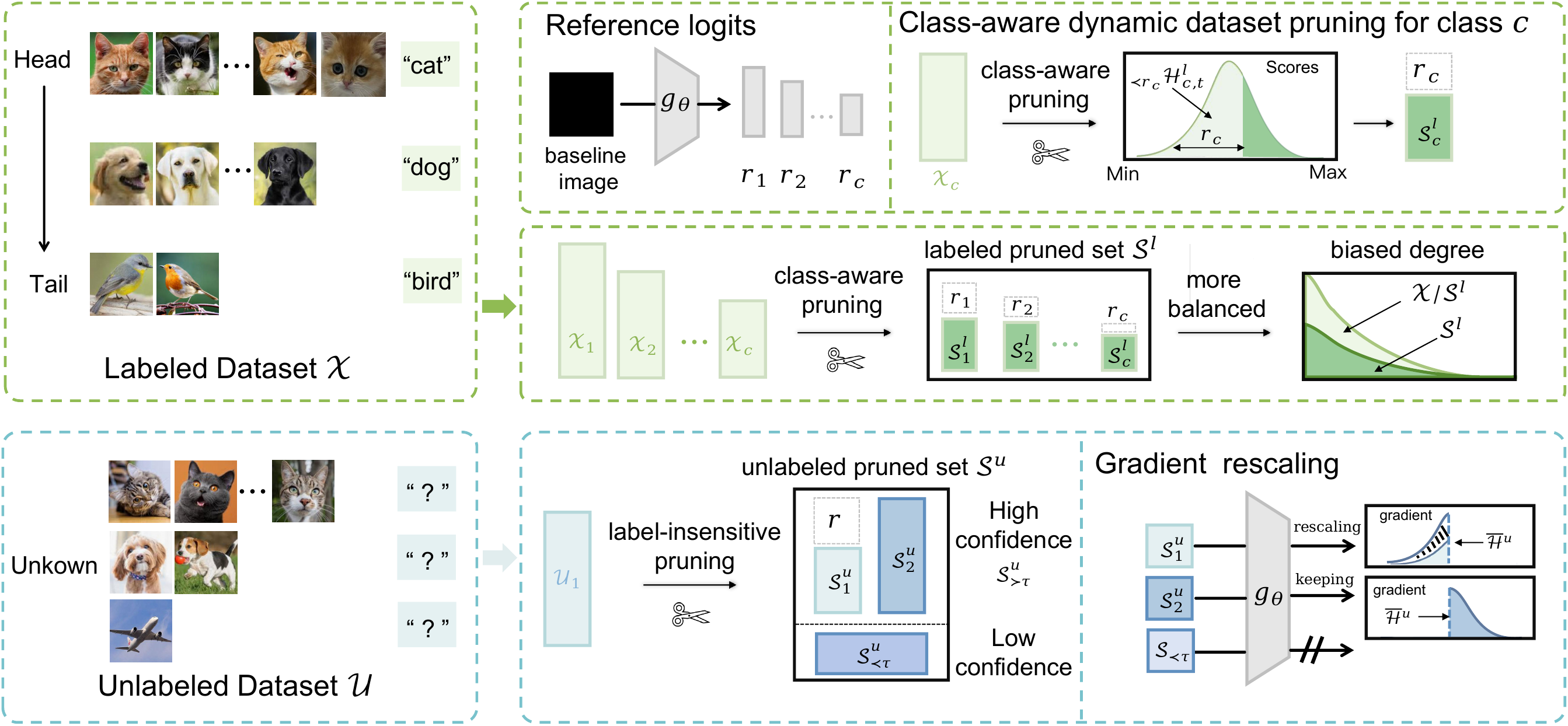}
    \caption{Illustration of the proposed \ours{} framework. \ours{} mainly consists of two operations, named labeled pruning and unlabeled pruning. $\bm{H}^l_{{\prec r_c},t}$ and $\bm{\bar{H}}^u_t$ denote the adaptive thresholds of scores of labeled samples and unlabeled samples, with slight abuse of symbols. $\mathcal{S}^{u}_{\prec \tau}$ denote the low confidence unlabeled sample which $p^*(u^m_b) \geq \tau$. Labeled pruning provides a class-aware pruning policy for each sample from class $c$. Unlabeled pruning provides a random pruning policy from the original unlabeled $\mathcal{U}$ and uses a gradient rescaling strategy ($\times 1/(1 - r) $ for which samples from $s^u_1$ is selected to pruning) to keep the approximately same gradient expectation.}
    \label{fig:pipeline}
    \vspace{-0.4cm}
\end{figure}

\subsection{More about labeled pruning}
\label{app:labeled_pruning}
Specifically, we exploit the pruning policy to prune samples based on their scores. Then, for the pruned labeled samples, their scores remain unmodified as previously. For the remaining samples, their scores are updated by the losses in the current epoch. To ensure dynamic adaptation:
\begin{equation}
    \mathcal{H}_{c,t+1}^{l}(x^n_b) =\left\{
                            \begin{array}{rcl}
                            \mathcal{H}_{c,t}^{l}(x^n_b)       &      & x^n_b \in \mathcal{X} \textbackslash \mathcal{S}^l,\\
                            \mathcal{L}_{sup}(x^n_b)       &      & x^n_b \in \mathcal{S}^l.\\
                            \end{array} \right.
\end{equation}
where $\mathcal{S}^l$ denotes the pruned subset formed for labeled datasets. 

\subsection{More about unlabeled pruning}
\label{app:unlabeled_pruning}
For a remaining sample with score ${\mathcal{H}_{t}^{u}(u^m_b) < \bar{\mathcal{H}}_{t}^{m}}$, whose corresponding pruning probability is $r$, its gradient is scaled to $1 / (1 - r)$ times of the original, otherwise the gradient remains unchanged. 
The score $\mathcal{H}_{t+1}^{u}(u^m_b)$ is derived from the consistency regularization loss values $\mathcal{L}_{con}(\alpha(u^m_b), \mathcal{A}(u^m_b))$ for unlabeled data points. To enhance pseudo-label reliability, we further apply a confidence threshold $\tau$, where only samples with $p^*(u^m_b) > \tau$ contribute to $\mathcal{L}_{con}$, where $\mathcal{L}_{con} =  \frac{1}{B}\sum^{B}_{b=1}\mathbb{I}(p^*(u^m_b) > \tau)\mathbf{H}(P_{\theta}(y|\mathcal{A}(u^m_b), \hat{q}_b)$. Thus, we formulate the update of $\mathcal{H}_{t+1}^{u}(u^m_b)$ as:
\begin{equation}
    \mathcal{H}_{t+1}^{u}(u^m_b) =\left\{
                            \begin{array}{rcl}
                            \mathcal{H}_{t}^{u}(u^m_b)       &      & u^m_b \in \mathcal{U} \textbackslash \mathcal{S}^u,\\
                            \mathcal{L}_{con}(u^m_b)       &      & u^m_b \in \mathcal{S}^u.\\
                            \end{array} \right.
\end{equation}
where $\mathcal{S}^u$ denotes the pruned subset formed for labeled datasets. 
\textbf{Initialization:} at $t=0$, scores $\mathcal{H}_{t}^{u}$ and $\mathcal{H}_{t}^{l}$ are all set to $\{1\}$, as no prior loss is available.

\section{Pseudo code of the proposed algorithm}
\label{pseudo}
The pseudo-code that describes the \ours{} is presented in Algorithm~\ref{alg:labeled} and Algorithm~\ref{alg:unlabeled}.

\begin{algorithm*}[htbp!]
\caption{\ours{} for Labeled Data Selection}
\textbf{Input:} Labeled set of $N$ samples $\mathcal{X}=\{(x^n, y^n)\}^N_{n=1}$, score set of the samples $\mathcal{V}^l$, number of classes $n_c$, biased degree $b$ \\
\textbf{Output:} Labeled pruned set $\mathcal{S}^l$ ($\mathcal{S}^l \subseteq \mathcal{X}$, $|\mathcal{S}^l| <= |\mathcal{X}|$)
\begin{algorithmic}[1]

\State $\mathcal{S}^l \gets \emptyset$ \Comment{Initialize the labeled pruned set}
\For{\( c = 0 \) to \( n_c-1 \)}
    \State $\mathcal{I}_c \gets \left\{ i \mid y_i = c \right\}$ 
    \State $\mathcal{V}_c^l \gets \left\{ \mathcal{V}^l_i \mid i \in \mathcal{I}_c \right\}$ 
    \Comment{Select scores of class $c$ samples}
    \State $k_c \gets \lfloor (1-b_c) \cdot |\mathcal{X}_c| \rfloor$ 
    \Comment{Compute target pruned set size of class $c$ based on biased degree}
    \State $\mathcal{I}_c^{\text{top}} \gets \text{TopK}(\mathcal{I}_c, \mathcal{V}_c^l, k_c)$ 
    \Comment{Select indices of top-$k_c$ scored samples}
    \State $\mathcal{S}^l \gets \mathcal{S}^l \cup \mathcal{I}_c^{\text{top}}$ 
\EndFor
\State \textbf{return} $\mathcal{S}^l$ 
\end{algorithmic}
\label{alg:labeled}
\end{algorithm*}

\begin{algorithm*}[htbp!]
\caption{\ours{} for Unlabeled Data Selection}
\textbf{Input:} Unlabeled set of $M$ samples $\mathcal{U} = \{(u^m)\}^M_{m=1}$, score set of the samples $\mathcal{V}^u$, pruning ratio $r$, weight of samples $w$ \\
\textbf{Output:} Unlabeled pruned set $\mathcal{S}^l$ ($\mathcal{S}^l \subseteq \mathcal{U}$, $|\mathcal{S}^u| <= |\mathcal{U}|$)
\begin{algorithmic}[1]
\State $\mathcal{S}^u \gets \emptyset$ \Comment{Initialize the unlabeled pruned set}

\State $\mathcal{I}_0 \gets \{ i \mid \mathcal{V}^u_i = 0 \}$ \Comment{Select low confidence samples}
\State $\mathcal{I}_{\neq 0} \gets \{ i \mid \mathcal{V}^u_i \neq 0 \}$ \Comment{Select high confidence samples}
\State $\mathcal{S}^u \gets \mathcal{S}^u \cup \mathcal{I}_0$ 

\State $\mu \gets \text{Mean}(\{ \mathcal{V}^u_i \mid i \in \mathcal{I}_{\neq 0} \})$
\State $\mathcal{I}_\text{well} \gets \{ i \in \mathcal{I}_{\neq 0} \mid \mathcal{V}^u_i < \mu \}$ \Comment{Select well-learned samples}
\State $\mathcal{I}_\text{poor} \gets \mathcal{I}_{\neq 0} \setminus \mathcal{I}_\text{well}$ \Comment{Select poorly-learned samples}
\State $\mathcal{S}^u \gets \mathcal{S}^u \cup \mathcal{I}_\text{poor}$ 

\State $\mathcal{I}_\text{select} \gets$ Randomly select $\lfloor (1-r) \cdot |\mathcal{I}_\text{well}| \rfloor$ samples from $\mathcal{I}_\text{well}$
\State $\mathcal{S}^u \gets \mathcal{S}^u \cup \mathcal{I}_\text{select}$ 

\State $w_i \gets 1,\ \forall i \in \{1, \dots, M\}$ \Comment{Reset weights}
\State $w_i \gets \frac{1}{1-r},\ \forall i \in \mathcal{I}_\text{select}$ \Comment{Rescaling}

\State \textbf{return} $\mathcal{S}^u$ 

\end{algorithmic}
\label{alg:unlabeled}
\end{algorithm*}

\section{Experimental Settings}
\label{appendix:setup}

\subsection{Models}
Unless otherwise specified, we adopt Wide ResNet (WRN)~\citep{zagoruyko2016wide} as the default backbone following common practice in semi-supervised learning. Additionally, we also evaluate Tiny Vision Transformers (TinyViT)~\citep{wu2022tinyvit} on CIFAR-10-LT and CIFAR-100-LT. For ImageNet-127, we employ ResNet-50~\citep{he2016deep} as the backbone to ensure scalability on large-scale datasets.

% \subsection{Baselines}
% The classification performance of the \ours{} was compared with those of the following algorithms: 1. vanilla algorithm - Deep CNN trained with cross-entropy loss, 2. CIL algorithms - Resampling~\citep{japkowicz2000class}, LDAM-DRW~\citep{cao2019learning}, and cRT~\citep{Kang2020Decoupling}, 3. SSL algorithms - FixMatch~\citep{sohn2020fixmatch}, and 4. CISSL algorithms - DARP, DARP+LA,
% DARP+cRT~\citep{kim2020distribution}, CReST, CReST+LA \citep{Wei2023ACR}, ABC ~\citep{Lee2021abc}, CoSSL~\citep{Fan2022cossl}, DASO~\citep{oh2022daso}, SAW, SAW+LA and SAW+cRT~\citep{lai2022Smoothed} combined with FixMatch. Adsh\citep{guo2022class}, DebiasPL~\citep{wang2022debiased}, UDAL\citep{lazarow2023unifying} and L2AC~\citep{wang2023imbalanced} combined with FixMatch. We report the performance of the baseline algorithms reported in Tables of ~\cite{lai2022Smoothed} and Fan et al.~\citep{Fan2022cossl} when it is reproducible; the performance measured using the uploaded code was reported otherwise.

\subsection{Implementation details}
All experiments are trained for 500 epochs with 500 steps per epoch, resulting in a total of 250{,}000 iterations. We use Stochastic Gradient Descent (SGD)~\citep{bottou2012stochastic} with a fixed learning rate of $\eta = 0.0015$ and a batch size of 32. The pruning ratio of the unlabeled dataset is set to 0.7, and the parameter $\delta$ is aligned with InfoBatch~\citep{qin2024infobatch}, fixed at 0.875. For CIFAR-10-LT, the largest labeled class contains 1,500 samples, while the largest unlabeled class contains 3,000 samples. For CIFAR-100-LT, the largest labeled and unlabeled classes contain 150 and 300 samples, respectively. For STL-10-LT, the largest labeled class contains 450 samples. To assess classification performance, we adopt balanced accuracy (bACC)~\citep{huang2016learning} and geometric mean (GM)~\citep{kubat1997addressing} for CIFAR-10-LT and STL-10-LT. For CIFAR-100-LT and ImageNet-127, evaluation is conducted solely using bACC. Each experiment is repeated three times on RTX 4090 GPUs to ensure reproducibility, and we report both the mean and the standard error.

\section{Additional experimental results}
\label{appendix:exp}

\subsection{Baselines}
The classification performance of the \ours{} was compared with those of the following algorithms: 1. vanilla algorithm - Deep CNN trained with cross-entropy loss, 2. CIL algorithms - Resampling~\citep{Japkowicz2000resampling}, LDAM-DRW~\citep{cao2019learning}, and cRT~\citep{Kang2020Decoupling}, 3. SSL algorithms - FixMatch~\citep{sohn2020fixmatch}, and 4. CISSL algorithms - DARP, DARP+LA,
DARP+cRT~\citep{kim2020distribution}, CReST, CReST+LA \citep{Wei2023ACR}, ABC ~\citep{Lee2021abc}, CoSSL~\citep{Fan2022cossl}, DASO~\citep{oh2022daso}, SAW, SAW+LA and SAW+cRT~\citep{lai2022Smoothed} combined with FixMatch. Adsh\citep{guo2022class}, DebiasPL~\citep{wang2022debiased}, UDAL\citep{lazarow2023unifying} and L2AC~\citep{wang2023imbalanced} combined with FixMatch. We report the performance of the baseline algorithms reported in Tables of ~\cite{lai2022Smoothed} and Fan et al.~\citep{Fan2022cossl} when it is reproducible; the performance measured using the uploaded code was reported otherwise.

\subsection{Additional Results on CIFAR-10-LT} 
\label{app:cifar-10lt}
Following prior works~\citep{xing2025lcgc, Lee2024cdmad, guo2024dcrp}, we evaluate under a more challenging scenario where the unlabeled set is imbalanced in the reverse direction of the labeled set (Table~\ref{tab:reverse}). Across all settings, \ours{} delivers consistent gains by applying balanced pruning on the labeled data. Notably, when combined with FixMatch, \ours{} surpasses CDMAD by more than 1\% in both bACC and GM. Similar benefits are observed for FlexMatch and FreeMatch: \ours{} improves FlexMatch by approximately 1.1--1.3\% and FreeMatch by around 0.9--1.5\%.

\begin{table}[htbp!]
\centering
\caption{Comparison of bACC/GM on CIFAR-10-LT($\gamma_l =100, \gamma_u = 100$(reversed)).}
\footnotesize
% \resizebox{\textwidth}{!}{
 \begin{tabular}{lcccccc}\toprule
% \begin{tabular} \toprule
\multirow{2}{*}{Algorithm} & \multicolumn{6}{c}{CIFAR-10-LT, $\gamma_l =100, \gamma_u = 100$(reversed)} \\ 
\cmidrule{2-7}
 & ABC & SAW & SAW+LA & SAW+cRT & CDMAD &   \ours{} \\ \hline 
FixMatch+ & $69.5 / 66.8 $ & $72.3/68.7$ &  $74.1/72.0$ & $75.5/73.9$ & $77.1/75.4$ & \cellcolor[rgb]{ .949,  .949,  .949}\textbf{78.2} / \textbf{76.7}  \\ 
FlexMatch+ & $- / -$ & $-/-$ &  $-/-$ & $-/-$ & $67.2/65.1$ & \cellcolor[rgb]{ .949,  .949,  .949}\textbf{68.3} / \textbf{66.4}  \\ 
FreeMatch+ & $- / -$ & $-/-$ &  $-/-$ & $-/-$ & $68.5/66.4$ & \cellcolor[rgb]{ .949,  .949,  .949}\textbf{69.4} / \textbf{67.9}  \\ 
\bottomrule
% \end{tabular}
\end{tabular}
% }
\label{tab:reverse}
\end{table}

We also compared the classification performance of CDMAD with ACR~\citep{xiang2020learning} and BaCon, two recent CISSL algorithms. From Table.~\ref{tab:balance}, we can observe that CDMAD outperforms both ACR and BaCon.

\begin{table}[htbp!]
\centering
\caption{Comparison of bACC/GM on CIFAR-10-LT}
\footnotesize
% \resizebox{\textwidth}{!}{
 \begin{tabular}{lcc}
 \toprule
 Algorithm/CIFAR-10-LT & $\gamma_l=\gamma_u=100$ & $\gamma_l=\gamma_u=1$\\
 \midrule
 FixMatch+ACR & 81.8 / 81.4 & 85.6 / 85.3\\
 FixMatch+BaCon & 84.4 / 84.0 & 82.0 / 81.5\\
 FixMatch+CDMAD & 83.6 / 83.1 & 87.5 / 87.1\\
 \rowcolor[rgb]{ .949,  .949,  .949} FixMatch+\ours{} & \textbf{84.8} / \textbf{84.4} & \textbf{87.9} / \textbf{87.5}\\
 \bottomrule
\end{tabular}
% }
\label{tab:balance}
\end{table}

\begin{figure*}[tb!]
    \centering
    \includegraphics[width=0.85\linewidth]{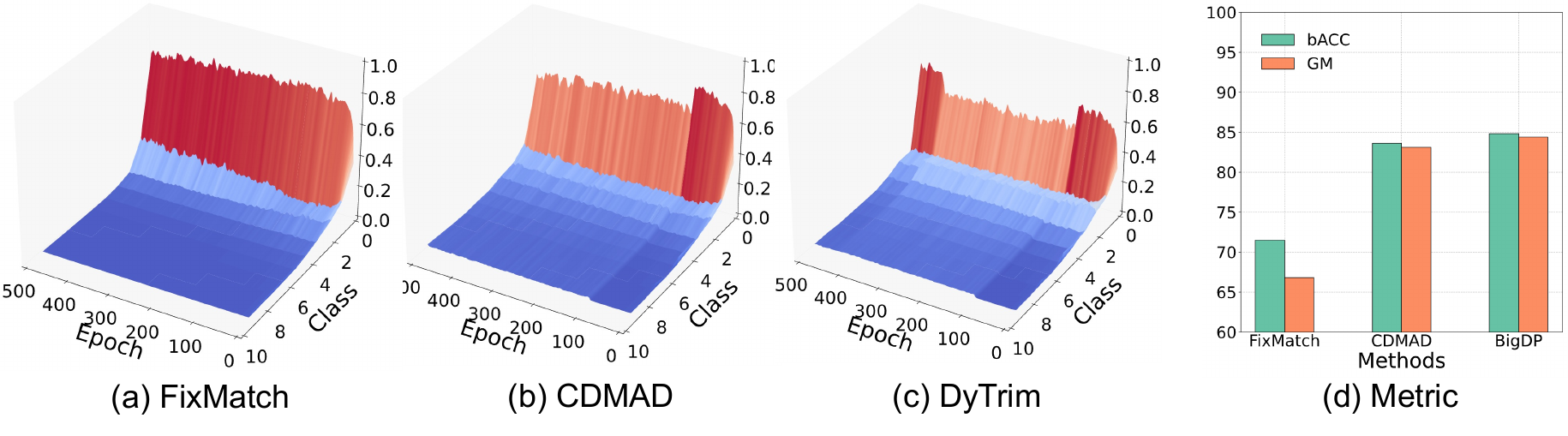}
     \vspace{-0.4cm}
    \caption{(a), (b) and (c) present the change of $\pi_{\theta} (\mathcal{I})$ for the baseline image on CIFAR-10-LT with $\gamma_l=\gamma_u=100$ across different methods. (d) present the bACC and GM on those methods.}
    \label{fig:logits_and_metric}
    \vspace{-0.4cm}
\end{figure*}

To further validate the balanced classification effect of \ours{}, we visualized the dynamics of baseline image logits during training as shown in Figure.~\ref{fig:logits_and_metric} (a), (b) and (c). The results clearly showed that \ours{} significantly reduced classifier bias induced by class imbalance.

\subsection{Results on Small-ImageNet-127}

\begin{wraptable}{R}{5cm}
    \centering
    \caption{Comparison of bACC on Small-ImageNet-127.}
    \resizebox{1\linewidth}{!}{
    \begin{tabular}{lcc}\toprule
\multirow{2}{*}{Algorithm} & \multicolumn{2}{c}{Small-ImageNet-127} \\ 
\cmidrule{2-3}
 & $32 \times 32$ & $64 \times 64$  \\ 
\hline
FixMatch & $29.7$ & $42.3$ \\
~w/+DARP & $30.5$ & $42.5$ \\
~w/+DARP+cRT & $39.7$ & $51.0$ \\
~w/+CReST & $32.5$ & $44.7$ \\
~w/+CReST+LA & $40.9$ & $55.9$ \\
~w/+ABC & $46.9$ & $56.1$ \\
~w/+CoSSL & $43.7$ & $53.8$ \\
~w/+CPE   & $47.8$ & $58.2$\\
~w/+CDMAD & $48.4$ & $59.3$ \\
\rowcolor[rgb]{ .949,  .949,  .949}~w/+\ours{} & \bf{50.6} & \bf{60.0} \\
\bottomrule
\end{tabular}}
\label{tab:smallimagenet}
\end{wraptable}

ImageNet-127 is a naturally long-tailed dataset, widely used to evaluate class-imbalanced semi-supervised learning (CISSL) algorithms at scale. Following standard protocol, we downsample images to resolutions of $32\times32$ and $64\times64$ using the box interpolation method from the Pillow library, and randomly select 10\% of the training samples as labeled data. Under such limited supervision and class imbalance, learning discriminative representations and a balanced classifier is particularly challenging. As reported in Table.~\ref{tab:smallimagenet}, \ours{} achieves the highest balanced accuracy (bACC) at both resolutions, outperforming the strongest baseline CDMAD by 3.0\% at $32\times32$ and 1.2\% at $64\times64$. These improvements demonstrate the robustness of our method, especially under low-resolution and low-resource conditions. The performance gain at lower resolutions suggests that \ours{} effectively handles the compounded difficulty of reduced visual fidelity and severe label scarcity. This makes it a promising solution for real-world applications where high-resolution data and abundant labels are often unavailable.

\subsection{More results on ImageNet-LT}
\label{app:imagenet224}
ImageNet-LT~\citep{liu2019large} is a long-tailed variant of ImageNet, constructed to exhibit a heavy class-imbalance that better reflects real-world data distributions. To assess the scalability of our method on large-resolution inputs ($ 224 \times 224 $), we conducted experiments on ImageNet-LT. Due to hardware constraints, we set the batch size to 2.

As shown in Table \ref{tab:imagenet}, CDMAD yields a substantial improvement over the FixMatch baseline, increasing bACC from 20.0\% to 35.4\%, which highlights the effectiveness of incorporating class-distribution modeling under long-tailed imbalance. Building upon the same baseline, our method further pushes performance to 37.2\%, achieving the best result among all compared approaches. Notably, the improvement over CDMAD remains consistent despite their strong performance, suggesting that our approach introduces complementary benefits rather than merely overlapping with prior re-balancing techniques.

\subsection{Results on dynamic data pruning experiment} 

Recently, Infobatch~\citep{qin2024infobatch} provides a no-bias dynamic data pruning method. In this section, we compare it with \ours{} in the framework of CISSL. The experiment is conducted on the CIFAR-10-LT dataset, comparing the settings of $\gamma_l=\gamma_u$ and $\gamma_l \neq \gamma_u$. Specifically, we directly apply the pruning policy of InfoBatch to labeled samples and unlabeled samples without distinction, and the results are shown in the Table.~\ref{tab:info_cifar10_equal_known} and Table.~\ref{tab:info_cifar10_equal_unknown}. It can be seen that compared with the proposed method, the pruning policy directly combined with InfoBatch is not consistently effective in all settings. In particular, when $\gamma_l \neq \gamma_u$, it will cause a decrease in accuracy, which is caused by the mismatch in the distribution of labeled samples and unlabeled samples.

\begin{table}[htbp!]
  \centering
  \caption{Comparison of bACC/GM on CIFAR-10-LT.}
    \resizebox{1\linewidth}{!}{
    \begin{tabular}{lccccc}
    \toprule
    \multirow{2}[4]{*}{Algorithm} & \multicolumn{5}{c}{CIFAR-10-LT ($\gamma = \gamma_l = \gamma_u$, $\gamma_u$ is assumed to be known)} \\
\cmidrule{2-6} & $\gamma_l=50,\gamma_u=50$ & & $\gamma_l=100,\gamma_u=100$ & & $\gamma_l=150,\gamma_u=150$ \\
    \midrule
    FixMatch & \ms{79.2}{0.33} / \ms{77.8}{0.36} & & \ms{71.5}{0.72} / \ms{66.8}{1.51} & & \ms{68.4}{0.15} / \ms{59.9}{0.43} \\
    
    ~w/+CDMAD & \ms{87.3}{0.12} / \ms{87.0}{0.15} & & \ms{83.6}{0.46} / \ms{83.1}{0.57} & & \ms{80.8}{0.86} / \ms{79.9}{1.07}\\
    
    ~w/+InfoBatch* & \ms{87.2}{0.18} / \ms{86.9}{0.19} & & \ms{84.1}{0.61} / \ms{83.7}{0.69} & & \ms{81.6}{0.45} / \ms{80.9}{0.59}\\
   
     \rowcolor[rgb]{ .949,  .949,  .949}~w/+\ours{} & \msb{88.0}{0.31} / \msb{87.8}{0.32} & & \msb{84.8}{0.48} / \msb{84.4}{0.51} & & \msb{82.0}{0.09} / \msb{81.3}{0.03}\\
    \bottomrule
    \end{tabular}}%
  \label{tab:info_cifar10_equal_known}%
\end{table}%

\begin{table}[htbp!]
  \centering
  \caption{Comparison of bACC/GM on CIFAR-10-LT ($\gamma_l \neq \gamma_u$, $\gamma_u$ is assumed to be unknown).}
    \resizebox{1\linewidth}{!}{
    \begin{tabular}{lccccc}
    \toprule
    \multirow{2}[4]{*}{Algorithm} & \multicolumn{5}{c}{CIFAR-10-LT ($\gamma_l =100$, $\gamma_u$ = Unknown)} \\
\cmidrule{2-6} & $ \gamma_u=1$ & & $ \gamma_u=50$ & & $ \gamma_u=150$\\
    \midrule
    FixMatch &\ms{68.9}{1.95} / \ms{42.8}{8.11}  & &\ms{73.9}{0.25} / \ms{70.5}{0.52} & & \ms{69.6}{0.60} / \ms{62.6}{1.11}\\
    
    ~w/+CDMAD & \ms{87.5}{0.46} / \ms{87.1}{0.50} & & \ms{85.7}{0.36} / \ms{85.3}{0.38} & & \ms{82.3}{0.23} / \ms{81.8}{0.29} \\
    
    ~w/+InfoBatch* & \ms{86.4}{0.63} / \ms{85.9}{0.73} & & \ms{85.5}{0.33} / \ms{85.1}{0.37} & & \ms{83.3}{0.08} / \ms{82.8}{0.11}\\
   
     \rowcolor[rgb]{ .949,  .949,  .949}~w/+\ours{} & \msb{88.9}{0.88} / \msb{88.6}{1.03} & & \msb{86.4}{0.43} / \msb{86.0}{0.43}& & \msb{83.8}{0.34} / \msb{83.4}{0.33}\\
    \bottomrule
    \end{tabular}}%
  \label{tab:info_cifar10_equal_unknown}%
\end{table}%

\subsection{Ablation study}
\paragraph{Effectiveness of each component.} We conducted ablation studies on CIFAR-10-LT to assess the contribution of each component in \ours, varying the hyperparameter $\gamma = \gamma_l = \gamma_u$ across 50, 100, and 150. As shown in Table.~\ref{tab:ablation}, the best performance was achieved when both labeled and unlabeled pruning were combined with rescaling. Removing rescaling led to a bACC drop of 0.8–2.1 points across $\gamma$ values. Excluding either pruning component also reduced performance (\emph{e.g.}, -0.5 and -0.3 at $\gamma=50$ without unlabeled or labeled pruning, respectively). Removing both pruning strategies resulted in the most significant degradation. These results highlighted the complementary benefits of pruning and rescaling.

\begin{table}[tb!]
\centering
\caption{Ablation study for the proposed algorithm on CIFAR-10-LT.}
\resizebox{0.95\linewidth}{!}{%
\begin{tabular}{ccc|cc|cc|cc}
\toprule
Labeled & Unlabeled & \multirow{2}{*}{Rescaling} & \multicolumn{2}{c|}{$\gamma_l = \gamma_u = 50$} & \multicolumn{2}{c|}{$\gamma_l = \gamma_u = 100$} & \multicolumn{2}{c}{$\gamma_l = \gamma_u = 150$} \\
\cmidrule(lr){4-5} \cmidrule(lr){6-7} \cmidrule(l){8-9}
 Pruning & Pruning &  & bACC & GM & bACC & GM & bACC & GM \\
\midrule
 \rowcolor[rgb]{ .949,  .949,  .949} &  &  & 87.3 & 87.0 & 83.6 & 83.1 & 80.8 & 79.9 \\
 \checkmark &  &  & 87.5 & 87.2 & 84.4 & 84.0 & 81.3 & 80.6 \\
 \rowcolor[rgb]{ .949,  .949,  .949} & \checkmark & \checkmark & 87.7 & 87.4 & 84.0 & 83.6 & 81.4 & 80.6 \\
 \checkmark & \checkmark &  & 87.2 & 86.9 & 83.6 & 83.1 & 79.9 & 79.0 \\
\rowcolor[rgb]{ .949,  .949,  .949} \checkmark & \checkmark & \checkmark & \textbf{88.0} & \textbf{87.8} & \textbf{84.8} & \textbf{84.4} & \textbf{82.0} & \textbf{81.3} \\
\bottomrule
\end{tabular}}
\label{tab:ablation}
\vspace{-0.4cm}
\end{table}

\begin{figure*}[tb!]
\centering
    \includegraphics[width=1.0\linewidth]{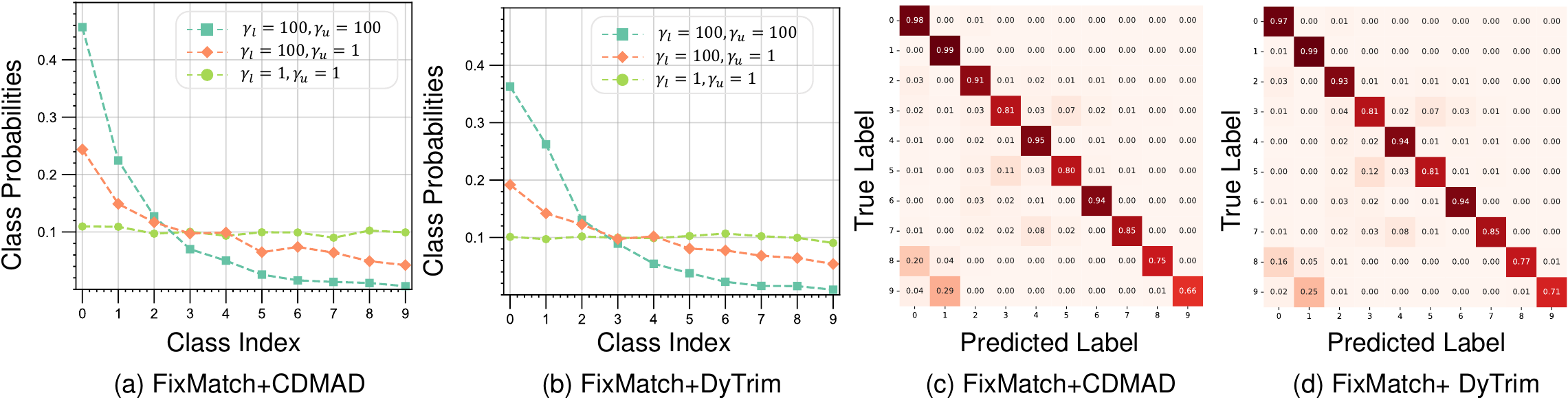}
\caption{(a) and (b) present the $\pi_{\theta} (\mathcal{I})$ using the CDMAD and \ours{}. (c) and (d) present the confusion matrices of the class predictions on test samples on CIFAR-10-LT ($\gamma_l=\gamma_u=100$).}
  \label{bias_heatmap}
  % \vspace{-15pt}
\end{figure*}

\subsection{Qualitative analyses}

Since the baseline image could implicitly reflect the bias of the classifier, we argued that by customizing dynamic data pruning methods for labeled and unlabeled data, \ours{} significantly reduced classifier bias while improving performance. To verify this claim, in Figure.~\ref{bias_heatmap} (a) and (b), we analyzed the class probabilities predicted on the baseline image using FixMatch+\ours{}, trained on CIFAR-10-LT under various settings. We observed that classifiers trained with \ours{} consistently produced more balanced predictions than CDMAD across all settings, with improved accuracy on tail classes. We defined $r$ as the probability of pruning an unlabeled sample $u_b^m$ when $\mathcal{H}_{t}^{u}(u^m_b) < \bar{\mathcal{H}}_{t}^{m}$ and $\max(P_{\theta}(y|\alpha(u^m_b))) \geq \tau$. In Figure.~\ref{fig:analyses}, we evaluated different pruning ratios for unlabeled samples on CIFAR-10-LT. Results showed that setting $r \geq 0.1$ yields higher performance across both architectures, indicating that \ours{} was relatively robust with respect to the hyperparameter $r$, with the best performance achieved when $r = 0.3$.

% \begin{wrapfigure}{r}{0.38\textwidth}
%     \centering
%     % \vspace{-0.8cm}
%     \includegraphics[width=0.75\linewidth]{figs/qualitative_analyses.pdf}
%     \caption{Evaluation curves of hyper-parameter $r$ on CIFAR-10-LT under bACC and GM.}
%     \label{fig:analyses}
%     % \vspace{-0.8cm}
%     \vspace{-10pt}
% \end{wrapfigure} 

\begin{figure*}[htbp!]
    \centering
    % \vspace{-0.8cm}
    \includegraphics[width=0.5\linewidth]{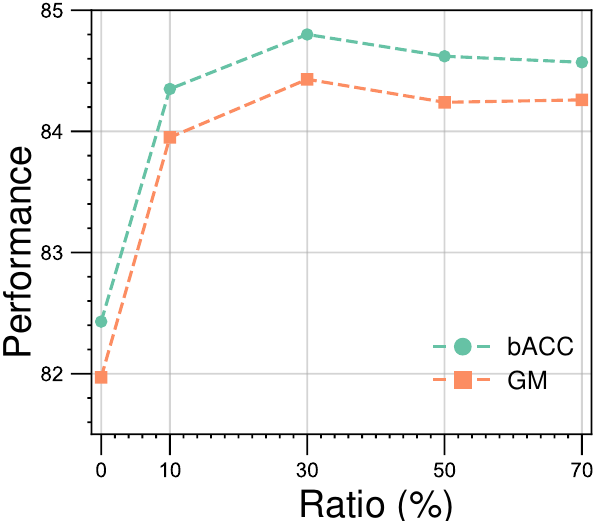}
    \caption{Evaluation curves of hyper-parameter $r$ on CIFAR-10-LT under bACC and GM.}
    \label{fig:analyses}
    % \vspace{-0.8cm}
    \vspace{-10pt}
\end{figure*} 

\subsection{Comparison of Class Distributions Before and After Pruning}

\begin{figure}[htbp!]
    \centering
    \includegraphics[width=1.0\linewidth]{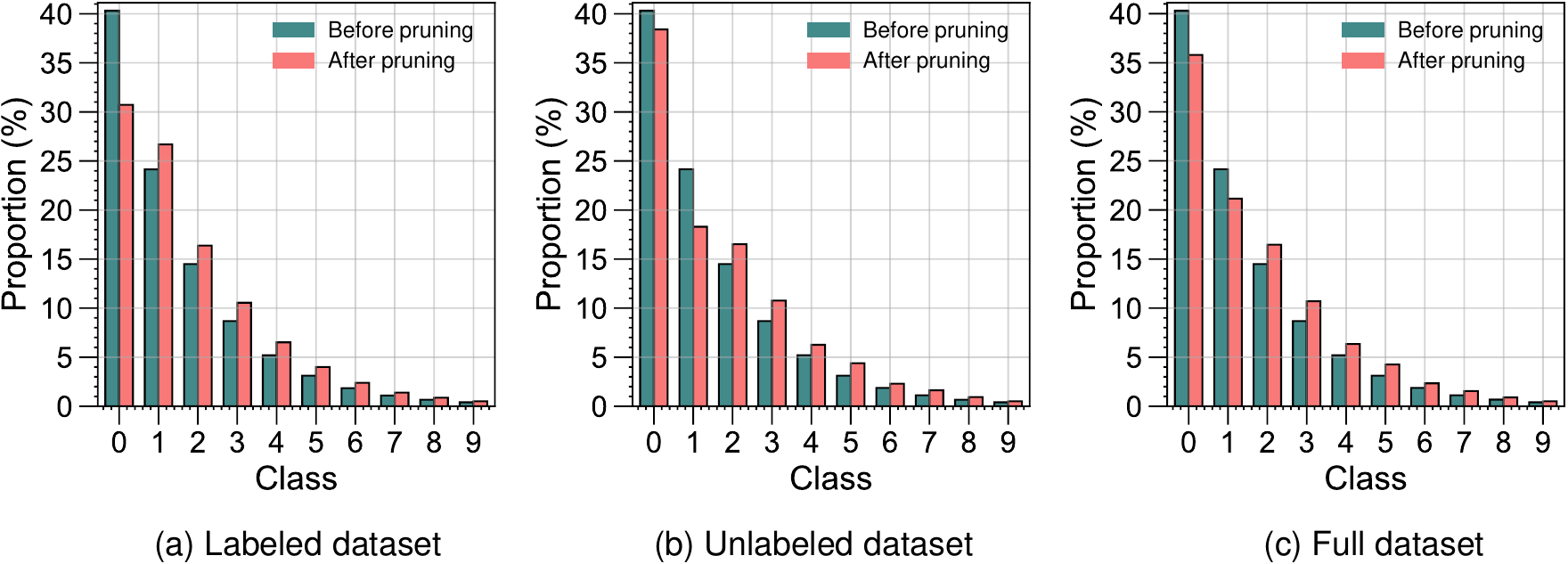}
    \caption{Comparison of class distribution before and after pruning across three datasets: (a) Labeled dataset, (b) Unlabeled dataset, (c) Full dataset.}
    \label{fig:class_distribution}
\end{figure}

Figure~\ref{fig:class_distribution} compares the class distributions before and after applying \ours{} on the labeled, unlabeled and full training sets. Across all three subsets, pruning consistently reduces the proportion of head classes while preserving or slightly increasing the relative proportion of tail classes. This produces a noticeably flatter long-tailed distribution. Unlike traditional pruning methods, which typically remove samples that contribute least to training progress, the behavior of \ours{} is different because the pruning decision is guided by baseline logits and the reliability of pseudo-labels. This tends to eliminate redundant head-class samples and low-quality unlabeled samples while rarely discarding the already scarce tail-class data. Consequently, the resulting effective training subset becomes more balanced without sacrificing essential information from tail classes.

\subsection{Analysis of sample selection frequency}

% \begin{wrapfigure}{l}{0.45\linewidth}
% \centering
%     \includegraphics[width=0.85\linewidth]{figs/bias_scaling.pdf}
% \caption{\textcolor{blue}{Comparison of class-probability distributions with and without scaling. }}
%   \label{fig:bias_scaling}
%   \vspace{-10pt}
% \end{wrapfigure}

% \begin{wrapfigure}{r}{0.45\linewidth}
%     \centering
%     \includegraphics[width=0.85\linewidth]{figs/selection_frequency.pdf}
%     \caption{\textcolor{blue}{Illustration of per-class maximum, average, and minimum sample selection frequencies during training.}}
%     \label{fig:selection_frequency}
%     \vspace{-10pt}
% \end{wrapfigure}
\begin{figure}[htp]
    \centering
    \begin{minipage}{0.45\linewidth}
        \centering
        
        \includegraphics[width=0.85\linewidth]{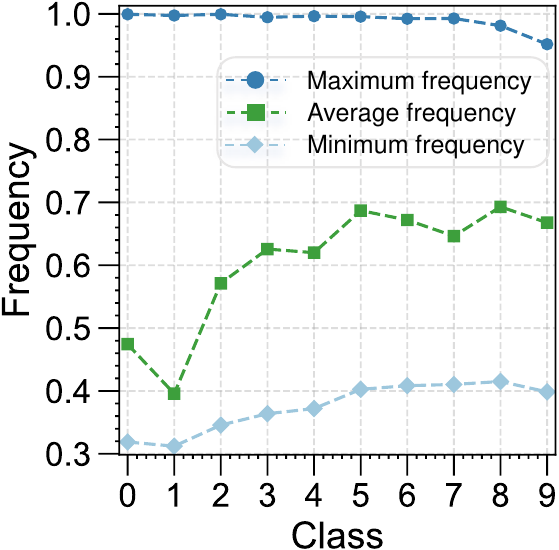}
        \caption{Illustration of per-class maximum, average, and minimum sample selection frequencies during training.}
        \label{fig:selection_frequency}
        
    \end{minipage}
    \hspace{0.05\linewidth} % Add space between images
    \begin{minipage}{0.45\linewidth}
        \centering
        \vspace{-10pt}
        \includegraphics[width=0.85\linewidth]{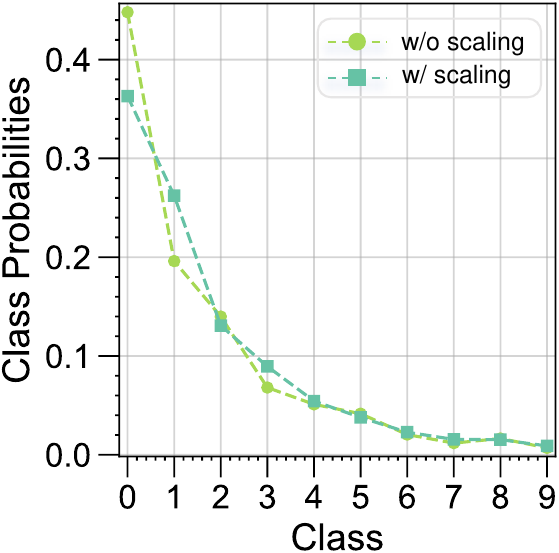}
        \caption{Comparison of class-probability distributions with and without scaling.}
        \label{fig:bias_scaling}
    \end{minipage}
\end{figure}

\begin{wraptable}{R}{7cm}
    \centering
    \vspace{-10pt}
  \caption{Comparison of bACC and GM on CIFAR-10-LT on fixed and dynamic scaling factors.}
    \resizebox{1\linewidth}{!}{
    \begin{tabular}{lccc}
    \toprule
    \multirow{2}[4]{*}{Algorithm} & \multicolumn{3}{c}{CIFAR-10-LT} \\
\cmidrule{2-4} & $\gamma_l=100,\gamma_u=100$ & & $\gamma_l=100,\gamma_u=1/100$ \\
    \midrule
    Fixed Scaling & 84.8 / 84.4 & & 78.2 / 76.7 \\
    Dynamic Scaling & \textbf{84.9} / \textbf{84.4} & & \textbf{78.9} / \textbf{78.1} \\
    \bottomrule
    \end{tabular}}%
  \label{tab:bias_scaling}%
\end{wraptable}

Figure~\ref{fig:selection_frequency} reports the maximum, average and minimum sample selection frequencies for each class. Three observations emerge clearly. First, the maximum frequency remains close to 1 for all classes, which indicates that each class contains at least a subset of highly informative samples that are almost always preserved during pruning. Second, the average frequency increases from head to tail classes, showing that \ours{} removes a larger fraction of redundant samples from majority classes while retaining more samples in minority classes. This behavior matches the intended effect of mitigating class dominance through selective pruning. Third, the minimum frequency stays within a narrow and relatively high range across all classes, suggesting that even the least frequently selected samples are not entirely discarded. This prevents the severe under-sampling of tail classes that often occurs in traditional pruning strategies.

\subsection{Effect of Scaling Strategies on Class-Bias}

Figure~\ref{fig:bias_scaling} compares the class probability distributions obtained with and without the proposed scaling strategy. Although the two curves differ for several head and mid-frequency classes, the overall decay pattern remains consistent, and the probabilities of head classes do not increase when scaling is applied. This shows that the scaling mechanism does not intensify the influence of high confidence samples and preserves the long-tailed structure shaped by \ours{}. 

Additionally, to provide each class with an adaptive scaling factor that assigns smaller scaling to head classes and larger scaling to tail classes, we further compare fixed and dynamic scaling in Table~\ref{tab:bias_scaling}. Dynamic scaling leads to higher bACC and GM under both matched and mismatched imbalance conditions, indicating that adapting the scaling factor to the current pruning state yields a more reliable correction for changes in the effective batch size. The dynamic scaling factor is computed as $1-\pi_{\theta} (\mathcal{I})_{\hat q_b} + 1/(1-r)$, which stabilizes the loss magnitude during training and prevents undesirable shifts toward majority class predictions.

\subsection{Dynamics of Sample Score Across Head and Tail Classes}

\begin{figure}[htbp!]
    \centering
    \includegraphics[width=0.9\linewidth]{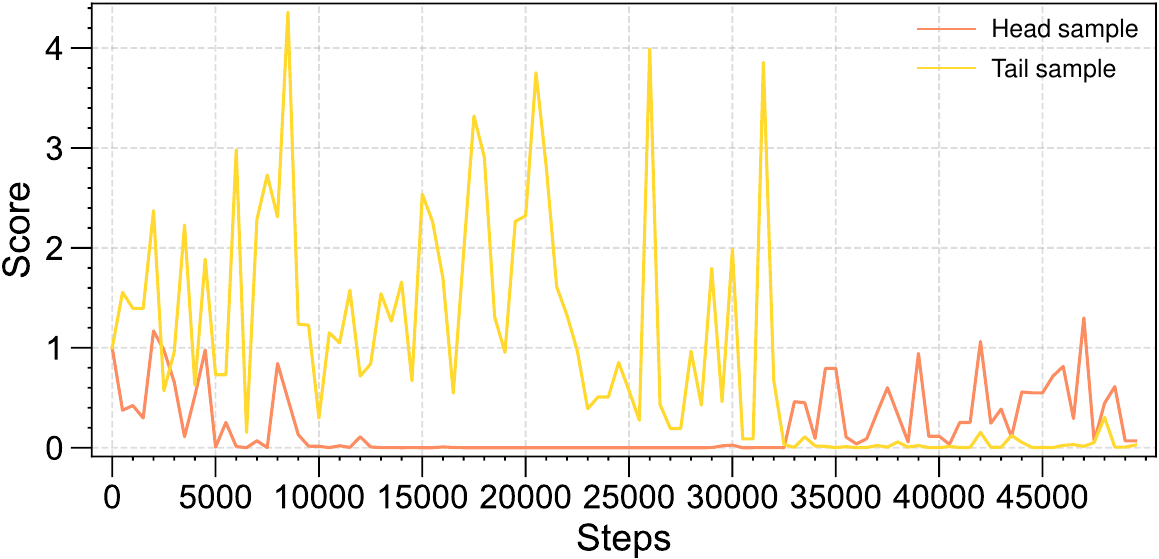}
    \caption{Scores of a representative head class sample and a representative tail class sample over the first 50,000 training steps, recorded every 500 steps.}
    \label{fig:head_tail_sample}
\end{figure}

Figure~\ref{fig:head_tail_sample} shows the dynamics of scores for one head class sample and one tail class sample over the first 50,000 training steps. The two trajectories exhibit a clear contrast. The tail class sample maintains consistently higher and more volatile scores throughout training, reflecting its larger contribution to reducing class bias and its higher utility for updating the classifier. In comparison, the head class sample quickly drops to very low scores and remains close to zero for most of training. This indicates that the head sample becomes saturated early and provides little additional information, which aligns with the design of \ours{} that aims to remove redundant head class samples.

\subsection{Pruning Dynamics Across Labeled and Unlabeled Datasets}

\begin{figure}[htbp!]
    \centering
    \includegraphics[width=1.0\linewidth]{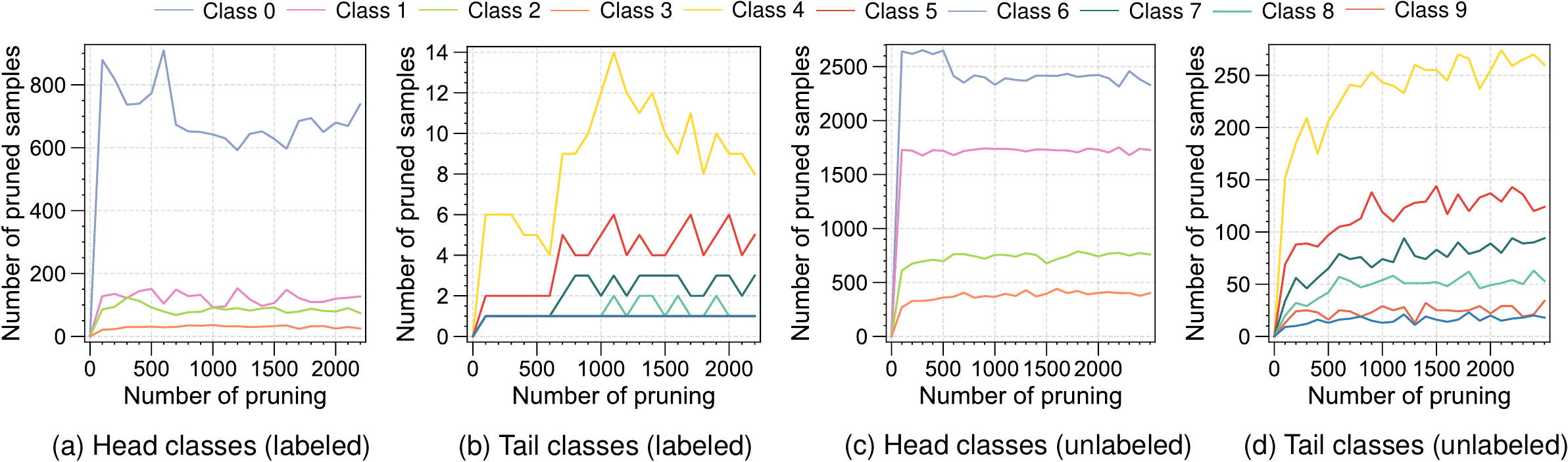}
    \caption{Number of pruned samples for each class across training process on CIFAR-10-LT. (a) and (b) show the evolution for head and tail classes in the labeled set, and (c) and (d) show the corresponding results for the unlabeled set. Each curve indicates how many samples of a given class have been removed up to each pruning step, recorded every 100 iterations.}
    \vspace{-0.4cm}
    \label{fig:class_dstribution_change}
\end{figure}

Figure~\ref{fig:class_dstribution_change} reports the number of pruned samples per class over the course of training. The results from both the labeled and unlabeled subsets exhibit a consistent pattern. Head classes experience a rapid increase in pruned samples at the beginning of training and maintain high pruning counts throughout the process, which reflects the large amount of redundant information contained in these majority classes. In contrast, tail classes show much slower growth curves with considerably lower pruning volumes, indicating that \ours{} preserves most of the scarce minority samples and avoids aggravating the long-tailed imbalance. The same trend appears in the unlabeled subset, where head classes accumulate substantially more pruned samples due to the prevalence of high confidence but less informative pseudo-labeled instances. These observations confirm that \ours{} adaptively modulates pruning according to class frequency and sample utility, removing redundant head-class samples while retaining informative tail-class data.

\section{Visualization}
\label{appendix:vis}

\subsection{Details of the change of logits’s probability distribution}
In this section, we conduct some visualization experiments to demonstrate the advantages of the \ours{} in debiasing and improving classifier performance. We first analyze the change of logits's probability distribution $\mathtt{Softmax}(g_\theta(\mathcal{I}))$ for the baseline image on CIFAR-10-LT with $\gamma_l=\gamma_u=100$ for fixmatch, CDMAD, and the \ours{} as shown in Figure.~\ref{fig:logits_distribution_across_3methods}. It can be seen intuitively that in the first epoch, the classifier has bias due to the imbalance of categories in the data. This situation increases significantly with the number of network training times, as shown in the second column of the figure. However, we can see that \ours{} can effectively slow down the increase of this bias. Furthermore, after the model is fully trained for 500 epochs, it can be seen that after the 100th epoch, CDMAD starts to use the baseline image for post-hoc debiasing, which significantly reduces the representation of the model. However, by dynamically pruning the data set, \ours{} obtains a more distinct debias effect as shown in Figure.~\ref{fig:appendix_bias}.

\begin{figure}[tb!]
    \centering
    \includegraphics[width=1.0\linewidth]{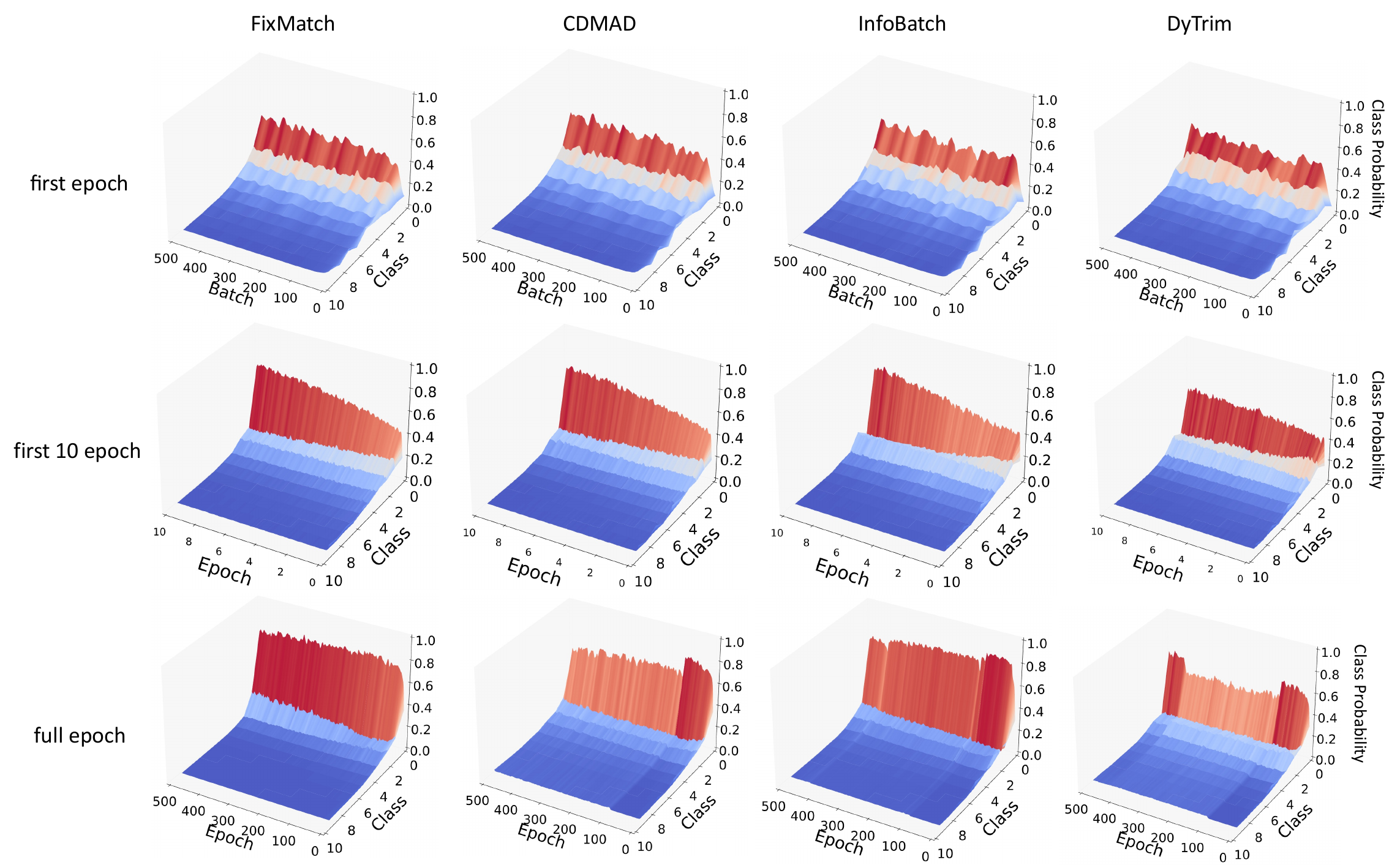}
    \caption{Comparison of the change of logits's probability distribution $\pi_{\theta} (\mathcal{I})$ for the baseline image on CIFAR-10-LT with $\gamma_l=\gamma_u=100$ across different CISSL methods.}
    \label{fig:logits_distribution_across_3methods}
    \vspace{-0.4cm}
\end{figure}

\begin{figure}[tb!]
    \centering
    \includegraphics[width=1.0\linewidth]{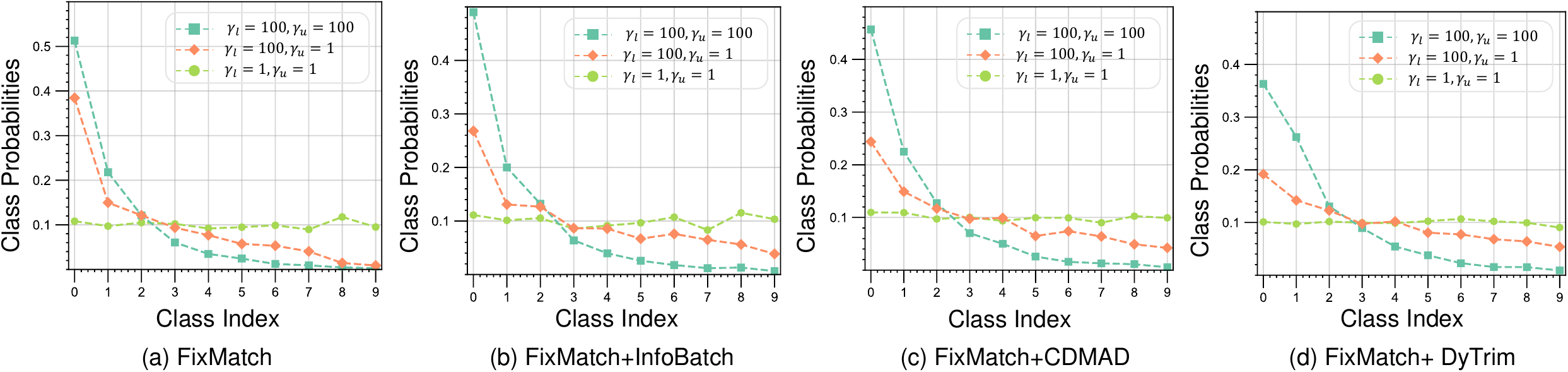}
    \caption{Class probabilities predicted on a baseline image using (a) FixMatch, (b) FixMatch+InfoBatch, (c) FixMatch+CDMAD, (d) FixMatch+\ours{}.}
    \label{fig:appendix_bias}
\end{figure}

\subsection{Details of the change of logits’s probability distribution}

Figure. \ref{fig:appendix_heatmap_100_100} and Figure.~\ref{fig:appendix_heatmap_100_1} compare the confusion matrices of the class predictions on the test set of CIFAR-10 using (a) FixMatch, (b) FixMatch+Infobatch, (c) FixMatch+CDMAD, and (d) FixMatch+\ours{} trained on CIFAR-10-LT under $\gamma_l = 100, \gamma_u = 1, 100$. FixMatch+\ours{} made more balanced predictions across classes. Furthermore, we also conducted experiments under a balanced setting ($\gamma=\gamma_1=\gamma_u=1$), as shown in Figure.~\ref{fig:appendix_heatmap_1_1}. The results show that even under a balanced data distribution, \ours can still achieve better results on the pruned dataset than methods such as CDMAD trained on the full dataset.

\begin{figure}[tb!]
    \centering
    \includegraphics[width=1.0\linewidth]{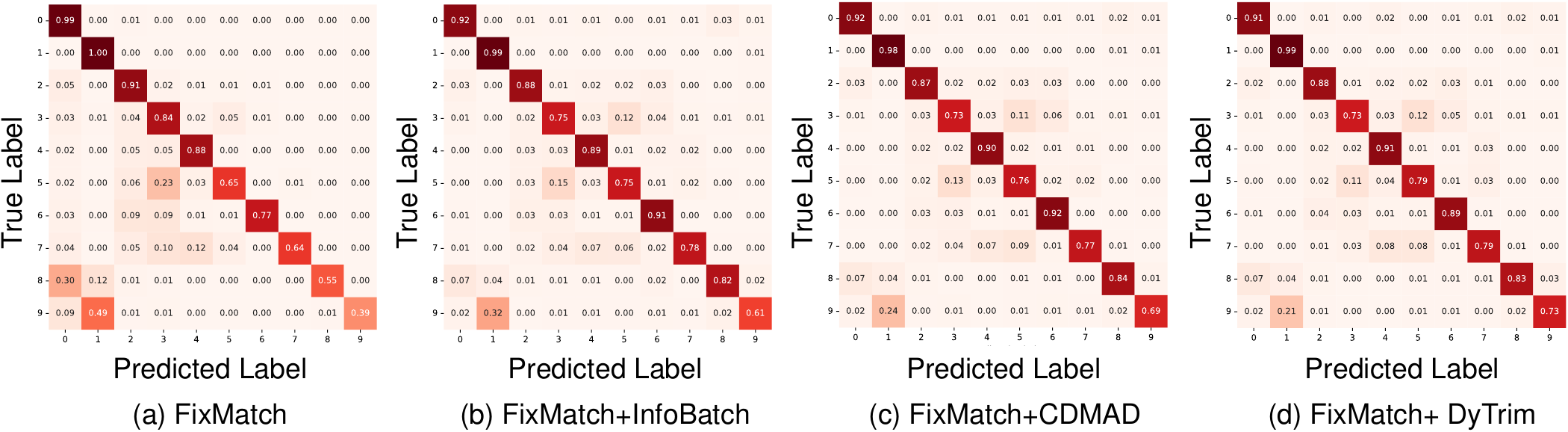}
    \caption{Confusion matrices of the class predictions on the test set of CIFAR-10 using (a) FixMatch, (b) FixMatch+InfoBatch, (c) FixMatch+CDMAD, and (d) FixMatch+\ours{} trained on CIFAR-10-LT under $\gamma_l=100$ and $\gamma_u=100$.}
    \label{fig:appendix_heatmap_100_100}
\end{figure}

\begin{figure}[tb!]
    \centering
    \includegraphics[width=1.0\linewidth]{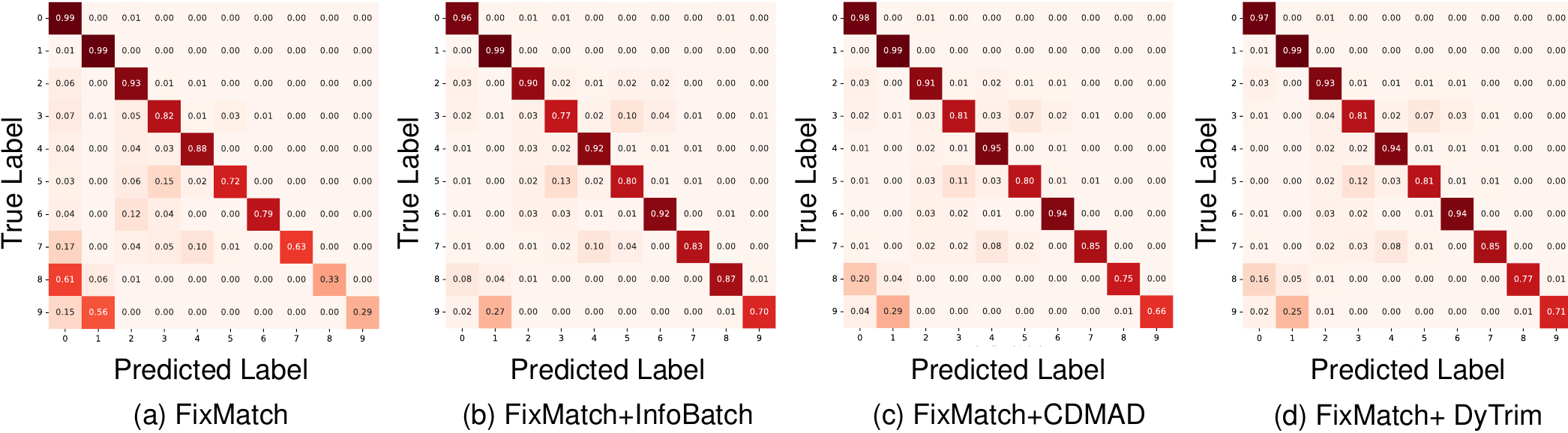}
    \caption{Confusion matrices of the class predictions on the test set of CIFAR-10 using (a) FixMatch, (b) FixMatch+InfoBatch, (c) FixMatch+CDMAD, and (d) FixMatch+\ours{} trained on CIFAR-10-LT under $\gamma_l=100$ and $\gamma_u=1$.}
    \label{fig:appendix_heatmap_100_1}
\end{figure}

\begin{figure}[tb!]
    \centering
    \includegraphics[width=1.0\linewidth]{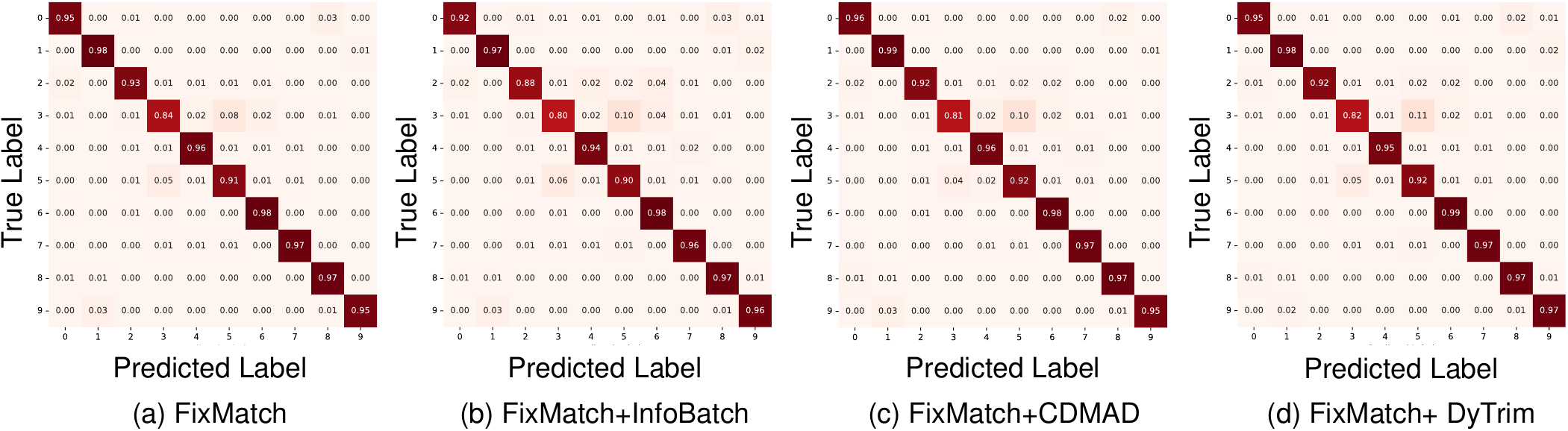}
    \caption{Confusion matrices of the class predictions on the test set of CIFAR-10 using (a) FixMatch, (b) FixMatch+InfoBatch, (c) FixMatch+CDMAD, and (d) FixMatch+\ours{} trained on CIFAR-10-LT under $\gamma_l=1$ and $\gamma_u=1$.}
    \label{fig:appendix_heatmap_1_1}
\end{figure}

Similar to confusion matrices, we also compare t-distributed stochastic neighbor embedding (t-SNE) of representations obtained for the test set of CIFAR-10 using FixMatch, FixMatch+CDMAD, FixMatch+InfoBatch, and FixMatch+\ours{} trained on CIFAR-10 with $\gamma_l = 100 \text{ and } \gamma_u = 1, 100 (\textbf{unknown } \gamma_u)$, where different colors indicate different classes in CIFAR-10 Figure.~\ref{fig:appendix_tsne_100_100}, Figure.~\ref{fig:appendix_tsne_100_1}.  We can observe that the representations obtained using FixMatch+\ours{} are separated into classes with clearer boundaries compared the those from FixMatch and CDMAD. This is probably because CDMAD appropriately refined the biased pseudo-labels and used them for training, whereas FixMatch failed to learn the representations properly because they used the biased pseudo-labels for training. These
results demonstrate that the quality of representations can be improved by using well-refined pseudo-labels for training.

\section{Limitation}

A key limitation of our method is its reliance on a task-irrelevant baseline image as a bias indicator. If this baseline image is used as a training sample, it may no longer reflect the accumulated bias, reducing the effectiveness of our debiasing mechanism. Additionally, our framework does not account for architectures with auxiliary classification heads or semi-supervised methods based on mixup-style~\citep{zhang2017mixup} interpolations, limiting \ours{}'s applicability to these models. Extending our approach to these settings is an interesting avenue for future work. 

\section{Use of LLMs}

Large language models (LLMs) were used solely to assist with minor language polishing during manuscript preparation. All scientific components of this work, including the design of experiments, data processing, analysis, and interpretation, were carried out entirely by the authors using established computational methods and human expertise, without reliance on automated reasoning or model-generated content.

\begin{figure}[tb!]
    \centering
    \includegraphics[width=1.0\linewidth]{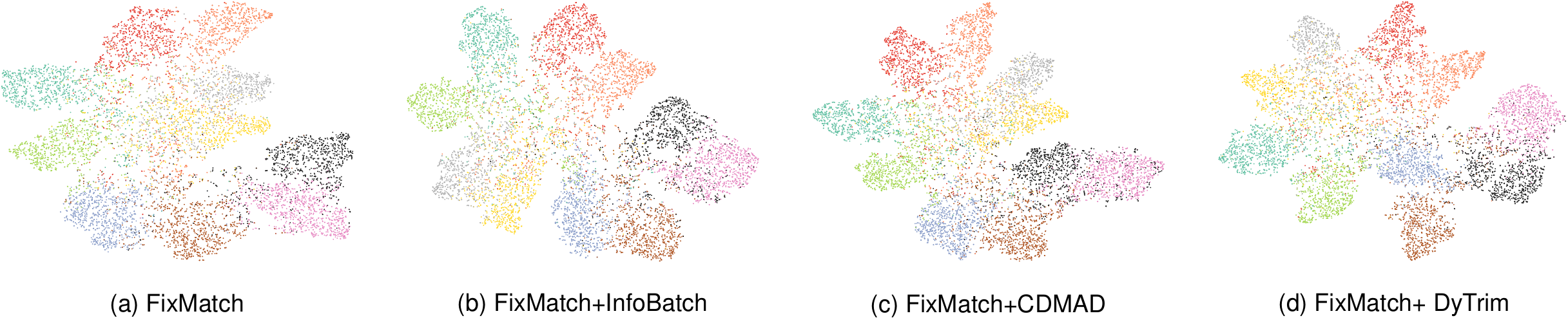}
    \caption{t-SNE of representations obtained for the test set of CIFAR-10 using (a) FixMatch, (b) FixMatch+InfoBatch, (c) FixMatch+CDMAD, and (d) FixMatch+\ours{} trained on CIFAR-10-LT under $\gamma_l=100$ and $\gamma_u=100$.}
    \vspace{-0.4cm}
    \label{fig:appendix_tsne_100_100}
\end{figure}

\begin{figure}[tb!]
    \centering
    \includegraphics[width=1.0\linewidth]{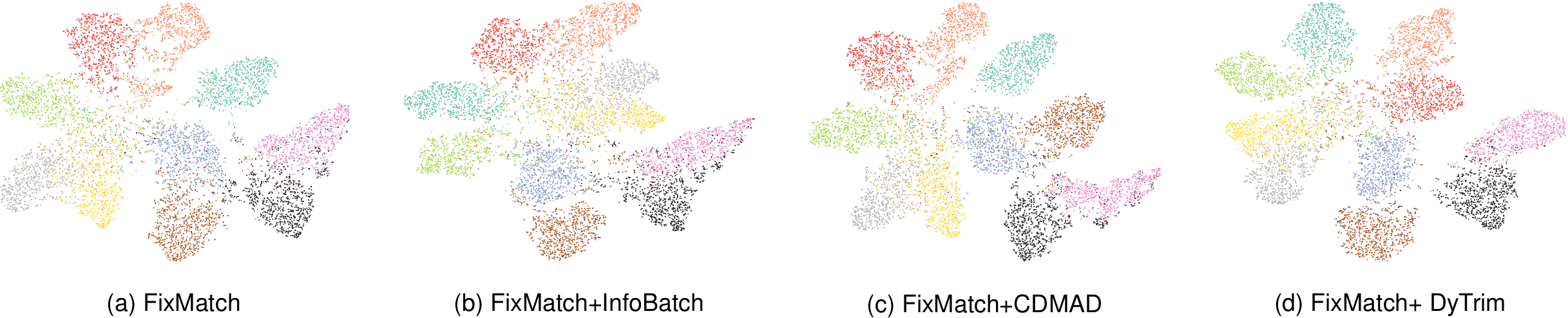}
    \caption{t-SNE of representations obtained for the test set of CIFAR-10 using (a) FixMatch, (b) FixMatch+InfoBatch, (c) FixMatch+CDMAD, and (d) FixMatch+\ours{} trained on CIFAR-10-LT under $\gamma_l=100$ and $\gamma_u=1$.}
    \vspace{-0.4cm}
    \label{fig:appendix_tsne_100_1}
\end{figure}

% \begin{figure}[tb!]
%     \centering
%     \includegraphics[width=1.0\linewidth]{figs/appendix_tsne_1_1.pdf}
%     \caption{t-SNE of representations obtained for the test set of CIFAR-10 using (a) FixMatch, (b) FixMatch+InfoBatch, (c) FixMatch+CDMAD, and (d) FixMatch+\ours{} trained on CIFAR-10-LT under $\gamma_l=1$ and $\gamma_u=1$.}
%     \label{fig:appendix_tsne_1_1}
% \end{figure}

% \begin{figure}[tb!]
%     \centering
%     \includegraphics[width=1.0\linewidth]{figs/appendix_gray.pdf}
%     \caption{Grayscale visualization of each feature channel after convolution with input images of (a) black, (b) gray, and (c) white.}
%     \label{fig:appendix_gray}
% \end{figure}
\end{document}

%% file: math_commands.tex
\usepackage{amsmath,amsfonts,bm}

\def\eqref#1{equation~\ref{#1}}
\def\1{\bm{1}}

\DeclareMathAlphabet{\mathsfit}{\encodingdefault}{\sfdefault}{m}{sl}
\SetMathAlphabet{\mathsfit}{bold}{\encodingdefault}{\sfdefault}{bx}{n}

\DeclareMathOperator*{\argmax}{arg\,max}
\DeclareMathOperator*{\argmin}{arg\,min}